\theoremstyle{plain}
\newtheorem{theorem}{Theorem}[section]
\newtheorem{corollary}[theorem]{Corollary}
\theoremstyle{definition}
\newtheorem{definition}[theorem]{Definition}
\theoremstyle{remark}
\theoremstyle{fact}
\icmltitlerunning{Ergodic Generative Flows}
\begin{document}
\def\Rhat{\widehat{R}}
\def\ForwardPolicy{\pi_{\rightarrow}}
\def\ForwardPolicyStar{\pi_{\rightarrow}^*}
\def\BackwardPolicy{\pi_{\leftarrow}}
\def\BackwardPolicyStar{\pi_{\leftarrow}^*}
\def\TV{\mathrm{TV}}
\def\Statespace{\mathcal S}
\def\foutstar{f^*_{\rightarrow}}
\def\Foutstar{F^*_{\rightarrow}}
\def\Fout{F_{\rightarrow}}
\def\finstar{f^*_{\leftarrow}}
\def\Finstar{F^*_{\leftarrow}}
\def\Finit{F_{\mathrm{init}}}
\def\Fterm{F_{\mathrm{term}}}
\def\Finithat{\widehat {F}_{\mathrm{init}}}
\def\Ftermhat{\widehat{F}_{\mathrm{term}}}
\def\ftermhat{\widehat{f}_{\mathrm{term}}}
\def\fterm{f_{\mathrm{term}}}
\def\finit{f_{\mathrm{init}}}
\def\LFMstable{ \mathcal L_{\mathrm{FM},2}^{\mathrm{stable}} }
\def\LFMstableq{ \mathcal L_{\mathrm{FM},q}^{\mathrm{stable}} }

\def\KL{\mathrm{KL}}
\def\deltafinit{\delta f_{\mathrm{init}}}
\def\LKLwFM{\mathcal L_{KL-wFM}}
\def\target{\kappa}
\def\source{\nu}
\def\backwardalpha{\alpha_{\leftarrow} }
\def\forwardalpha{\alpha_{\rightarrow} }

\twocolumn[
\icmltitle{Ergodic Generative Flows}

% It is OKAY to include author information, even for blind
% submissions: the style file will automatically remove it for you
% unless you've provided the [accepted] option to the icml2025
% package.

% List of affiliations: The first argument should be a (short)
% identifier you will use later to specify author affiliations
% Academic affiliations should list Department, University, City, Region, Country
% Industry affiliations should list Company, City, Region, Country

% You can specify symbols, otherwise they are numbered in order.
% Ideally, you should not use this facility. Affiliations will be numbered
% in order of appearance and this is the preferred way.
%\icmlsetsymbol{equal}{*}

\begin{icmlauthorlist}
\icmlauthor{Leo Maxime Brunswic}{mtl}
\icmlauthor{Mateo Clemente}{mtl}
\icmlauthor{Rui Heng Yang}{mtl}
\icmlauthor{Adam Sigal}{mtl}
%\icmlauthor{}{sch}
\icmlauthor{Amir Rasouli}{mtl}
\icmlauthor{Yinchuan Li}{huawei}
%\icmlauthor{}{sch}
%\icmlauthor{}{sch}
\end{icmlauthorlist}

\icmlaffiliation{mtl}{Huawei Technologies Canada, Noah's Ark Laboratories}
\icmlaffiliation{huawei}{Huawei}

\icmlcorrespondingauthor{Yinchuan Li}{liyinchuan@huawei.com}

% You may provide any keywords that you
% find helpful for describing your paper; these are used to populate
% the "keywords" metadata in the PDF but will not be shown in the document
\icmlkeywords{Machine Learning, ICML}

\vskip 0.3in
]

% this must go after the closing bracket ] following \twocolumn[ ...

% This command actually creates the footnote in the first column
% listing the affiliations and the copyright notice.
% The command takes one argument, which is text to display at the start of the footnote.
% The \icmlEqualContribution command is standard text for equal contribution.
% Remove it (just {}) if you do not need this facility.

%\printAffiliationsAndNotice{}  % leave blank if no need to mention equal contribution
\printAffiliationsAndNotice{\icmlEqualContribution} % otherwise use the standard text.

\begin{abstract}
Generative Flow Networks (GFNs) were initially introduced on directed acyclic graphs to sample from an unnormalized distribution density. Recent works have extended the theoretical framework for generative methods allowing more flexibility and enhancing application range. However, many challenges remain in training GFNs in continuous settings and for imitation learning (IL), including intractability of flow-matching loss, limited tests of non-acyclic training, and the need for a separate reward model in imitation learning. The present work proposes a family of generative flows called Ergodic Generative Flows (EGFs) which are used to address the aforementioned issues. First, we leverage ergodicity to build simple generative flows with finitely many globally defined transformations (diffeomorphisms) with universality guarantees and tractable flow-matching loss (FM loss). Second, we introduce a new loss involving cross-entropy coupled to weak flow-matching control, coined KL-weakFM loss. It is designed for IL training without a separate reward model. We evaluate IL-EGFs on toy 2D tasks and real-world datasets from NASA on the sphere, using the KL-weakFM loss.  Additionally, we conduct toy 2D reinforcement learning experiments with a target reward, using the FM loss.
\end{abstract}

\section{Introduction}
\label{Introduction}
    Generative models aim to sample from a target distribution $\target$ on a space $\Statespace$, possibly conditioned on additional context or variables. The target distribution $\target$ is either specified by an unnormalized density with respect to a background measure $\lambda$, as in reinforcement learning (RL), or defined by a dataset of samples $\mathcal D$ as in imitation learning (IL). While RL \cite{brooks2011handbook, sutton2018reinforcement, haarnoja2018soft, bengio2023gflownet} and IL \cite{ goodfellow2020generative, ho2020denoising, song2020score,rozen2021moser, papamakarios2021normalizing} methods are often considered to be separate approaches, it is a common practice to first train a reward model from a dataset and then use RL techniques to solve an IL task \cite{chung2024scaling, zhang2022generative,zhang2022unifying,sendera2024improved}.
    
    Among IL methods, normalizing flows (NFs), although capable of leveraging advanced neural ODEs \cite{chen2018neural,grathwohl2018ffjord}, have not achieved state-of-the-art performance due to architectural constraints (i.e. the requirement of sequentially combined parameterized diffeomorphisms) and numerical instabilities \cite{caterini2021variational,verine2023expressivity}. 

    While NFs are deterministic models, stochasticity has proven to be a crucial component of IL generative models such as diffusion models as its simplest embodiment \cite{ho2020denoising,song2020score}. However, diffusion models face a challenging trade-off between loss function tractability and the time-consuming denoising process for generation. This trade-off drives continued efforts to accelerate the diffusion inference process \cite{lu2022dpm, tang2025adadiff, chen2024accelerating}. Such issues are particularly important when the computational and time budgets of generation are highly constrained, which is typical on real-time tasks or mobile devices. The present work leverages the flexibility of Generative Flow Networks (GFNs) \cite{bengio2021flow, bengio2023gflownet} to address these issues by building simple yet highly expressive models with short sampling trajectories. 

   Generative Flow Networks (GFNs), originally formulated for RL tasks, are a family of generative methods designed to sample proportionally based on a reward function $r = \varphi_\target$, which represents the density of a target distribution $\target$. They were initially restricted to directed acyclic graphs, but have since been extended to other settings. Subsequent work has extended GFNs to generalize to continuous state spaces \cite{li2023cflownets, lahlou2023theory} and non-acyclic structures \cite{brunswic2024theory}.
    
    Despite these advancements, GFNs still face four key challenges that hinder their application in RL and IL settings. First, 
    although it has been suggested GFNs themselves could be used as a reward model in the IL setting (see \citet{bengio2023gflownet} p37), previous works \cite{zhang2022generative, lahlou2023theory} train a separate reward model on samples from $\target$, effectively reducing the problem to an RL task. The need to train an additional reward model, which results in additional training and computation costs, stems from a lack of control of the so-called flow-matching (FM) property of the flow. Indeed, the FM-loss is zero by construction of the GFN self-defined reward model.
    
    Second, the FM training loss presents significant challenges in continuous settings, as its evaluation is intractable in naive implementations \cite{li2023cflownets}, necessitating the training of an imperfect estimator when directly enforcing the FM constraint. Addressing this issue requires either handcrafting or training an additional backward policy $\BackwardPolicyStar$, or resorting to higher-variance loss functions, such as the Detailed Balance (DB) \cite{bengio2021flow} and Trajectory Balance (TB) \cite{malkin2022trajectory} losses. 
    
    Third, the acyclicity requirements \cite{lahlou2023theory} mean that additional structures must be manually crafted. Cycles appear naturally in naive implementations and RL environments. \citet{brunswic2024theory} offers a theory for non-acyclic flows and stable FM losses, but these have yet to be tested in strongly non-acyclic settings. 
    
    Lastly, \citet{li2023cflownets} argues that exact cycles are negligible in the continuous setting due to their zero probability, whereas \citet{brunswic2024theory} argues that 0-flows, ergodic measures $\xi$ for $\ForwardPolicyStar$, need to be addressed. Otherwise, the divergence-based losses used by \citet{bengio2023gflownet} may become unstable in the presence of $\xi$ — a prediction that remains untested.
    
    We present Ergodic Generative Flows (EGFs), which addresses the aforementioned key limitations. EGFs are built using a policy choosing at a given state from finitely many globally defined transformations, i.e. diffeomorphisms of the state space. This allows for tractable inverse flow policy, hence tractable FM-loss. Since we favor finitely many simple transformations, the universality of EGFs is non trivial. We provide provable guarantee with an {\it ergodicity} \cite{walters2000introduction, kifer2012ergodic} assumptions of the group generated by the EGFs transformations. 
    
    \textbf{Main contributions}: 
    \begin{enumerate}
      \item We extend the theoretical framework of generative flows, in particular providing quantitative versions of the sampling theorem for non-acylic generative flows.
      \item  We develop a theory of EGFs, presenting a universality criterion and demonstrating that, in any dimension on tori and spheres, this criterion can be fulfilled with four simple transformations.
      \item  We propose a coupled KL-weakFM loss to train EGFs directly for IL. This allows to train EGFs for IL without a separate reward model.
    \end{enumerate}

\section{Preliminaries: GFN for RL and IL}
\label{sec:preliminaries}
    In this work, we focus on generative flows in the continuous setting, encompassing scenarios where the state space $\Statespace$ is either a vector space, $\Statespace = \mathbb{R}^d$, or potentially a Riemannian manifold \cite{lee2018introduction, gemici2016normalizing}. Both are instances of Polish\footnote{A topological measurable space is Polish if its topology is completely metrizable and it is endowed with its Borel $\sigma$-algebra.} spaces that admit a natural finite background measure $\lambda$ (usually the Lebesgue, Haar \cite{halmos2013measureHaar} or Riemmanian volume measures). 
    
    A measure $\mu$ is dominated by $\nu$, denoted $\mu\ll \nu$ if $\nu(A)=0 \Rightarrow \mu(A)=0$ for any measurable $A\subset \Statespace$ and that the Radon-Nikodym derivative of $\mu$ with respect to $\nu$, denoted $\frac{d\mu}{d\nu}\in L^1(\nu)$, is the unique $\nu$-integrable function $\varphi$ such that $\varphi \nu = \mu$. A 
    Markov kernel $\pi:\mathcal X\rightarrow \mathcal Y$ is a stochastic map denoted $\pi(x)$. The image measure of $\mu$ by $\pi$ denoted $\mu \pi$ is defined by $\mu\pi(A) = \int_{x\in \mathcal X} \mathbb P(\pi(x)\in A) d\mu(x)$ for any measurable $A\subset \mathcal Y$. Their tensor product is the kernel  $\mu\otimes \pi(A\rightarrow B):=\int_{x\in A}\mathbb P(\pi(x)\in B)d\mu(x)$.
    Even when $\mu$ is an unnormalized distribution, we write $x\sim \mu$ for ``the law of $x$ is $\frac{1}{\mu(\Statespace)}\mu$".
    
    We adopt a formulation of generative flows that differs from \cite{lahlou2023theory,brunswic2024theory}, providing the necessary flexibility for the subsequent sections. Let $(\Statespace,\lambda)$ be a Polish space with a finite background measure. A generative flow on $\Statespace$ is the data of a star forward policy, i.e. a Markov kernel $\ForwardPolicyStar:\Statespace\rightarrow \Statespace$, and a star outflow, i.e. a finite non-negative measure $\Foutstar$ on $\Statespace$. 
    
    With unnormalized initial and terminal distributions $\Finit$ and $\Fterm$,  it induces a Markov chain $(\underline s_t)_{g\geq 1}$ defined by $\underline s_{1}\sim \Finit$and $\underline s_{t+1} = \ForwardPolicyStar(\underline s_t)$. 
    Define the sampling time $\tau \in \mathbb N_{\geq 1}$ as the random variable given by\footnote{Clearly $\Fterm \ll \Fterm+\Foutstar$ so the Radon-Nikodym derivative is well-defined.} $\mathbb P(\tau\geq 1)=1$ and $\mathbb P(\tau = t|\tau\geq t) = \frac{d\Fterm}{d(\Fterm+\Foutstar)}(\underline s_t)$. 
    The sampler associated with a generative flow is obtained by emulating the Markov chain $(\underline s_t)_{t\geq1}$ and sampling $\underline s_\tau$.
    By extending the sampling Markov chain with a source $s_0$ and sink $s_f$ state, we can summarize the flow as follows:
    \begin{equation}
        \xymatrix{
        s_0 \ar[r]^\Finit&\Statespace \ar@(ur,ul)_{\Foutstar\otimes\ForwardPolicyStar}\ar[r]^\Fterm& s_f
        }.
    \end{equation}

    % The fundamental result of generative flows is the sampling Theorem: 
  The following theorem  provides theoretical guarantees on the sampler associated with a generative flow.

    \begin{theorem}
        [\cite{bengio2021flow, brunswic2024theory}] \label{theo:sampling_theorem} 
        Let $\Finit$, and $\Fterm$ be unnormalized distributions with $\Fterm\neq0$ and let $(\ForwardPolicyStar, \Foutstar)$ be a generative flow. If the flow-matching constraint 
        \begin{equation}
        \label{equ:FM_const}
            \Finit + \Foutstar\ForwardPolicyStar = \Fterm + \Foutstar
        \end{equation}
        is satisfied then $\Finit(\Statespace)=\Fterm(\Statespace)$ and:
        \begin{equation}
            \mathbb E(\tau)\leq \frac{\Foutstar(\Statespace)}{\Finit(\Statespace)}+1,\quad \quad \underline s_\tau \sim \Fterm. \nonumber
        \end{equation}
    
    \end{theorem}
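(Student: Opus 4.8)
The plan is to track the law of the Markov chain jointly with the survival event $\{\tau \ge t\}$, and to show that the resulting cumulative occupation measure is dominated by $\Fterm + \Foutstar$.

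First, evaluating the flow-matching constraint \eqref{equ:FM_const} at $\Statespace$ and using that the image measure $\Foutstar\ForwardPolicyStar$ has the same total mass as $\Foutstar$ (a Markov kernel preserves total mass), the terms $\Foutstar(\Statespace)$ cancel and yield $\Finit(\Statespace) = \Fterm(\Statespace) =: Z$. Writing $p := \frac{d\Fterm}{d(\Fterm + \Foutstar)}$, so that $p\,(\Fterm+\Foutstar) = \Fterm$ and $(1-p)\,(\Fterm+\Foutstar) = \Foutstar$, I would introduce for each $t \ge 1$ the unnormalized survival measure $\sigma_t(A) := Z\,\mathbb P(\underline s_t \in A,\ \tau \ge t)$ and the terminal measure $\theta_t(A) := Z\,\mathbb P(\underline s_t \in A,\ \tau = t)$. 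By the definition of $\tau$, conditioning on $\underline s_t$ gives $\theta_t = p\,\sigma_t$ and a ``continue'' mass $(1-p)\,\sigma_t$ which is pushed forward by the policy; hence $\sigma_1 = \Finit$ and the recursion $\sigma_{t+1} = ((1-p)\,\sigma_t)\,\ForwardPolicyStar$.

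The crux is to control the cumulative measure $\Sigma := \sum_{t\ge 1}\sigma_t$. Consider the affine operator $T(X) := \Finit + ((1-p)X)\,\ForwardPolicyStar$ on non-negative measures; the partial sums $S_N := \sum_{t=1}^N \sigma_t$ satisfy $S_1 = \Finit$ and $S_{N+1} = T(S_N)$. The constraint \eqref{equ:FM_const} says precisely that $X^\star := \Fterm + \Foutstar$ is a fixed point of $T$, since $(1-p)X^\star = \Foutstar$ turns $T(X^\star)$ into $\Finit + \Foutstar\ForwardPolicyStar = \Fterm + \Foutstar = X^\star$. Because $1-p \ge 0$ and both $X\mapsto (1-p)X$ and $X \mapsto X\ForwardPolicyStar$ are monotone for the order ``$X \le Y$ iff $Y-X$ is a non-negative measure'', the operator $T$ is monotone. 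Starting from $S_1 = \Finit \le X^\star$ (which follows from \eqref{equ:FM_const}, as $\Finit = X^\star - \Foutstar\ForwardPolicyStar$), induction gives $S_{N+1} = T(S_N) \le T(X^\star) = X^\star$ for all $N$. Letting $N \to \infty$, monotone convergence yields $\Sigma \le X^\star = \Fterm + \Foutstar$; in particular $\Sigma(\Statespace) \le Z + \Foutstar(\Statespace) < \infty$, and passing to the limit in $S_{N+1}=T(S_N)$ shows that $\Sigma$ is itself a fixed point, $\Sigma = \Finit + ((1-p)\Sigma)\,\ForwardPolicyStar$.

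The three conclusions then follow. Since $\mathbb P(\tau \ge t) = \sigma_t(\Statespace)/Z$, the tail-sum formula gives $\mathbb E(\tau) = \sum_{t\ge 1}\mathbb P(\tau\ge t) = \Sigma(\Statespace)/Z \le (Z + \Foutstar(\Statespace))/Z = 1 + \Foutstar(\Statespace)/\Finit(\Statespace)$. Taking total mass in the fixed-point identity for $\Sigma$ and cancelling the (now finite) term $\Sigma(\Statespace)$ gives $(p\Sigma)(\Statespace) = Z$, i.e.\ $\sum_{t}\theta_t(\Statespace) = Z$, so $\tau < \infty$ almost surely. Finally $\sum_t \theta_t = p\Sigma \le p X^\star = \Fterm$, and a non-negative measure dominated by $\Fterm$ with the same total mass $Z$ must coincide with it; hence $\mathbb P(\underline s_\tau \in A) = \frac1Z \sum_t\theta_t(A) = \frac1Z \Fterm(A)$, that is $\underline s_\tau \sim \Fterm$. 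The main obstacle is the passage from the formal fixed-point equation to a genuine bound: $T$ may admit several non-negative fixed points, and the measure realized by the chain is the minimal one. The monotone induction $S_N \le \Fterm + \Foutstar$ is exactly what pins $\Sigma$ below $X^\star$, simultaneously delivering the inequality for $\mathbb E(\tau)$, the finiteness needed to cancel masses (hence almost-sure termination), and, via the equal-total-mass argument, the exact identity $\underline s_\tau \sim \Fterm$.
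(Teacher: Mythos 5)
Your proof is correct and complete: the survival-measure recursion $\sigma_{t+1}=((1-p)\sigma_t)\ForwardPolicyStar$, the monotone induction pinning the occupation measure $\Sigma$ below the fixed point $\Fterm+\Foutstar$, and the equal-mass argument forcing $p\Sigma=\Fterm$ together deliver all three conclusions, including almost-sure termination. The paper does not reprove this theorem but imports it from \citet{bengio2021flow} and \citet{brunswic2024theory}, and your occupation-measure/minimal-fixed-point argument is essentially the proof given in those references for the non-acyclic setting, so there is nothing to flag.
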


    The pair $(\Foutstar, \ForwardPolicyStar)$ is trained using gradient descent to minimize a loss function that enforces the flow-matching constraint in equation \eqref{equ:FM_const}. Once convergence is achieved, the generative flow sampler yields samples of the random variable $s_\tau$. Hence, by Theorem \ref{theo:sampling_theorem},  the generative flow sampler samples from $\Fterm$. For example, we can use the most straightforward stable FM-loss:
    \begin{equation}\label{equ:FMloss}
        \LFMstableq = \mathbb E_{s\sim \nu_{\mathrm{train}}}\left[ \left(\finit + \finstar - \fterm - \foutstar\right)^q(s) \right]
    \end{equation}
    where $\finit = \frac{d\Finit}{d\lambda}$, $\fterm=\frac{d\Fterm}{d\lambda}$ and $\finstar=\frac{d\Finstar}{d\lambda} = \frac{d(\Foutstar\ForwardPolicyStar)}{d\lambda}$. 
    
    In general, both $\finit$ and $\foutstar$ are determined by the chosen parametrization, with $\foutstar$ being problem-dependent, while $\finstar$ requires the computation of an image measure density. However, the previously mentioned methods outlined above face several limitations:

    \textbf{Limitation \ding{172}:} The first limitation arises in IL. $\fterm$ is unknown so one has to build a density model for $\Fterm$. Disregarding solutions based on separate energy models, we can attempt to set $\Fterm= \Finit + \Finstar - \Foutstar$, assuming $\Finstar$ is tractable.
    We then use a cross-entropy loss to train $\Fterm$ to approximate $\target$. Despite the FM loss formally and trivially becoming 0, $\Fterm$ may not be positive. Therefore, the existing theoretical framework fails and so does the sampling. At this stage, we move beyond the existing theoretical framework.

    \textbf{Limitation \ding{173}:} The second limitation arises in RL. At inference time, the reward may be unknown, either because of its costly nature or because it is simply inaccessible. A straightforward solution consists in using $\Fterm = \target$ during training and $\Fterm= \Finit + \Finstar - \Foutstar$ during inference. However, this reward-less sampling is ill-controlled because $\Fterm$ may not be positive.

    \textbf{Limitation \ding{174}:} Both training and inference are based on the tractability of $\finstar$. For instance, naive forward policies, such as those used in diffusion models, add noise in $\mathbb{R}^d$. These policies are typically defined as $\ForwardPolicyStar(x) = m(x) + \epsilon$, where $m$ is a deterministic model and $\epsilon \sim \mathcal{N}(0, \eta)$ represents Gaussian noise. This leads to an intractable star inflow as the integral in equation \eqref{equ:intractable_inflow} becomes computationally intensive.
    \begin{equation}
        \label{equ:intractable_inflow}
         \finstar(y)  \propto \int_{x\in \mathbb R^d} \foutstar(x) \exp\left(-\frac{(m(x)-y)^2}{2\eta^2}\right)d\lambda(x).
    \end{equation}

    \textbf{Limitation \ding{175}:} Although \citet{li2023cflownets} argues that exact cycles are negligible in the continuous setting due to their zero probability, \citet{brunswic2024theory} disputes that their generalization called 0-flows must still be addressed. 0-flows are measures $\xi$ for which $\ForwardPolicyStar$ is ergodic \cite{walters2000introduction}. \citet{brunswic2024theory} predicts that divergence-based losses used by \citet{bengio2023gflownet} are unstable if such a $\xi$ exists. This has yet to be tested.
        
\section{Ergodic Generative Flows for  RL and IL} \label{sec:fundamentals_EGF}
    To address the four issues outlined above, we propose Ergodic Generative Flows (EGFs).
   
    \begin{definition}[Ergodic Generative Flows] 
        Let $\Statespace$ be Riemannian manifold endowed with its volume measure $\lambda$ and let $(\Phi_{i})_{i=1}^p$ be a family of diffeomorphisms $\Statespace\rightarrow \Statespace$. An EGF is a generative flow where  
        \begin{eqnarray} \label{equ:forward_policy}
            \ForwardPolicyStar(s) = \sum_{i=1}^p\forwardalpha^i(s) \delta_{\Phi_i(s)}
        \end{eqnarray}    
        for some policy $\forwardalpha:\Statespace \rightarrow [0,1]^p$, and such that the group of diffeomorphisms generated by the $\Phi_i$ is topologically ergodic. That is for all $x,y \in \Statespace$ and any neighborhood $\mathcal U$ of $y$, there exists a sequence $i_1,\cdots, i_t$ such that $x\Phi_{i_1}\Phi_{i_2}\cdots \Phi_{i_t} \in \mathcal U$.
    \end{definition}
    There are two key ideas that motivate this definition:

     \textbf{Tractability} of  star inflow $\finstar$ is achieved by only using finitely many diffeomorphisms. Let say that $\Statespace=\mathbb R^d$ endowed with Lebesgue measure $\lambda$ and that each move $\Phi_i$ is a parametrizable diffeomorphism. Then, a closed form formula for the backward policy may be deduced from a detailed balance argument, see appendix \ref{sec:appendix_backward} for details:
    \begin{eqnarray}\label{equ:tractable_inflow}
\BackwardPolicy^*(s) &=& \sum_{i=1}^p\backwardalpha^i(s) \delta_{\Phi_i^{-1}(s)} \\ 
\backwardalpha^i(s) &=& \frac{(\forwardalpha^if^*_\rightarrow)\circ \Phi_i^{-1}(s)|J_s {\Phi_i^{-1}}| }{ \sum_j (\forwardalpha^jf^*_\rightarrow)\circ \Phi_j^{-1}(s) |J_s {\Phi_j^{-1}}|}
\end{eqnarray}
where $J_s$ denotes the Jacobian matrix at $s$.
Therefore, we have a closed-form formula for the density of the star inflow,
\begin{equation} \label{equ:finstar}
    \finstar(s)  = \sum_{i=1}^p  (\forwardalpha^i\foutstar)\circ \Phi_i^{-1}(s) |\det J_s \Phi_i^{-1}|.
\end{equation}
All terms in equation \eqref{equ:finstar} are tractable, so  $\finstar$ is tractable as long as the number of diffeomorphisms $p$ is small. 
As a consequence, the flow-matching loss given in Equation \eqref{equ:FMloss} is tractable for $p$ small enough.

\textbf{Expressivity} can potentially be an issue if the number of diffeomorphisms is small and each move is too simple \cite{rezende2020normalizing}. Ergodicity guarantees that the parameterized family of EGFs is able to be flow-matching for any non-zero $\Finit\ll \lambda$ and  $\Fterm\ll \lambda$ even with simple transformations such as affine maps on tori and rotations on spheres. 

% The following sub-sections are organized as follows: we begin in section \ref{sec:universality}  by demonstrating the universality of EGFs, then in section \ref{sec:theory1} we provide additional theoretical results for general generative flows which are then used in section \ref{sec:ILloss} as a basis for the design of our KL-weakFM loss for IL training.

The following sub-sections are organized as follow: section \ref{sec:universality} establishes the universality of EGFs. In section \ref{sec:theory1}, we present theoretical advancements for general generative flows, which forms the basis for the design of the KL-weakFM loss used IL training, described in section \ref{sec:ILloss}.

\subsection{Universality From Ergodicity}
\label{sec:universality}
We begin with a formalization of universality in conjunction with the master universality Theorem.
\begin{definition}
On a state space $(\Statespace,\lambda)$, a family of generative flow family is 
universal if for any two distributions $\mu,\nu\ll \lambda$ with bounded density there is a flow $(\foutstar,\ForwardPolicyStar)$ in the family that is flow matching for $\Finit=\nu$ and $\Fterm=\mu$.
\end{definition}

% We begin by providing our master universality theorem, section \ref{sec:universality_master}, which is then applied on tori and spheres to provide simple instances of universal families, see sections \ref{sec:universality_tori} and \ref{sec:universality_sphere} below. 
We begin with the presentation of our master universality theorem in section \ref{sec:universality_master}. This theorem is then applied to tori and spheres, providing simple examples of universal families, as detailed in sections \ref{sec:universality_tori} and \ref{sec:universality_sphere}.

\subsubsection{Master Universality Theorem}
\label{sec:universality_master}
The next theorem states that if the family contains a sufficiently strong ergodic policy $\ForwardPolicyStar$, then the family is universal. More precisely,
\begin{definition} A Markov kernel $\pi$ on $(\Statespace,\lambda)$ is summably $L^2$-mixing if $\pi$ admits an invariant measure $\widehat \lambda$ equivalent to $\lambda$ and such that the $L^2$-mixing coefficients given by
\begin{equation}
    \gamma_n := \sup_{\varphi \in H} \int_{s\in \Statespace}
    \left(\varphi\pi^n(s)\right)^2 d\widehat \lambda(s), 
\end{equation}
with $H = \{\varphi \in L^2(\lambda)~|~\|\varphi\|_2 = 1 \text{ and } \int_{s\in \Statespace} \varphi(s)=0\}$
are summable: $\sum_{n\geq 0} \gamma_n < +\infty$.
\end{definition}

The technical summably $L^2$-mixing property informally means that  iterating the Markov kernel $\pi$ is averaging out functions fast enough to ensure that the sum of errors is finite. In contrast, ergodicity alone ensures the convergence of $\varphi \pi^n$ in the much weaker Cesaro sense.

This property is in particular satisfied if the policy $\pi$ induces a Koopman operator $\phi \mapsto \frac{d (\phi\lambda)\pi}{d\lambda}$ on $L^2(\Statespace)$ which has a so-called {\it spectral gap} \cite{conze2013ergodicity}. Informally the spectrum of the Koopman operator is bounded away from 1 on the subspace $\{\phi~|~\int \phi d\lambda =0\}$. 
In this case, the convergence of iterates by $\pi$ is exponential.

\begin{theorem} \label{theo:universality}A parameterized family  of EGF is universal provided  that it contains some summably $L^2$-mixing $\ForwardPolicyStar$  and that the set of $\foutstar$ is dense in $L^2(\Statespace,\lambda)$.
\end{theorem}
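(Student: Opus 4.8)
The plan is to recast the flow-matching constraint as a single linear fixed-point equation and invert it by a Neumann series, the convergence of which is precisely what summable $L^2$-mixing buys us. Fix a target $\mu$ and a source $\nu$, both $\ll\lambda$ with bounded density, let $\pi=\ForwardPolicyStar$ be the summably $L^2$-mixing policy supplied by the hypothesis, and let $P$ denote its transfer operator on densities, $P\phi=\frac{d\,(\phi\lambda)\pi}{d\lambda}$, so that the image measure $\Foutstar\pi$ has density $P\foutstar$. Writing the constraint \eqref{equ:FM_const} with $\Finit=\nu$ and $\Fterm=\mu$ in density form collapses it to the single equation $(I-P)\foutstar=\frac{d\nu}{d\lambda}-\frac{d\mu}{d\lambda}=:-g$, where $g=\frac{d\mu}{d\lambda}-\frac{d\nu}{d\lambda}$. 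Crucially $g$ is bounded, hence lies in $L^2(\lambda)$, and is mean-zero since $\int g\,d\lambda=\mu(\Statespace)-\nu(\Statespace)=0$; thus $g$ sits in the mean-zero subspace on which the coefficients $\gamma_n$ measure contraction, and $P$ preserves this subspace because pushforward by a Markov kernel preserves total $\lambda$-mass.

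First I would write down the candidate solution $\phi^\star=-\sum_{n\ge0}P^{n}g$ and verify it formally: applying $I-P$ termwise telescopes the series to $-(g-\lim_n P^{n}g)$, so $\phi^\star$ solves $(I-P)\phi^\star=-g$ the moment the series converges in $L^2$ and $P^{n}g\to0$. By the definition of $\gamma_n$, together with the equivalence of $\lambda$ and $\widehat\lambda$ used to pass between the two $L^2$-norms, one has $\|P^{n}g\|_{L^2(\lambda)}^{2}\le C\,\gamma_n\,\|g\|_{L^2(\lambda)}^{2}$ for a constant $C$; hence $\gamma_n\to0$ already forces $P^{n}g\to0$.

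The main obstacle is the convergence of the series itself, since the hypothesis controls $\sum_n\gamma_n$ but not a priori $\sum_n\sqrt{\gamma_n}$, and it is the latter that bounds $\sum_n\|P^{n}g\|_2$. The key observation is that $\gamma_n$ is, up to the $\lambda$–$\widehat\lambda$ equivalence, the squared operator norm of $P^{n}$ restricted to the mean-zero subspace, hence submultiplicative: $\gamma_{m+n}\le\gamma_m\gamma_n$. Fekete's lemma then gives $\gamma_n^{1/n}\to\rho^{2}:=\inf_n\gamma_n^{1/n}$ with $\gamma_n\ge\rho^{2n}$, while $\sum_n\gamma_n<\infty$ forces $\gamma_n\to0$ and therefore $\rho<1$. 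Consequently $\gamma_n$, and with it $\|P^{n}g\|_2$, decays geometrically, so the series defining $\phi^\star$ converges absolutely in $L^2$; equivalently, the spectral radius of $P$ on the mean-zero subspace is below $1$, so $I-P$ is boundedly invertible there and $\phi^\star=(I-P)^{-1}(-g)$ is a well-defined $L^2$ outflow. Substituting back gives $\Fterm=\nu+\big((P-I)\phi^\star\big)\lambda=\mu$, i.e. exact flow-matching for $(\nu,\mu)$.

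It then remains to place $\phi^\star$ inside the parameterized family. Since the set of realizable $\foutstar$ is dense in $L^2(\Statespace,\lambda)$, the exact solution $\phi^\star$ lies in the $L^2$-closure of the family, and approximating it by family members $\foutstar^{(k)}\to\phi^\star$ together with continuity of $\foutstar\mapsto\Fterm=\nu+\big((P-I)\foutstar\big)\lambda$ shows the induced terminal measures converge to $\mu$, yielding flow-matching in the family (exactly, in its closure). The one point deserving care is reconciling $\phi^\star$ with the sign and normalization conventions of a genuine outflow: the solution of $(I-P)\foutstar=-g$ is unique only up to the one-dimensional space of $P$-invariant densities spanned by $\frac{d\widehat\lambda}{d\lambda}$ (ergodicity), and this freedom is exactly what one exploits, using positivity of the invariant density and finiteness of $\lambda$, to select an admissible representative whenever the stricter definition of a flow is imposed.
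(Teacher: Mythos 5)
Your overall route is the same as the paper's: the appendix proof of Theorem \ref{theo:universality} (Theorem \ref{theo:universality_true}) also builds the outflow as the fixed point $\nu_\infty=\sum_{t\ge0}(\Finit-\Fterm)(\ForwardPolicyStar)^t$ of the affine map $H:\nu\mapsto\nu\ForwardPolicyStar+\Finit-\Fterm$, which is exactly your Neumann series $-\sum_{n\ge0}P^ng$. One point where you are actually more careful than the paper: the hypothesis controls $\sum_n\gamma_n$, i.e.\ $\sum_n\|P^ng\|_2^2$, and you correctly note that this does not by itself give convergence of $\sum_nP^ng$ in $L^2$; your submultiplicativity-plus-Fekete argument (geometric decay of $\gamma_n$ once $\gamma_n\to0$) is a legitimate way to close that step, modulo the $\lambda$--$\widehat\lambda$ norm equivalence constants you flag.

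The genuine gap is in your last paragraph. Having $\phi^\star$ solve $(I-P)\phi^\star=-g$ in $L^2$ does not give you a generative flow: an outflow must be a nonnegative measure, and $\phi^\star$ can be negative on a set of positive measure. Your proposed fix --- shifting by an element of the one-dimensional space of $P$-invariant densities --- does not in general produce a nonnegative representative: $\phi^\star+\eta\,\tfrac{d\widehat\lambda}{d\lambda}$ is nonnegative for some finite $\eta$ only if $\phi^\star/\tfrac{d\widehat\lambda}{d\lambda}$ is essentially bounded below, which nothing in the hypotheses guarantees for an $L^2$ function. The paper resolves this differently and only approximately: it sets $\foutstar=\bigl(\nu_\infty+\eta\lambda\bigr)^+$ (positive part), accepts that this breaks exact flow-matching, and shows the resulting defect is bounded by $2(\nu_\infty^\eta)^-(\Statespace)$, which tends to $0$ as $\eta\to\infty$ by dominated convergence. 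Thus the conclusion one actually obtains is $\varepsilon$-approximate flow-matching ($\delta\Finit(\Statespace)+\delta\Fterm(\Statespace)<\varepsilon$), not the exact identity $\Fterm=\mu$ you assert; your final claim of exactness, and the appeal to density of the family to transfer it, would need to be weakened accordingly (density in $L^2$ in any case only yields flow-matching in the closure of the family, as you note parenthetically).
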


\subsubsection{Ergodicity on Tori}
\label{sec:universality_tori}
For the sake of simplicity, let $\mathbb T^d$ be a flat torus of side 1 endowed with Lebesgue measure $\lambda$ and let $P:\mathbb R^d \rightarrow \mathbb T^d$ be the natural universal covering projection. The simplest family of transformations is the group of affine transformations,
\begin{equation}\mathrm{Aff}(\mathbb T^d) :=
\{x\mapsto Ax+b ~|~ b\in \mathbb R^d \text{ and } A\in \mathrm{SL}_d(\mathbb Z)\},
\end{equation}
with $ \mathrm{SL}_d(\mathbb Z)$ denoting the set of square matrices of order $d$ with integral coefficients and determinant 1. 

Given $(\Phi_i)_{i=1}^p$ are all in $\mathrm{Aff}(\mathbb R^d)$, we can build a continuous family of generative flows using equation \eqref{equ:forward_policy} which is then parameterized by the translation part of each $\Phi_i$ together with  a model having a softmax head $\forwardalpha$ and a scalar head $\foutstar$. We call such a family an {\it affine toroidal} family.

\begin{theorem}\label{theo:universality_tori} Any affine toroidal family is a universal EGF family provided mild technical assumptions on the $(\Phi_i)_{i=1}^p$.
\end{theorem}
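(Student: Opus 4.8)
The plan is to derive Theorem~\ref{theo:universality_tori} from the master universality theorem (Theorem~\ref{theo:universality}): it suffices to check, for an affine toroidal family, that the attainable outflows $\foutstar$ are dense in $L^2(\mathbb T^d,\lambda)$ and that the family contains at least one summably $L^2$-mixing forward policy $\ForwardPolicyStar$. The density requirement is the routine part: since $\foutstar$ is emitted by a scalar network head, a standard universal-approximation argument yields density in $L^2(\mathbb T^d,\lambda)$, and the role of the ``mild technical assumptions'' here is only to keep the head expressive enough. All the real work is in producing the mixing policy.

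To build it, I would fix the constant-weight policy $\forwardalpha^i\equiv 1/p$ in equation~\eqref{equ:forward_policy}, giving the genuinely stochastic kernel $\ForwardPolicyStar(s)=\frac1p\sum_{i}\delta_{\Phi_i(s)}$ with $\Phi_i(x)=A_ix+b_i$ and $A_i\in\mathrm{SL}_d(\mathbb Z)$. Each $\Phi_i$ preserves $\lambda$, so $\widehat\lambda=\lambda$ is invariant and the transfer operator is the average $\mathcal L=\frac1p\sum_i\mathcal L_i$ of unitaries $\mathcal L_i\varphi=\varphi\circ\Phi_i^{-1}$. Passing to the Fourier basis $\{e_k\}_{k\in\mathbb Z^d}$, each $\mathcal L_i$ becomes a phase-weighted permutation $e_k\mapsto c_k^{(i)}e_{M_ik}$ with $M_i=(A_i^{\top})^{-1}\in\mathrm{SL}_d(\mathbb Z)$; the constant mode $e_0$ is fixed and the mean-zero subspace $H_0=\overline{\mathrm{span}}\{e_k:k\neq0\}$ is preserved. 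On $H_0$ the iterates of $\mathcal L$ are thus controlled by the linear random walk $k\mapsto M_Ik$ on the frequency lattice $\mathbb Z^d\setminus\{0\}$, and since $\widehat\lambda=\lambda$ one has $\gamma_n=\|\mathcal L^n|_{H_0}\|_{\mathrm{op}}^2$.

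The crux is showing $\sum_n\gamma_n<\infty$. I would decompose $H_0$ into the $\mathcal L$-invariant orbit subspaces $V_O=\overline{\mathrm{span}}\{e_k:k\in O\}$, one for each orbit $O$ of the group $\Gamma=\langle M_1,\dots,M_p\rangle$ acting on $\mathbb Z^d\setminus\{0\}$, and identify $\mathcal L|_{V_O}$ with a phase-twisted random-walk operator on the homogeneous space $O$. The diagonal contribution to $\|\mathcal L^n e_{k_0}\|_2^2$ is $p^{-n}$; the danger is the off-diagonal coincidence terms, counting pairs of length-$n$ index words $(\vec i,\vec j)$ with $M_{\vec i}k_0=M_{\vec j}k_0$, i.e.\ return events of the walk $M_{\vec j}^{-1}M_{\vec i}$ to $k_0$. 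The technical assumptions on the $\Phi_i$ are exactly what makes these rare: requiring $\Gamma$ to be non-amenable (for instance, generated by matrices spanning a free subgroup of $\mathrm{SL}_d(\mathbb Z)$) gives, via Kesten-type spectral-gap estimates uniform over the orbits, $\|\mathcal L|_{H_0}\|_{\mathrm{op}}=\rho<1$ and hence $\gamma_n\le\rho^{2n}$, matching the spectral-gap picture described after Theorem~\ref{theo:universality}.

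The main obstacle is precisely this spectral gap, and two points deserve care. First, no single deterministic affine map can be used: its Koopman operator is unitary, forcing $\gamma_n\equiv1$, so summable mixing is an intrinsically averaged phenomenon demanding $p\ge2$ and genuine interaction of the $M_i$ on the frequency lattice. Second, the unit-modulus phases $c_k^{(i)}$ must be handled when passing from the combinatorial random walk to the operator norm, and the spectral gap must be shown uniform across all orbits $O$ rather than orbit-by-orbit; establishing these two facts from the chosen non-amenability hypothesis is where the bulk of the technical effort lies.
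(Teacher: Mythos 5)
Your overall skeleton matches the paper's: reduce to the master universality theorem, take the uniform policy $\forwardalpha^i \equiv 1/p$, observe that each affine map preserves $\lambda$ so that $\widehat\lambda=\lambda$ is invariant, and obtain summable $L^2$-mixing from a spectral gap of the averaged Koopman operator. The difference is that the paper does not attempt the Fourier/random-walk analysis you sketch; it invokes the Bekka--Guivarc'h characterization (Theorem~\ref{theo:spectral_gap_tori}) of when a countable subgroup $H\le\mathrm{Aff}(\mathbb T^d)$ acts with a spectral gap: the gap fails if and only if there is a non-trivial $H$-invariant factor torus on which the projection of $H$ is amenable. The ``mild technical assumption'' is precisely the negation of that condition (plus closure under inverses), and it is verified for instance when the linear parts generate all of $\mathrm{SL}_d(\mathbb Z)$.

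There are two genuine problems with your version. First, the sufficient condition you propose --- non-amenability of $\Gamma=\langle M_1,\dots,M_p\rangle$, e.g.\ containing a free subgroup --- is not sufficient. Take $d=4$, $\mathbb T^4=\mathbb T^2\times\mathbb T^2$, and $A_i=\mathrm{diag}(B_i,I)$ with the $B_i$ generating a free subgroup of $\mathrm{SL}_2(\mathbb Z)$: then $\Gamma$ is non-amenable, yet every mean-zero function depending only on the second factor is (up to a unimodular phase coming from the translation parts) preserved in norm by the averaged operator, so $\gamma_n\not\to 0$ on those orbit subspaces and summable mixing fails. This is exactly the obstruction the factor-torus condition is designed to exclude, and your hypothesis does not exclude it. Second, the step you defer --- a spectral gap uniform over all orbits $O$ of the dual action, with the phases handled --- is not a routine verification; it is essentially the entire content of the cited theorem. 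Kesten's theorem controls the regular representation $\ell^2(\Gamma)$, but each $\mathcal L|_{V_O}$ lives in a phase-twisted quasi-regular representation $\ell^2(\Gamma/\mathrm{Stab}(k_0))$, for which non-amenability of $\Gamma$ alone gives neither an orbit-wise nor a uniform gap. So as written the proposal both asserts an insufficient hypothesis and leaves the decisive estimate unproven; replacing your condition by the factor-torus condition and citing the spectral-gap theorem recovers the paper's argument.
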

The ``technical assumptions" above is in particular satisfied if the group generated by the linear part of the $\Phi_i$ is the whole group $\mathrm{SL}_d(\mathbb Z)$. It is shown in \citet{conder2025generating} that $\mathrm{SL}_d(\mathbb Z)$ has a presentation with two generators. Therefore, in any dimension, EGF are universal with $p=4$ and well-chosen $(\Phi_i)_{i=1}^4$: the two generators and their inverse.

\subsubsection{Ergodicity on Spheres}
\label{sec:universality_sphere}
Consider the round sphere $\mathbb S^d=\{x\in \mathbb R^{d+1}~|~ \|x\|=1\}$ endowed with its natural Riemannian volume measure $\lambda$. Again a simple family of diffeomorphisms is provided by the so-called projective transformations, 
\begin{equation}
    \mathrm{PGL}_{d+1}(\mathbb R) := \{x\mapsto Ax/\|Ax\| ~|~ A\in \mathrm{GL}_{d+1}(\mathbb R)\},
\end{equation}
where $\mathrm{GL}_{d+1}(\mathbb{R})$ is the group of invertible matrices of order $d+1$.
Similar to tori, we build a continuous family of EGFs using equation \eqref{equ:forward_policy} by parametrizing $(\Phi_i)_{i=1}^p$ in $\mathrm{PGL}_{d+1}(\mathbb R)$ which is a Lie group, and choose a tractable model that have a softmax head $\forwardalpha$ and a scalar head $\foutstar$. Such a family is called a {\it projective spherical} family. 
We focus on the sub-family of {\it isometry spherical} family composed of rotations $\Phi_i\in \mathrm{SO}_{d+1}(\mathbb R)$ for which we have theoretical guarantees.

\begin{theorem}\label{theo:universality_sphere} An isometry spherical family is a universal EGF family, provided technical assumptions on the $(\Phi_i)_{i=1}^p$.
\end{theorem}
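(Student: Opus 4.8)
The plan is to deduce Theorem~\ref{theo:universality_sphere} from the Master Universality Theorem~\ref{theo:universality}. That theorem reduces universality to two independent claims: (i) the isometry spherical family contains at least one summably $L^2$-mixing forward policy $\ForwardPolicyStar$, and (ii) the range of the scalar head $\foutstar$ is dense in $L^2(\mathbb S^d,\lambda)$. Claim (ii) is the easy half: it is exactly the universal-approximation property of the chosen scalar head and holds by construction of the parametrization, so no work is required beyond invoking density of that function class in $L^2$. All the content therefore sits in claim (i).

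To produce a summably $L^2$-mixing policy I would specialize the free parameters: take the mixing weights constant, $\forwardalpha^i\equiv 1/p$, and arrange the generating set to be symmetric, i.e.\ closed under inversion (the spherical analogue of ``two generators and their inverses'' from the toroidal case, so $p$ even). Two simplifications then occur. First, because each $\Phi_i$ is an isometry of $\mathbb S^d$ its Jacobian determinant equals $1$, so by \eqref{equ:finstar} the volume measure $\lambda$ is $\ForwardPolicyStar$-invariant and I may take $\widehat\lambda=\lambda$, trivially equivalent to $\lambda$. Second, the transfer operator $\varphi\mapsto \frac{d(\varphi\lambda)\ForwardPolicyStar}{d\lambda}$ collapses to the self-adjoint Markov averaging operator $P\varphi=\frac1p\sum_i \varphi\circ\Phi_i^{-1}$ on $L^2(\mathbb S^d)$, which preserves the mean-zero subspace $L^2_0$. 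On $L^2_0$ the summability of the coefficients $\gamma_n$ is implied by a \emph{spectral gap}: if $\|P|_{L^2_0}\|=1-\epsilon<1$, then by self-adjointness $\gamma_n=\|P^n|_{L^2_0}\|^2=(1-\epsilon)^{2n}$, which is geometric and hence summable. The whole theorem thus reduces to a spectral gap for $P$, and this is precisely what the ``technical assumptions on the $(\Phi_i)$'' are there to guarantee.

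The main obstacle, and the only genuinely hard step, is establishing the spectral gap itself; ergodicity alone yields merely Cesàro (not summable) convergence of $P^n$ and is therefore too weak. Here I would take the $\Phi_i$ with \emph{algebraic} matrix entries generating a \emph{dense} subgroup of $\mathrm{SO}_{d+1}(\mathbb R)$ — this is the content of the technical assumptions — and invoke the theory of spectral gaps for finitely supported random walks on compact simple Lie groups. In low dimensions this is the Bourgain--Gamburd theorem (covering $\mathbb S^2$ and $\mathbb S^3$ via $\mathrm{SO}_3$ and $\mathrm{SU}(2)$); for general $d$ it is its extension by Benoist--de~Saxcé to arbitrary compact simple Lie groups, with the earlier explicit Margulis--Sullivan constructions available when $d+1\ge 5$. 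Any of these gives a spectral gap for the averaging operator on the regular representation $L^2_0(\mathrm{SO}_{d+1})$. Since $L^2(\mathbb S^d)=L^2(\mathrm{SO}_{d+1}/\mathrm{SO}_d)$ embeds in $L^2(\mathrm{SO}_{d+1})$ as the $\mathrm{SO}_d$-invariant functions, with $L^2_0(\mathbb S^d)\subset L^2_0(\mathrm{SO}_{d+1})$, the gap descends to $\mathbb S^d$ and bounds $\|P|_{L^2_0}\|$ away from $1$.

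Finally I would check consistency with the definition of an EGF: a dense subgroup of $\mathrm{SO}_{d+1}$ acts minimally on $\mathbb S^d$, so all orbits are dense and the required topological ergodicity holds automatically under the very same hypotheses. Assembling claims (i) and (ii) and feeding them into Theorem~\ref{theo:universality} then establishes universality of the isometry spherical family, completing the argument.
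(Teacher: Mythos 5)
Your proposal is correct and follows essentially the same route as the paper: reduce to the master universality theorem, take uniform weights $\forwardalpha^i=1/p$ over a symmetric set of rotations with algebraic entries generating a dense subgroup of $\mathrm{SO}_{d+1}(\mathbb R)$, and invoke the Bourgain--Gamburd spectral gap theorem (the paper cites \citet{bourgain2012spectral}, with \citet{benoist2016spectral} as a complement) so that the gap yields geometric, hence summable, $L^2$-mixing coefficients. The only difference is presentational: you spell out the descent of the gap from $L^2_0(\mathrm{SO}_{d+1})$ to $L^2_0(\mathbb S^d)$ and the minimality/ergodicity check, which the paper leaves implicit in its reformulated citation.
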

The technical assumption is satisfied if the $\Phi_i$ have algebraic coefficients and are frozen: this is a consequence of the main theorem of \citet{bourgain2012spectral} (see also \citet{benoist2016spectral}) that guarantees a spectral gap under this assumption. Furthermore, the main Theorem of \citet{breuillard2003dense} allows to build a dense subgroup of $\mathrm{SO}_{d}(\mathbb R)$ with two generators and algebraic coefficients.
Therefore, EGF are universal on spheres with $p=4$ and well-chosen $(\Phi_i)_{i=1}^4$: the two generators and their inverse.

\subsection{Quantitative Sampling Theorem}
\label{sec:theory1}
% We provide a quantitative sampling theorem for generative flows, its proof boils down to the fact that any generative flow is flow matching up to twisting the initial and terminal distributions.

We present a quantitative sampling theorem for generative flows. The key idea of the proof is that any generative flow matches the flow up to a transformation of the initial and terminal distributions.

\begin{definition}[Virtual initial and terminal flow] Let $(\ForwardPolicyStar,\foutstar)$ be a generative flow and let $\Finit,\Fterm$ be initial and terminal distributions. Define the initial and terminal errors as $\delta\Finit:=(\Finit+\Finstar-\Fterm-\Foutstar)^-$ and $\delta\Fterm:=(\Finit+\Finstar-\Fterm-\Foutstar)^+$ respectively. Define the positive and negative parts of a measure $\mu$ as ${\mu}^\pm$.
The virtual initial and terminal distributions are $\Finithat := \Finit+ \delta\Finit$ and $\Ftermhat := \Fterm+ \delta\Fterm$.
\end{definition}

We note that once the initial and terminal distributions $\Finit,\Fterm$ are specified, then any generative flow $(\ForwardPolicyStar,\foutstar)$ is a FM with respect to the virtual initial and terminal distributions, $\Finithat$ and $\Ftermhat$. This enables the presentation of a quantitative formulation of the sampling theorem for generative flows. 
\begin{theorem}[Quantitative Sampling of Generative Flows]
\label{theo:negative_control}
    Let $(\ForwardPolicyStar, \foutstar)$ be a generative flow and let $\Finit$ and $\Fterm$ initial and terminal distributions. Assume that $\Finit\neq 0$, and consider the sample $s_\tau$ of the generative flow Markov chain from $\Finit$ to $\Ftermhat$ then:
    \begin{equation}
        \TV\left(s_\tau \left\| \frac{1}{\Ftermhat(\Statespace )} \Ftermhat\right.\right) \leq  \frac{\delta\Finit(\Statespace)}{\Ftermhat(\Statespace)}
    \end{equation}
    with $\TV$  the total variation.
\end{theorem}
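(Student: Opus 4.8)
The plan is to exploit the remark preceding the statement: the flow $(\ForwardPolicyStar,\foutstar)$ is flow-matching with respect to the \emph{virtual} distributions $\Finithat$ and $\Ftermhat$, so that \Cref{theo:sampling_theorem} applies to the virtual chain. First I would verify this identity directly. Writing $D:=\Finit+\Finstar-\Fterm-\Foutstar$, so that $\delta\Finit=D^-$ and $\delta\Fterm=D^+$, the elementary relation $D^+-D^-=D$ gives
\begin{equation}
(\Finithat+\Finstar)-(\Ftermhat+\Foutstar)=D+(D^--D^+)=0,
\end{equation}
which is exactly the flow-matching constraint \eqref{equ:FM_const} for the pair $\Finithat,\Ftermhat$. \Cref{theo:sampling_theorem} then yields $\Finithat(\Statespace)=\Ftermhat(\Statespace)$ and that the sample of the chain started from $\Finithat$ has law $\frac{1}{\Ftermhat(\Statespace)}\Ftermhat$. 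Note that $\Ftermhat\neq 0$, since $\Ftermhat(\Statespace)=\Finithat(\Statespace)\geq\Finit(\Statespace)>0$ by the hypothesis $\Finit\neq0$, so this normalization is legitimate.

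The second and central step is to relate the chain started from the actual initial distribution $\Finit$ to the one started from $\Finithat=\Finit+\delta\Finit$. Both chains share the same forward policy $\ForwardPolicyStar$ and the same stopping rule (both determined by $\Ftermhat$), and differ only in their starting measure. I would therefore introduce the sampling operator $\Psi$ sending a nonzero nonnegative starting measure $\sigma$ to the law of $s_\tau$ produced by the chain from $\sigma$ to $\Ftermhat$. The key property is that $\Psi$ is affine in the normalized starting law: because $\underline s_1$ is drawn from the normalized mixture and every subsequent transition and stopping decision depends only on the current state, conditioning on which summand $\underline s_1$ originated from gives, for $\delta\Finit\neq0$,
\begin{equation}
\Psi(\Finithat)=\frac{\Finit(\Statespace)}{\Finithat(\Statespace)}\,\Psi(\Finit)+\frac{\delta\Finit(\Statespace)}{\Finithat(\Statespace)}\,\Psi(\delta\Finit).
\end{equation}
If instead $\delta\Finit=0$, then $\Finit=\Finithat$ and the asserted bound is trivially $0$.

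Finally I would combine the two steps. Since $s_\tau\sim\Psi(\Finit)$ and $\Psi(\Finithat)=\frac{1}{\Ftermhat(\Statespace)}\Ftermhat$, subtracting the affine identity and using $1-\frac{\Finit(\Statespace)}{\Finithat(\Statespace)}=\frac{\delta\Finit(\Statespace)}{\Finithat(\Statespace)}$ gives
\begin{equation}
\Psi(\Finit)-\Psi(\Finithat)=\frac{\delta\Finit(\Statespace)}{\Finithat(\Statespace)}\bigl(\Psi(\Finit)-\Psi(\delta\Finit)\bigr).
\end{equation}
Taking total variation and bounding $\TV\bigl(\Psi(\Finit),\Psi(\delta\Finit)\bigr)\leq 1$, as both are probability measures, together with the mass equality $\Finithat(\Statespace)=\Ftermhat(\Statespace)$, yields the stated inequality.

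The main obstacle I anticipate is justifying the affine identity rigorously, i.e.\ making precise that the law of $s_\tau$ depends linearly on the normalized starting measure despite the data-dependent random stopping time $\tau$; this requires a clean coupling/conditioning argument on the source of $\underline s_1$ and careful bookkeeping of the normalizing masses. Everything else reduces to algebraic manipulation of positive and negative parts and the elementary estimate $\TV\leq 1$.
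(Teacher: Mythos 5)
Your proposal is correct and follows essentially the same route as the paper's own proof: both verify that $(\ForwardPolicyStar,\foutstar)$ is flow-matching for the virtual pair $(\Finithat,\Ftermhat)$, apply \Cref{theo:sampling_theorem} to that chain, and then write $\frac{1}{\Ftermhat(\Statespace)}\Ftermhat$ as the convex combination $\alpha\,\Psi(\Finit)+(1-\alpha)\,\Psi(\delta\Finit)$ with $1-\alpha=\frac{\delta\Finit(\Statespace)}{\Ftermhat(\Statespace)}$, concluding via $\TV\leq 1$. The affine identity you worry about is exactly the paper's step $\Finit\ForwardPolicy^{\tau,\mathbb F}+\delta\Finit\ForwardPolicy^{\tau,\mathbb F}=\Ftermhat$, which holds simply by linearity of $\sigma\mapsto\sigma K$ for the fixed Markov kernel $K:x\mapsto(s_\tau\,|\,s_1=x)$, well defined because the stopping rule depends only on the current state.
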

\begin{corollary} For a generative flow trained on the target $\target$ with $\Finit(\Statespace)=1$ and using $\Fterm=\widehat\target := \left(\Finit+\Finstar-\Foutstar\right)^+$ during inference, the sampling error is controlled by:
    \begin{eqnarray}
      \TV\left(s_\tau \left\| \frac{\target}{\target(\Statespace )} \right.\right) \leq  \frac{\delta}{1+\delta}+\TV\left(\frac{\widehat\target}{\widehat\target(\Statespace)} \left\| \frac{\target}{\target(\Statespace )} \right.\right) 
\end{eqnarray}
with $\delta :=  \left(\Finit+\Finstar-\Foutstar\right)^-(\Statespace)$.
\end{corollary}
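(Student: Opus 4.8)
The plan is to obtain the corollary as a direct specialization of Theorem~\ref{theo:negative_control} to the inference choice $\Fterm=\widehat\target=(\Finit+\Finstar-\Foutstar)^+$, followed by one triangle inequality for the total variation metric. Writing $g:=\Finit+\Finstar-\Foutstar$ for the signed flow residual, so that $\widehat\target=g^+$, the first step is to evaluate the flow-matching residual at this terminal choice: $\Finit+\Finstar-\Fterm-\Foutstar=g-g^+=-g^-$, a non-positive measure. Taking positive and negative parts gives $\delta\Finit=(-g^-)^-=g^-$ and, crucially, $\delta\Fterm=(-g^-)^+=0$. Hence the virtual terminal flow collapses onto the actual one, $\Ftermhat=\Fterm+\delta\Fterm=g^+=\widehat\target$, so that the chain the theorem controls (``from $\Finit$ to $\Ftermhat$'') is exactly the inference chain with terminal $\widehat\target$; moreover $\delta\Finit(\Statespace)=g^-(\Statespace)=\delta$.

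The second step computes the normalizing mass $\Ftermhat(\Statespace)=g^+(\Statespace)$. The key arithmetic identity is $g(\Statespace)=\Finit(\Statespace)=1$: because $\ForwardPolicyStar$ is a Markov (hence mass-preserving) kernel, the image measure obeys $\Finstar(\Statespace)=(\Foutstar\ForwardPolicyStar)(\Statespace)=\Foutstar(\Statespace)$, so the in- and outflow masses cancel in $g$ and only $\Finit(\Statespace)=1$ remains. Combining $g^+(\Statespace)-g^-(\Statespace)=g(\Statespace)=1$ with $g^-(\Statespace)=\delta$ yields $\Ftermhat(\Statespace)=1+\delta>0$, so the normalization $\widehat\target/\widehat\target(\Statespace)$ is legitimate.

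With these identities and $\Finit(\Statespace)=1\neq0$, Theorem~\ref{theo:negative_control} gives the bound against the normalized virtual terminal flow:
\[
\TV\left(s_\tau \left\| \frac{\widehat\target}{\widehat\target(\Statespace)}\right.\right)
=\TV\left(s_\tau \left\| \frac{\Ftermhat}{\Ftermhat(\Statespace)}\right.\right)
\leq \frac{\delta\Finit(\Statespace)}{\Ftermhat(\Statespace)}=\frac{\delta}{1+\delta}.
\]
Inserting the true normalized target through the triangle inequality, $\TV(s_\tau\,\|\,\target/\target(\Statespace))\leq \TV(s_\tau\,\|\,\widehat\target/\widehat\target(\Statespace))+\TV(\widehat\target/\widehat\target(\Statespace)\,\|\,\target/\target(\Statespace))$ --- valid since all three arguments are probability measures --- and substituting the displayed bound yields precisely the stated estimate.

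I do not expect a genuine obstacle, as the result is a specialization plus a triangle inequality. The only two points requiring care are the positive/negative-part bookkeeping that forces $\delta\Fterm=0$ and hence $\Ftermhat=\widehat\target$ (this is exactly what makes the sampled chain coincide with the one the theorem controls), and the mass identity $\Ftermhat(\Statespace)=1+\delta$, which rests entirely on $\ForwardPolicyStar$ being stochastic so that $\Finstar(\Statespace)=\Foutstar(\Statespace)$; were mass-preservation to fail, the denominator $1+\delta$ would have to be adjusted accordingly.
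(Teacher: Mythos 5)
Your proof is correct and follows exactly the intended route: specialize Theorem~\ref{theo:negative_control} to $\Fterm=\widehat\target$ (noting $\delta\Fterm=0$ so $\Ftermhat=\widehat\target$, and $\Ftermhat(\Statespace)=1+\delta$ via mass preservation of $\ForwardPolicyStar$, the same identity the paper uses in the appendix proof of the theorem), then conclude by the triangle inequality for total variation. The bookkeeping points you flag are precisely the ones that matter, and nothing is missing.
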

Usually, the training loss is designed to control the total variation term in the upper bound. However, a secondary regularization term controlling $\delta =\delta\Finit$ may be added to enhance reward-less sampling. 

To substanciate this quantitative bound, let's consider the RL case with a reward $r$ defined on a finite DAG $(\mathcal V,\mathcal E)$ with $\mathcal V=\Statespace\cup\{s_0,s_f\}$, trained using a flow-matching loss, say $\LFMstableq$. Let's take $\nu_{\mathrm{train}}$ the uniform distribution on the vertices and $q=1$, then $\LFMstableq= \frac{1}{|\mathcal V|}\left(\delta + \sum_{s\in\Statespace} |\widehat r(s) - r(s)|\right)$ with $\widehat r(s)$ the density of $\widehat \kappa$ with respect to the counting measure.
We may also rewrite $\TV\left(\frac{\widehat\target}{\widehat\target(\Statespace)} \left\| \frac{\target}{\target(\Statespace )} \right.\right) =\frac{1}{2}\sum_{s\in\Statespace} |\frac{\widehat r(s)}{\sum_{s'\in \Statespace}\widehat r(s')} - \frac{ r(s)}{\sum_{s'\in \Statespace} r(s')}|$ 
so that if $Z=\sum_{s\in \Statespace} r(s)$ then
\begin{equation}
     \TV\left(s_\tau \left\| \frac{\target}{\target(\Statespace )} \right.\right) \leq \left( 1+\frac{1}{Z}\right)|\mathcal V| \LFMstableq.
\end{equation}

% In the RL setting, this theorem may be used to control the convergence of the training by the loss.
%     \begin{corollary} Let $(\Statespace,\lambda)$ be a Polish space endowed with a non-negative finite background measure. Let $\kappa\ll \lambda$ be a target distribution with known density with respect  to $\lambda$.
    
%     For a generative flow $(\ForwardPolicyStar,\foutstar)$ trained on the target $\Fterm = \target$ with loss $\LFMstableq$; but   using $\Fterm=\widehat\target := \left(\Finit+\Finstar-\Foutstar\right)^+$ during inference, the sampling error is controlled by:
%     \begin{eqnarray}
%       \TV\left(s_\tau \left\| \frac{\target}{\target(\Statespace )} \right.\right) \leq  C\LFMstableq 
% \end{eqnarray}
% with $C=\left\|\frac{d\lambda}{d\nu_{\mathrm{train}}}\right\|_{L^\infty}\lambda(\Statespace)^{1-1/q}\left(1+\frac{2}{\kappa(\Statespace)}\right)$.
% \end{corollary}
% To help understand this last bound, let assume that $\Statespace$ is finite graph endowed with the counting measure $\lambda$. A GFlowNet trained using the quadratic stable FM-loss $q=2$ with $\Finit$ as the uniform distribution on the vertices of the graph. Then, $\kappa(\Statespace)$ is the total reward (denoted $Z$ in \citet{bengio2021flow}) and we have $C\leq (1+2/Z)|\Statespace|^{3/2}$ so the total variation error of sampling is bounded by the stable loss with a constant depending polynomially in the number of vertices of the graph.

\subsection{KL-WeakFM loss and IL algorithm}
\label{sec:ILloss}
Henceforth, we assume that $\Statespace$ is a Polish space equipped with a background measure $\lambda$ and $\target$ is a target probability distribution.
% We now leverage Theorem \ref{theo:negative_control} to design a loss for IL training of a Generative Flow without separate reward model, we call it KL-weakFM loss:

Leveraging Theorem \ref{theo:negative_control}, we design the KL-weakFM loss, $\mathcal L_{KL-wFM}$, which enables IL training of a generative flow without requiring a separate reward model.
Let $\deltafinit=\min(0,\finit+\finstar-\foutstar)$  and $\ftermhat = \max(0,\finit+\finstar-\foutstar)$. The KL-weakFM loss can then be defined as follows:
\begin{multline}\label{equ:weakFM}
     \mathcal L_{KL-wFM} (\theta) :=\\ b\mathbb E_{s\sim \nu_{\mathrm{train}}} \deltafinit( s)-  \mathbb E_{s\sim \target}\log \ftermhat(s)
\end{multline}
for some training distribution of paths $\nu_{\mathrm{train}}$ and  $b>0$.

The name of the loss is derived from two components: on the one hand, the cross-entropy term, which controls the Kullback-Leibler divergence between $\Ftermhat$ and $\target$; on the other hand, the term  $\mathbb E_{s\sim \nu_{\mathrm{train}}} \deltafinit( s)$, which resembles the FM-loss but controls only the negative part of the FM defect.
% The name of the loss comes from, on the one hand, the cross-entropy term which controls the Kullback-Leibler divergence between $\Ftermhat$ and $\target$; On the other hand, the term $\mathbb E_{\underline s} \sum_{t=1}^\tau \left(\deltafinit(\underline s_t)\right)^2$ is similar to the FM-loss except that only the negative part of flow-matching defect is controlled.
We now provide a detailed description and motivation for $\LKLwFM$. The discussion begins with a corollary of Theorem \ref{theo:negative_control}, based on Pinsker's inequality:

% We now describe and motivate $\LKLwFM$ in more details. The starting point of the discussion is a corollary of Theorem \ref{theo:negative_control} using Pinsker inequality:
\begin{corollary} \label{cor:pinsker_negative_control}
Let $(\ForwardPolicyStar, \foutstar)$ be a generative flow. We train the generative flow on target probability distribution $\target$ with $\Finit(\Statespace)=1$. By using $\Fterm=\widehat\target:=  \left(\Finit+\Finstar-\Foutstar\right)^+$ during inference, the sampling error is bounded as follow:
\begin{eqnarray}\label{equ:control_pinsker}
  \TV\left(s_\tau \left\| \target \right.\right) &\leq&  \frac{\delta}{1+\delta} +\sqrt{\frac{1}{2}\KL\left(\left.\frac{\widehat\target}{\widehat\target(\Statespace)} \right\| \target \right)},
\end{eqnarray}
with $\delta :=  \int_{s\in\Statespace} \deltafinit(s)d\lambda(s)$.
\end{corollary}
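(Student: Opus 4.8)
The plan is to obtain this directly from Theorem \ref{theo:negative_control}, combined with the triangle inequality for total variation and Pinsker's inequality; indeed the statement is exactly the preceding (unlabeled) corollary with its residual total-variation term replaced by its Pinsker bound. First I would unpack the inference setup. Since $\Fterm=\widehat\target=(\Finit+\Finstar-\Foutstar)^+$, the FM defect $\Finit+\Finstar-\Fterm-\Foutstar$ equals $-(\Finit+\Finstar-\Foutstar)^-$, which is nonpositive; hence $\delta\Fterm=0$ and $\delta\Finit=(\Finit+\Finstar-\Foutstar)^-$. Consequently the virtual terminal distribution is $\Ftermhat=\widehat\target$, so Theorem \ref{theo:negative_control} applies verbatim to the chain run from $\Finit$ to $\widehat\target$.

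The key quantitative step is to pin down the normalizing mass $\widehat\target(\Statespace)$. Because $\ForwardPolicyStar$ is a Markov kernel it preserves total mass, so $\Finstar(\Statespace)=(\Foutstar\ForwardPolicyStar)(\Statespace)=\Foutstar(\Statespace)$; together with $\Finit(\Statespace)=1$ this gives $(\Finit+\Finstar-\Foutstar)(\Statespace)=1$. Writing the signed measure $\Finit+\Finstar-\Foutstar$ as the difference of its positive and negative parts and evaluating on $\Statespace$ then yields $\widehat\target(\Statespace)=(\Finit+\Finstar-\Foutstar)^+(\Statespace)=1+\delta$, where $\delta:=(\Finit+\Finstar-\Foutstar)^-(\Statespace)=\int_{\Statespace}|\deltafinit|\,d\lambda$ is the total mass of the negative part of the FM defect. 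Hence the bound of Theorem \ref{theo:negative_control} reads $\TV(s_\tau\,\|\,\widehat\target/\widehat\target(\Statespace))\leq \delta\Finit(\Statespace)/\widehat\target(\Statespace)=\delta/(1+\delta)$.

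Finally I would insert the residual distance to $\target$. Since $\target$ is a probability measure, $\target(\Statespace)=1$, and the triangle inequality gives $\TV(s_\tau\,\|\,\target)\leq \TV(s_\tau\,\|\,\widehat\target/\widehat\target(\Statespace))+\TV(\widehat\target/\widehat\target(\Statespace)\,\|\,\target)$. Bounding the first summand by $\delta/(1+\delta)$ as above and applying Pinsker's inequality $\TV(\mu\,\|\,\nu)\leq\sqrt{\tfrac12\KL(\mu\,\|\,\nu)}$ to the second, with $\mu=\widehat\target/\widehat\target(\Statespace)$ and $\nu=\target$, produces exactly \eqref{equ:control_pinsker}.

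The only genuine subtlety, rather than a real obstacle, is the bookkeeping around normalization and sign conventions: one must verify that $\widehat\target(\Statespace)=1+\delta$ (which rests on mass conservation under $\ForwardPolicyStar$ and on $\Finit(\Statespace)=1$) and that the $\delta$ defined here through $\deltafinit=\min(0,\finit+\finstar-\foutstar)$ coincides, up to its sign convention, with the mass $(\Finit+\Finstar-\Foutstar)^-(\Statespace)$ of the negative part appearing in Theorem \ref{theo:negative_control}. Everything else is a routine application of the triangle and Pinsker inequalities.
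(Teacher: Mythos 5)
Your proposal is correct and follows the same route the paper intends: it is the preceding unlabeled corollary of Theorem \ref{theo:negative_control} (triangle inequality for total variation plus the identification $\Ftermhat=\widehat\target$ and the mass computation $\widehat\target(\Statespace)=1+\delta$ via conservation of mass under $\ForwardPolicyStar$), with the residual total-variation term bounded by Pinsker's inequality. Your remark that the $\delta$ defined via $\deltafinit=\min(0,\finit+\finstar-\foutstar)$ must be read as the (nonnegative) mass of the negative part $\left(\Finit+\Finstar-\Foutstar\right)^-(\Statespace)$ resolves the paper's sign convention correctly.
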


\begin{algorithm}
\caption{Ergodic flow: IL training}\label{alg:IL_EGF}
\begin{algorithmic}
\STATE {\bfseries Input:} A list of trainable bi-Lipchitz maps $\Phi_1,\cdots, \Phi_p$
\STATE {\bfseries Input:} A softmax model $\alpha:\Statespace \rightarrow \mathcal P(\{1,\cdots,p\})$
\STATE {\bfseries Input:} A trainable star outflow  model $f^*_\rightarrow:\Statespace\rightarrow \mathbb R_+$
\STATE {\bfseries Input:} A target samplable distribution $\target$
\STATE {\bfseries Input:} A source $\Finit$  samplable and of  density  $\finit$
 
\REPEAT
\STATE Fill replay buffer $\mathcal B=\mathcal B_{\rightarrow} \cup \mathcal B_{\leftarrow}$ with $B$ trajectories $(\underline s_t)_{t=1}^{t_{\text{max}}}$ using $(\Finit, \ForwardPolicy^*)$ for $\mathcal B_{\rightarrow}$ and $(\kappa, \BackwardPolicy^*)$ for $\mathcal B_{\leftarrow}$ with $t\in \{1,\cdots,\tau\}$
\STATE Minimization step of
$$ 
 \mathcal L =\frac{1}{2B}\sum_{t=1}^\tau\deltafinit(\underline s_t)p_t(\underline s)- b \mathbb E_{s\sim \target}\log \ftermhat(s),
$$
with 
\begin{eqnarray*}
   p_t(\underline s) &=&\prod_{t'<t}  \left(1+(\ftermhat/\foutstar)(\underline s_{t'})\right)^{-1}\quad \text{if } \underline s\in \mathcal B_{\rightarrow}\\
     &=&\prod_{t'<t}  \left(1+(\deltafinit/\finstar)(\underline s_{t'})\right)^{-1}\quad \text{if } \underline s\in \mathcal B_{\leftarrow} 
\end{eqnarray*}
\UNTIL{converged}
\end{algorithmic}
\end{algorithm}

First, the weak-FM term $\mathbb E_{\underline s} \sum_{t=1}^\tau \left(\deltafinit(\underline s_t)\right)^2$ in equation \eqref{equ:weakFM} controls the term $\frac{\delta}{1 + \delta}$ in equation \eqref{equ:control_pinsker}.

Second, since the target $\target$ has unknown density, we employ reward-less inference, using $\Fterm = (\Finit+\Finstar-\Foutstar)^+$.
% It is then natural to directly train the density of $(\Finit+\Finstar-\Foutstar)^+$ to fit $\target$ via cross-entropy, hence the second right-hand side term in equation \eqref{equ:weakFM}.
It is therefore natural to directly train the density of  $(\Finit+\Finstar-\Foutstar)^+$ to match $\target$ using cross-entropy, leading to the second term on the right-hand side of Equation \eqref{equ:weakFM}.
Corollary \ref{cor:pinsker_negative_control} demonstrates that controlling the Kullback-Leibler divergence through cross-entropy also controls the total variation sampling error.
% Corollary \ref{cor:pinsker_negative_control} shows that a control of the Kullback-Leibler divergence via cross-entropy yields a control of the total variation sampling error.

Third, more precisely, the KL term may be decomposed into
\begin{multline}
    \KL\left(\left.\frac{\widehat\target}{\widehat\target(\Statespace)} \right\| \target \right) =\\ \underbrace{-\mathbb E_{s\sim \target}\log \ftermhat(s)}_{\text{cross-entropy}} + \underbrace{\log \int_{s\in \Statespace} \ftermhat(s)d\lambda(s)}_{\text{normalization}} -  \mathcal H(\kappa), 
\end{multline}
where $\mathcal H(\kappa)=-\mathbb E_{s\sim \kappa} \log\frac{d\kappa}{d\lambda}(s) $ is the entropy of $\kappa$.
Since the normalization constant is unknown a priori,  the cross-entropy is insufficient to fully control the KL divergence. However, 
\begin{equation}
    \int_{s\in \Statespace} \ftermhat(s)d\lambda(s) =1 + \int_{s\in \Statespace} \deltafinit(s)d\lambda(s).
\end{equation}
Therefore, the weak-FM term in equation \eqref{equ:weakFM} also controls the normalization factor of $\ftermhat$.

\section{Experiments}
We proceed with experiments wherein the state space $\Statespace$ is either a flat torus $\mathbb T^2$, or sphere $\mathbb S^2$. 
An EGF family is built by randomly choosing diffeomorphisms from affine toroidal families on $\mathbb T^2$ and from isometry spherical families on $\mathbb S^2$,
together with two multi-layer perceptron (MLP) models: one for $\foutstar$ and one for $\forwardalpha$. The MLPs are $\tanh$ hyperbolic activated and initialized using orthogonal initialization \cite{saxe2013exact,hu2020provable}. We use the AdamW optimizer \cite{kingma2015adam,loshchilov2019international} for training.  

The KL-WeakFM loss tends to favor $\Ftermhat$ positive on the whole $\Statespace$, therefore, even with highly localize target distributions, the EGF reward model tends to have a small background inducing unwanted outliers. We filter it out by replacing $\ftermhat$ with $\mathbf{1}_{\ftermhat>\eta}\ftermhat$ during inference, where $\eta$ is chosen to minimize negative likelihood on a validation dataset. See Appendix \ref{sec:filter} for details.

\subsection{RL Experiments}
Two dimensional distributions are tested  in RL settings which allows us to sanity-check  the tractability of the EGF, its stability and expressivity. An EGF on $\Statespace = \mathbb T^2$ is built with 16 transformations (8 translations and 2 elements of $\mathrm{SL}_d(\mathbb Z)$ together with their inverse). Their MLPs have 5 hidden layers of width 32 to parameterize $\foutstar$  and $\ForwardPolicyStar$. This EGF is trained either with stable $ \mathcal L_{\mathrm{FM}}^{\mathrm{stable}}$ or unstable $\mathcal L_{\mathrm{FM}}^{\mathrm{div}}$  FM-losses, with or without regularization  $\mathcal R$, see equations~\eqref{equ:FMloss},\eqref{equ:RL_losses} and \eqref{equ:RL_reg}. 
\begin{eqnarray}\label{equ:RL_losses}
      \mathcal L_{\mathrm{FM}}^{\mathrm{div}} &=& \mathbb E_{\underline s}\sum_{t=1}^\tau \log\left(\frac{\finstar+\finit}{\foutstar+\fterm}\right)^2(\underline s_t) \\
            \mathcal R &=& \mathbb E_{\underline s}\sum_{t=1}^\tau (\foutstar)^2(\underline s_t) \label{equ:RL_reg}
\end{eqnarray}
The regularization $\mathcal R$ is motivated by the stability Theory of \citet{brunswic2024theory}, as long as the directional derivative of $\mathcal R$ is positive on 0-flows directions, it helps reducing the unstability of a loss. See also \citet{morozov2025revisiting} for a study of the impact of such regularizations for detailed balance losses on graphs.

 A stability comparison to divergence-based FM loss, see figure~\ref{fig:RL_checkerboard},  shows the expressivity of very small EGF as well as the unstability $\mathcal L_{\mathrm{FM}}^{\mathrm{div}}$ as predicted by \citet{brunswic2024theory}, see limitation 4 in section \ref{sec:preliminaries}. 

\begin{figure}

    \centering
    \captionsetup[subfigure]{labelformat=simple}

    % Row 1
    \begin{subfigure}{\linewidth}
        \centering
        \begin{subfigure}{0.48\linewidth}
            \centering
            \includegraphics[width=\linewidth]{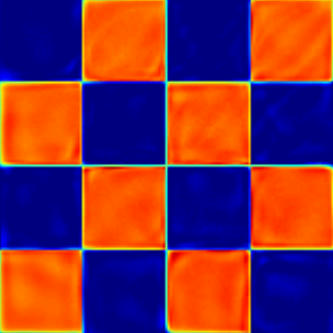}
        \end{subfigure}
        \hfill
        \begin{subfigure}{0.48\linewidth}
            \centering
            \includegraphics[width=\linewidth]{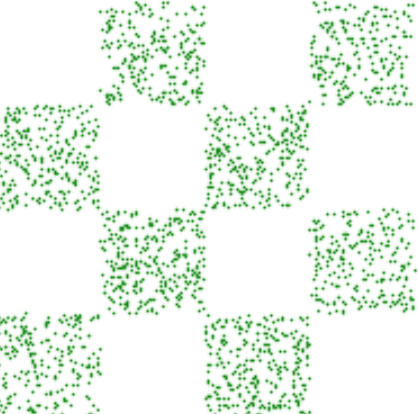}
        \end{subfigure}
        \caption{EGF 32x5, density (left) and samples (right).}
    \end{subfigure}

 \begin{subfigure}{\linewidth}
        \centering
        \begin{subfigure}{0.48\linewidth}
            \centering
            \includegraphics[width=\linewidth]{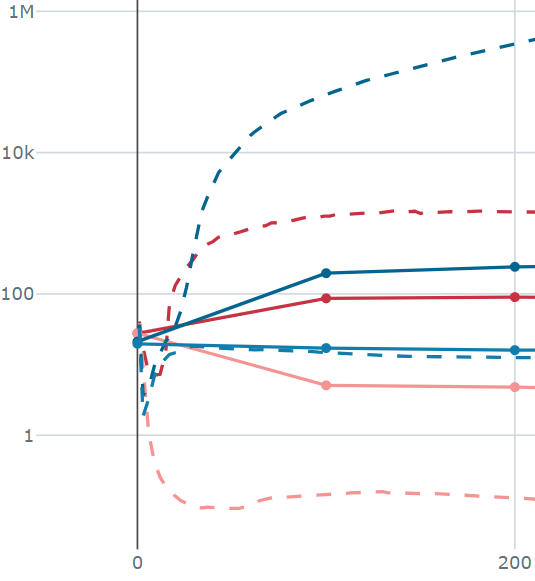}
        \end{subfigure}
        \hfill
        \begin{subfigure}{0.48\linewidth}
            \centering
            \includegraphics[width=\linewidth]{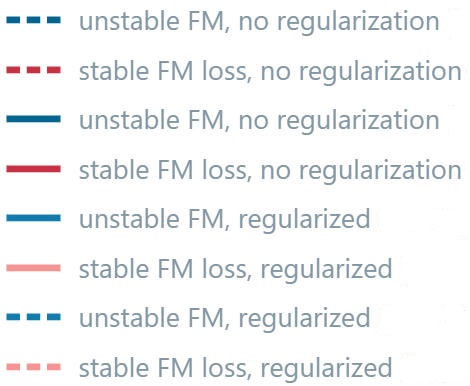}
        \end{subfigure}
        \caption{EGF 32x5, stability comparison between FM losses. Unregularized unstable loss (blue) flow size (dashed lines) and sampling time $\tau$ (solid lines) blow up while stable loss (red) converges. Regularization helps stabilization for both losses.}
    \end{subfigure}

    \caption{Checkerboard RL task.}
    \label{fig:RL_checkerboard}
\end{figure}

\subsection{IL Experiments}
We proceed with imitation learning experiments on tori $\mathbb T^2$ and spheres $\mathbb S^2$.

 % On $\mathbb T^2$, we compare small Moser Flows and EGFs to show how we are able to generate good distributions with EGF while Moser flow breaks down.
 On $\mathbb{T}^2$, we compare small Moser Flows and EGFs, demonstrating that EGFs can generate well-behaved distributions even when Moser Flows break down.
 An EGF is built with the minimal four affine transformations described in section \ref{sec:universality_tori}, implemented as MLPs with 3 hidden layers of width 32. These transformations parameterize $\foutstar$ and $\ForwardPolicyStar$. The Moser Flow is trained using the implementation provided in the authors' GitHub repository \cite{mosergithub}  with the only modification being the model size set to 32x3. Our EGF is trained using the KL-weakFM loss together with a  regularization $\mathcal R$ as in the RL setting.
  Figure \ref{fig:toyimg} shows how Moser Flow fails to train with such a small model while minimal EGF reproduces the target distribution with high fidelity.

  On $\mathbb S^2$, we benchmark EGFs on an earth science volcano dataset \cite{volcano}. A sample distribution is given for the dataset in Figure \ref{fig:volcanos}, and negative log-likelihoods are given on Table \ref{table:nll}. We only use six rotations: a rotation of angle $\pi/4$ around each of the three axes plus their respective inverse.  The two core MLPs of EGF are of size 256x5, compared to the 512x6 used by \citet{rozen2021moser}. The learning rate is 1e-3 with an exponential decay to 1e-5 at 3000 epochs of 25 steps.  We outperform Moser Flow and all related baselines. Notably, the EGF achieved its reported performance with a training time 10 times shorter.

\begin{figure}
    \centering
    \captionsetup[subfigure]{labelformat=simple}

    % % Row 1
    % \begin{subfigure}{\linewidth}
    %     \centering
    %     \begin{subfigure}{0.48\linewidth}
    %         \centering
    %         \includegraphics[width=\linewidth]{}
    %     \end{subfigure}
    %     \hfill
    %     \begin{subfigure}{0.48\linewidth}
    %         \centering
    %         \includegraphics[width=\linewidth]{}
    %     \end{subfigure}
    %     \caption{FFJORD}
    % \end{subfigure}

    % Row 2
    \begin{subfigure}{\linewidth}
        \centering
        \begin{subfigure}{0.45\linewidth}
            \centering
            \includegraphics[width=\linewidth,trim={1cm 1cm 1cm 1cm},clip]{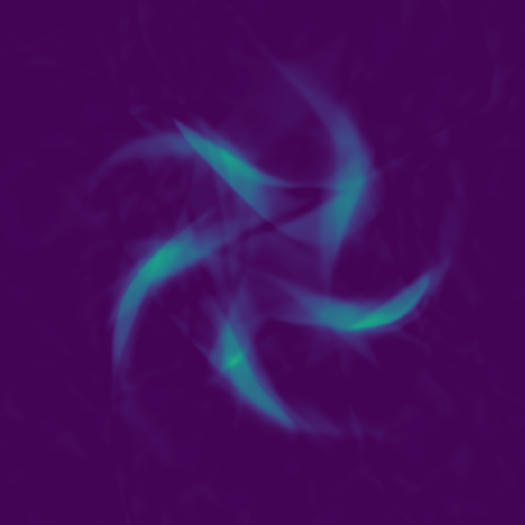}
        \end{subfigure}
        \hfill
        \begin{subfigure}{0.45\linewidth}
            \centering
            \includegraphics[width=\linewidth]{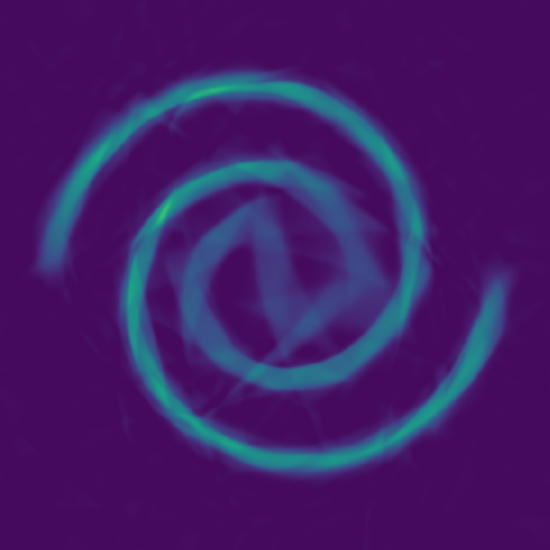}
        \end{subfigure}
        \caption{Moser Flow}
    \end{subfigure}
    % Row 2
    \begin{subfigure}{\linewidth}
        \centering
        \begin{subfigure}{0.45\linewidth}
            \centering
            \includegraphics[width=\linewidth,trim={7cm 7cm 7cm 7cm},clip]{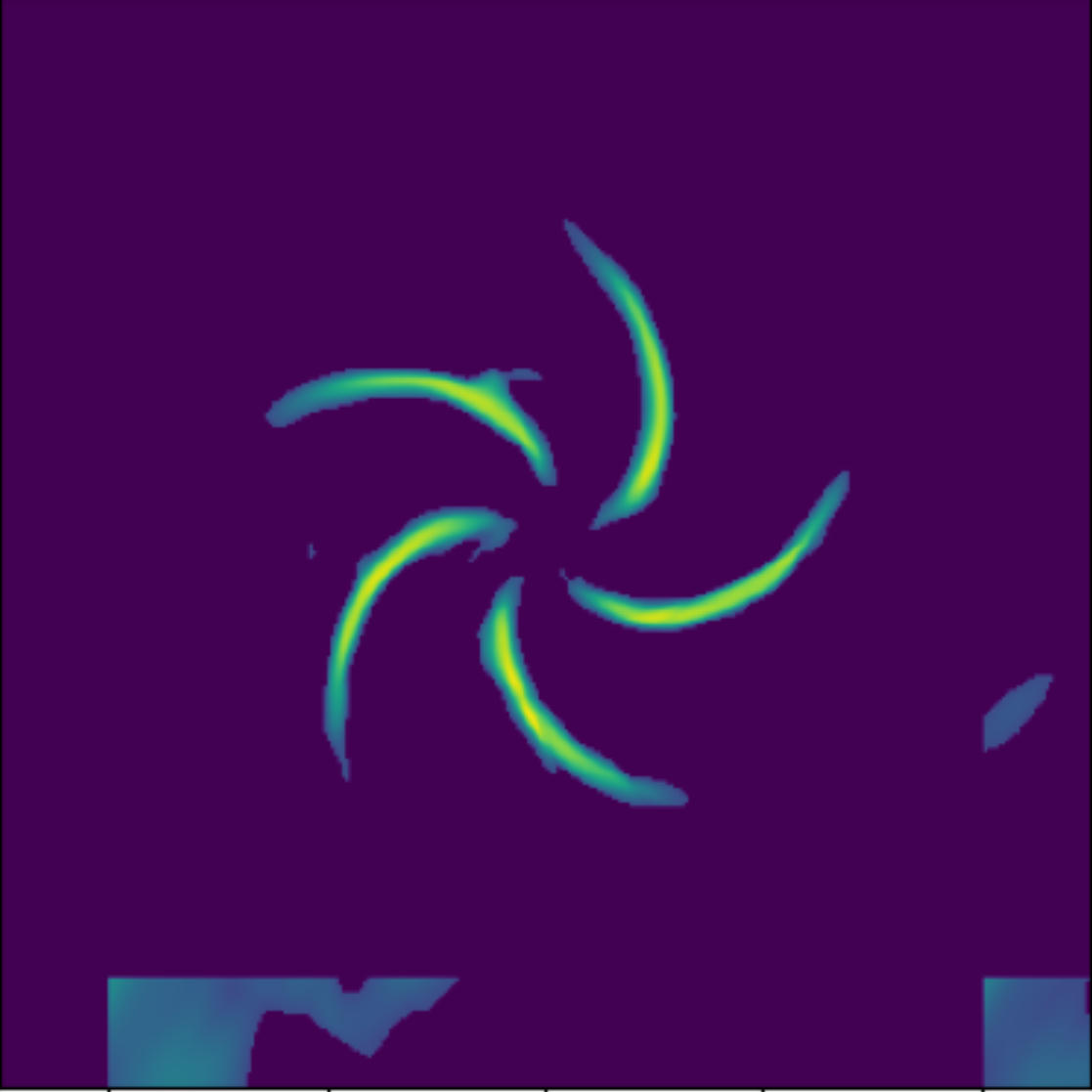}
        \end{subfigure}
        \hfill
        \begin{subfigure}{0.45\linewidth}
            \centering
            \includegraphics[width=\linewidth,trim={4cm 4cm 4cm 4cm},clip]{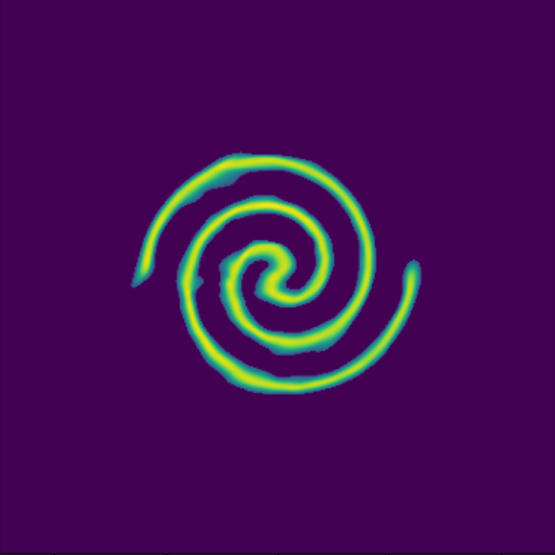}
        \end{subfigure}
        \caption{EGF}
    \end{subfigure}
    % % Row 3
    % \begin{subfigure}{\linewidth}
    %     \centering
    %     \begin{subfigure}{0.48\linewidth}
    %         \centering
    %         \includegraphics[width=\linewidth]{}
    %     \end{subfigure}
    %     \hfill
    %     \begin{subfigure}{0.48\linewidth}
    %         \centering
    %         \includegraphics[width=\linewidth]{}
    %     \end{subfigure}
    %     \caption{Denoising Diffusion Probabilistic Model}
    % \end{subfigure}

    % % Row 4
    % \begin{subfigure}{\linewidth}
    %     \centering
    %     \begin{subfigure}{0.48\linewidth}
    %         \centering
    %         \includegraphics[width=\linewidth]{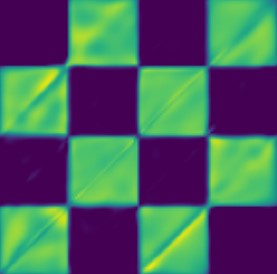}
    %     \end{subfigure}
    %     \hfill
    %     \begin{subfigure}{0.48\linewidth}
    %         \centering
    %         \includegraphics[width=\linewidth]{example-image-c}
    %     \end{subfigure}
    %     \caption{Ergodic Generative Flows (ours)}
    % \end{subfigure}

    \caption{Comparison of imitation learning on standard toy distributions using tiny models. Background filter is applied to EGF, a similar filter on Moser flow would yield worse results.}
    \label{fig:toyimg}
\end{figure}
\begin{figure}
    \centering
    \captionsetup[subfigure]{labelformat=simple}

        \centering
        \begin{subfigure}{0.45\linewidth}
            \centering
            \includegraphics[width=\linewidth,trim={14cm 6cm 14cm 7cm},clip]{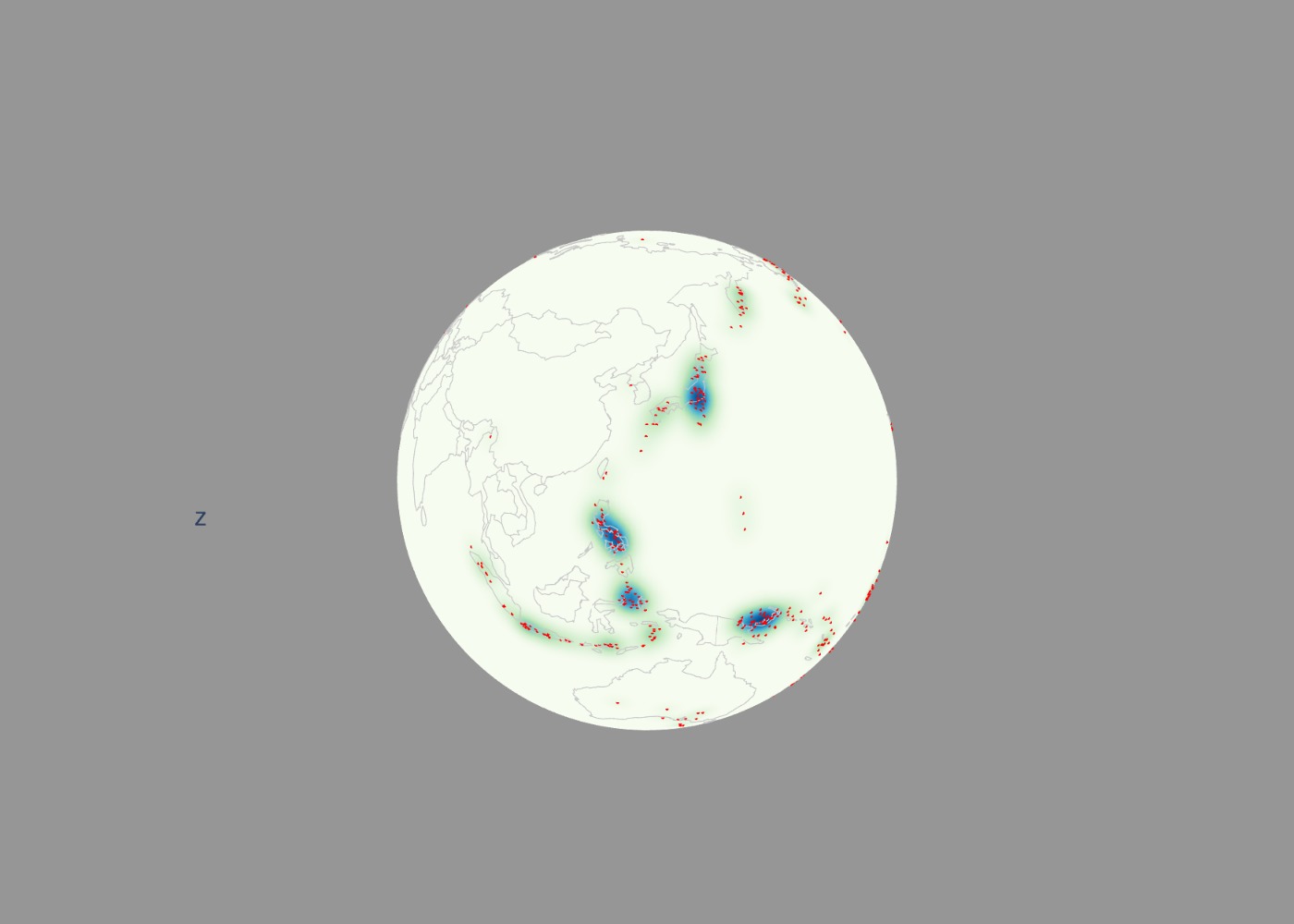}
        \end{subfigure}
        \hfill
        \begin{subfigure}{0.45\linewidth}
            \centering
            \includegraphics[width=\linewidth,trim={14cm 6cm 14cm 7cm},clip]{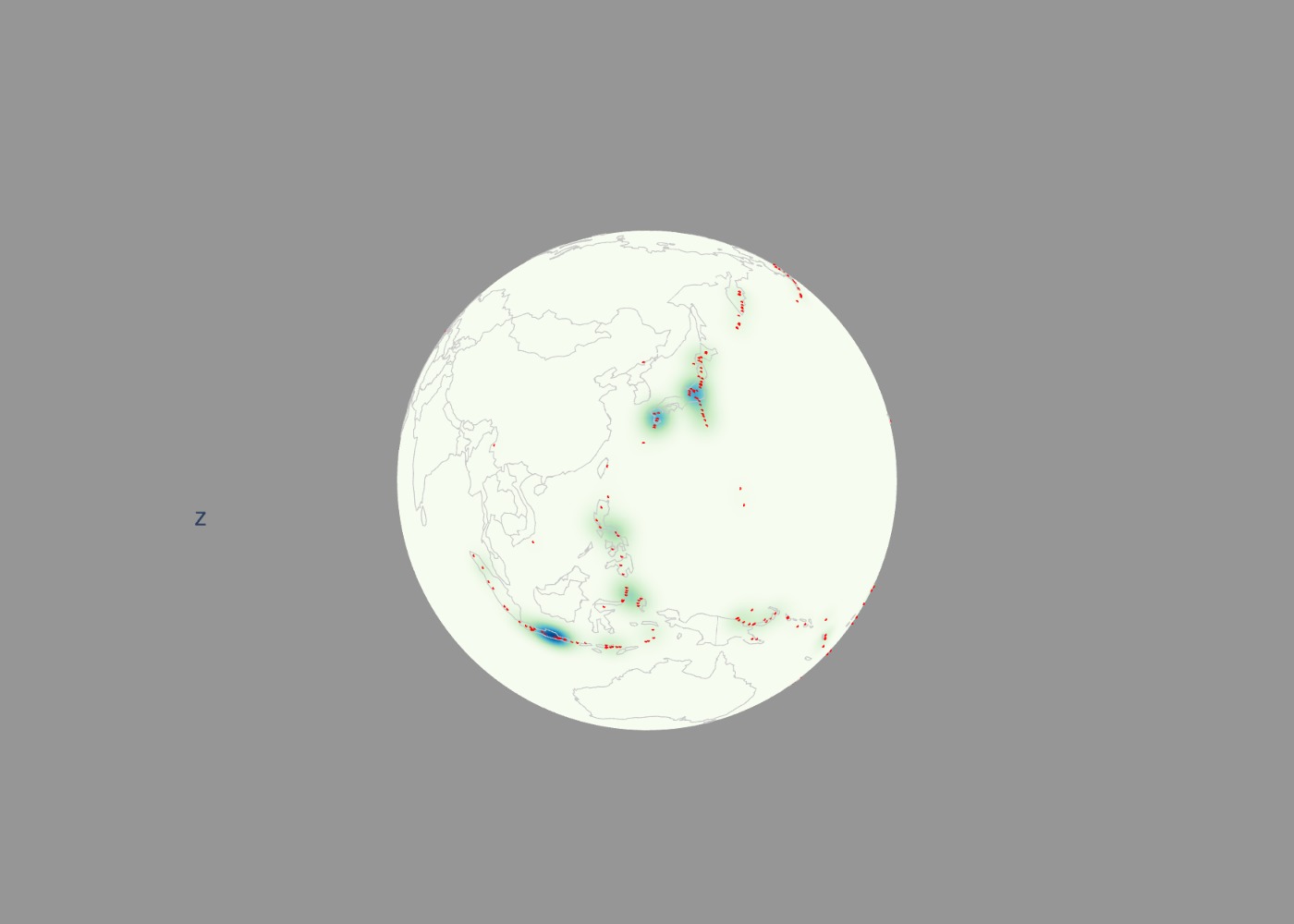}
        \end{subfigure}

    \caption{EGF generated point with reward field $\ftermhat$ (left) and whole dataset point with KDE field (right) for the volcano task.}
    \label{fig:volcanos}
\end{figure}

\begin{table}
    \centering
    \begin{tabular}{c c c c}
       & Volcano
       & Earthquake & Flood
       \\
     \hline
     Mixture vMF & -0.31  
     & 0.59 & 1.09 
     \\
     Stereographic & -0.64 
     & 0.43 & 0.99
     \\
     Riemannian & -0.97 
     & 0.19 & 0.90
     \\
     Moser Flow & -2.02 & -0.09 & 0.62 
     \\
     EGFN & \textbf{-2.31} & \textbf{-0.12} & \textbf{0.56} 
     \\
     \hline
    \end{tabular}
    \caption{Negative log-likelihood scores of the volcano dataset.}
    \label{table:nll}
\end{table}
% \begin{table}[h]
%     \centering
%     \begin{tabular}{c c c c}
%        & Volcano & Earthquake & Flood \\
%      \hline
%      Mixture vMF & $-0.31_{\pm 0.07}$  & $0.59_{\pm 0.01}$ & $1.09_{\pm 0.01}$ \\
%      Stereographic & $-0.64_{\pm 0.20}$ & $0.43_{\pm 0.04}$ & $0.99_{\pm 0.04}$\\
%      Riemannian & $-0.97_{\pm 0.15}$ & $0.19_{\pm 0.04}$ & $0.90_{\pm 0.03}$\\
%      Moser Flow & $-2.02_{\pm 0.42}$ & $-0.09_{\pm 0.02}$ & $0.62_{\pm 0.04}$ \\
%      EGFN & -1.47 &  tbd & tbd  \\
%      \hline
%     \end{tabular}
%     \caption{Negative log-likelihood scores of the earth sciences datasets.}
%     \label{table:nll}
% \end{table}

\section{Related Work}
\label{Related Work}
EGFs draw inspiration from NFs for their transformation blocks, as well as from denoising diffusion models (DDMs) \cite{ho2020denoising} for incorporating stochasticity. While similar approaches have been explored in previous works \cite{wu2020stochastic, zhang2021diffusion}, our method stands out by leveraging general generative flows theory to construct a more efficient framework. Compared to previous works, EGFs are more compact and demonstrate faster convergence as illustrated in our experiments. Furthermore, DDMs and their variants \cite{song2020denoising} rely on a fixed discretization of inference denoising trajectories, whereas EGFs enable trainable sampling times, offering greater flexibility and adaptability.

A particularly notable comparison arises when examining the relationship between our EGFs and NFs introduced in  \citet{rezende2020normalizing}, as both utilize affine toroidal families and projective spherical families. However, while our EGF framework provides universality guarantees, the NF setting lacks such guarantees as it solely relies on affine and projective transforms. Indeed, both projective and affine transformations form groups (compositions of affine transforms are affine), thus limiting the expressivity. This distinction highlights a fundamental advantage of EGFs, further reinforcing their theoretical robustness and practical effectiveness. 

Broadly speaking, previous methods such as Riemannian Continuous Normalizing Flows \cite{mathieu2020riemannian} and Flow Matching on Manifolds \cite{chen2024flow} rely on either handcrafted non-linear transformations or sophisticated NeuralODEs \cite{chen2018neural}. Our EGF does not incorporate NeuralODEs, as we do not face expressivity limitations that justify their usage. However, these techniques remain fully compatible with EGFs and could be integrated into future extensions of our framework to further enhance its flexibility and adaptability if needed.

Since EGFs' elementary transformation are derived from the transformation blocks of NFs, EGFs can be viewed as a stochastic sampler for NF architectures. In particular, any EGFs sampled trajectory can be defined as $(s_t)_{t=1}^\tau$, where the trajectory $(s_t)_{t=1}^\tau$ is obtained by composing NF transformations as follows: $s_1 \sim \mathcal{F}_{\text{init}}, \quad s_2 = \Phi_{k_1}(s_1), \quad \dots, \quad s_\tau = \Phi_{k_{\tau-1}} \circ \cdots \circ \Phi_{k_1}(s_1)$,  
with the transition probability $\mathbb{P}(k_{t+1} \mid k_t) = \forwardalpha^i(s_t)$. Each composition $\Phi_{k_{\tau-1}} \circ \cdots \circ \Phi_{k_1}$ is a diffeomorphism, and the terminal distribution is given by $\Fterm=\mathbb E_K(\Finit K)$, where $K=\Phi_{k_{\tau-1}} \circ \cdots \circ \Phi_{k_1}$ represents a random NF.
In other words, an EGF with $p$ diffeomorphisms can be represented as a pair $(\iota, \pi)$, where $\iota : \mathcal T_p \rightarrow \mathrm{Diffeo}(\Statespace)$ is a trainable embedding of the $p$-ary tree into the space of diffeomorphisms on the state space, and $\pi$ is a random walk policy on $\mathcal T_p$ with a random stopping time.
However, the policy $\pi$ is intractable as it is defined by $\pi(\Phi_{k} | \Phi_{k_t}\circ \cdots \Phi_{k_1}) = \frac{1}{\Fout(\Statespace)}\int_{s\in\Statespace} \forwardalpha^k(s)d\Fout(s)$, where $\Fout := \Foutstar + \Fterm$.

Building on this formulation, we can draw comparisons between EGFs and three related research directions: Continuously Indexed Normalizing Flows (CINF) \cite{caterini2021variational}, Neural Architecture Search (NAS) \cite{elsken2019neural}, and Wasserstein Gradient Descent (WGD) \cite{NEURIPS2018_a1afc58c}. First, CINFs attempt to overcome the limitations of NFs by using a fixed architecture, with conditioning sampled from a latent distribution. This approach generates the target distribution as an expectation, $\Fterm=\mathbb E_K(\Finit K)$, where the random NF $K$ is drawn from a continuous distribution of NFs. In contrast, our method samples $K$ from a tree structure. Second, NAS aims to find the optimal neural network architecture for a given task. In comparison, our EGF constructs a distribution of suitable NFs, where the sampling time can be interpreted as a learned depth of the resulting architecture. Third, by training over a distribution of architectures and considering the expectation of the output (with $\Finit$ as the input and $\Fterm$ as the output), our approach parallels the setup of Wasserstein Gradient Descent (WGD). Extending the theorems of \citet{NEURIPS2018_a1afc58c} to EGFs would be a valuable addition to our theoretical framework.

Lastly, from the proof of the universality Theorem \ref{theo:universality}, we observe that $\foutstar$ can be obtained as a fixed point of a flow operator dependent on $\ForwardPolicyStar, \Finit$, and $\Fterm$. This operator is a contraction for the affine toroidal family. Although we train $\foutstar$ via gradient descent, this approach bears similarities to Deep Equilibrium Models (DEQ) \cite{bai2019deep, bai2020multiscale}, where the fixed point is computed by a contraction neural network. The key difference is that we explicitly approximate the fixed point with a neural network, while DEQ models use a black-box solver and the implicit function theorem for gradients. Future work could develop a DEQ-EGF, where $\foutstar$ is implicitly obtained as a fixed point instead of via a feedforward network.

\section{Limitations and Future Work}
% First and foremost, our experiments are limited to low-dimensional settings. While ergodicity can be achieved with a dimension-independent number of generators (e.g., a single randomly initialized, non-trainable translation suffices on tori), the technical requirement of $L^2$-mixing summability for universality is more nuanced. EGFs currently suffer from limited proven applicability in this regard. Further theoretical advancements are needed to facilitate the enforcement of $L^2$-mixing summability in higher dimensions, along with corresponding experimental validation.

First and foremost, our experiments are limited to low dimensions. Although ergodicity is easily achievable with a number of generators independent from the dimension (for instance on tori, one randomly initialized non-trainable translation is sufficient to ensure ergodicity), the technical assumption of $L^2$-mixing summability needed for universality is more subtle. 
EGFs suffer from limited proven applicability, further theory is needed to easily enforce $L^2$-mixing summability in higher dimension together with related experiments. More precisely, a theoretical bound on the minimal number of transformations necessary to achieve universality would be useful for hyperparameter tuning. We give such a bound only for affine toroidal and isometry spherical families where two transformations are sufficient to achieve universality.

% Second, our implementation is inefficient. For instance, a key possible advantage of EGF is the control over trajectory length to reduce the inference time. Since the number of calls to MLPs in our implementation of EGF is close to that of a DDM, in order to fully benefit from shortened trajectory length it would be necessary to use the reduced size of our core MLP computing $\foutstar$ and $\forwardalpha$ and parallelize the computation of $\finstar$ which requires calling both $\foutstar$ and $\forwardalpha$ once for each move. This optimization was not implemented in the present work.

Second, the $L^2$-mixing summability property is essential to ensure universality, it is likely that training would be improved if some control over this property was to be achieved, say by adding a regularization. There is extensive mathematical literature on the so-called spectral gap \cite{kontoyiannis2012geometric,guivarc2016spectral,marrakchi2018strongly,bekka2020spectral}, which is a sufficient condition for $L^2$-mixing summability, and absolutely continuous invariant measures \cite{gora2003absolutely,bahsoun2004absolutely,galatolo2015statistical} including stability results \cite{froyland2014stability}. A systematic review of this literature would certainly yield such regularization.

Lastly, our focus was on the theoretical advancements of generative flow theory through EGF. As a result, we did not exploit EGF's high modularity, which enables sophisticated transformations, replay buffers, or neural architectures. Additionally, hyperparameter tuning was minimal, leaving room for future work to perform a systematic analysis.

% Last, our emphasis is on the theoretical advances of generative flow theory embodied by EGF. Therefore,  we did not leverage the high modularity of EGFs which allows sophisticated transformations, replay buffers or neural net architectures. In addition, our hyperparameter tuning was limited.  Future work could carry out a systematic hyperparameter analysis. 

% Last, we used a plain online experience replay buffer \cite{lin1992self} without memory across epochs, and minimal tuning of model capacity was conducted.  It is expected that a rolling replay buffer containing some off-policy trajectories together with prioritization \cite{schaul2015prioritized} would improve training speed and stability. More generally, our EGF implementation contains a number of manually tuned hyperparameters. A systematic analysis of hyperparameter settings would likely yield a better understanding of important factors.

\section{Conclusion}
% We proposed a new family of generative flows leveraging the ergodicity property to offer universality guarantees while using very simple diffeomorphisms and neural networks. Our results show that EGFs keep their expressivity even with a number of parameters at which Moser Flow\cite{rozen2021moser} breaks down. This simplicity allows us to beat our baselines on NASA dataset with model 30 times smaller and  training time 100 times shorter.
% 
We propose a new family of generative flows that leverage ergodicity to provide universality guarantees while utilizing simple diffeomorphisms and neural networks. Our results demonstrate that EGFs maintain their expressivity even at parameter counts where Moser Flow \cite{rozen2021moser} fails. This simplicity enables us to outperform our baselines on the NASA volcano dataset with a model 30 times smaller.

\section{Impact Statement}
This paper presents work whose goal is to advance the field of Machine Learning. There are many potential societal consequences of our work, none which we feel must be specifically highlighted here.
\bibliography{note}
\bibliographystyle{icml2025}

%%%%%%%%%%%%%%%%%%%%%%%%%%%%%%%%%%%%%%%%%%%%%%%%%%%%%%%%%%%%%%%%%%%%%%%%%%%%%%%
%%%%%%%%%%%%%%%%%%%%%%%%%%%%%%%%%%%%%%%%%%%%%%%%%%%%%%%%%%%%%%%%%%%%%%%%%%%%%%%
% APPENDIX
%%%%%%%%%%%%%%%%%%%%%%%%%%%%%%%%%%%%%%%%%%%%%%%%%%%%%%%%%%%%%%%%%%%%%%%%%%%%%%%
%%%%%%%%%%%%%%%%%%%%%%%%%%%%%%%%%%%%%%%%%%%%%%%%%%%%%%%%%%%%%%%%%%%%%%%%%%%%%%%
\newpage
\appendix
\onecolumn
\section{Fundamentals of EGFs} \label{sec:appendix_backward},

As stated in section \ref{sec:fundamentals_EGF}, on a Differential manifold $\Statespace$ endowed with a background measure $\lambda$ absolutely continuous with respect to Lebesgue measure, we define an EGF as a Generative Flow  $(\ForwardPolicyStar, \Foutstar)$ with 
$\Foutstar = \foutstar \lambda$ and $\ForwardPolicyStar(s\rightarrow \cdot) = \sum_{i=1}^p\forwardalpha^{i}(s)\delta_{\Phi_i(s)}$ where $\Phi_i$ are diffeomorphisms of $\Statespace$.  With these definitions, the backward policy automatically has a closed form formula.
\begin{theorem} Let $(\ForwardPolicyStar,\Foutstar)$ as above, the induced Generative Flow  on $\Statespace$ has a backward star policy given by  
    \begin{eqnarray}\label{equ:tractable_inflow_appendix}
\BackwardPolicy^*(s) &=& \sum_{i=1}^p\backwardalpha^i(s) \delta_{\Phi_i^{-1}(s)} \\ 
\backwardalpha^i(s) &=& \frac{(\forwardalpha^if^*_\rightarrow)\circ \Phi_i^{-1}(s)|J_s {\Phi_i^{-1}}| }{ \finstar(s)} \\
\finstar &=&  \sum_j (\forwardalpha^jf^*_\rightarrow)\circ \Phi_j^{-1}(s) |J_s {\Phi_j^{-1}}|.
\end{eqnarray}
\end{theorem}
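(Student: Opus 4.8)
The plan is to characterize $\BackwardPolicy^*$ as the time-reversal of the edge-flow measure and then simply read it off via a change of variables. First I would recall that the backward star policy is the ($\Finstar$-almost-everywhere unique) Markov kernel $\BackwardPolicy^*:\Statespace\rightarrow\Statespace$ that makes the edge flow reversible, i.e. $\Foutstar\otimes\ForwardPolicyStar(A\rightarrow B)=\Finstar\otimes\BackwardPolicy^*(B\rightarrow A)$ for all measurable $A,B$, where the star inflow $\Finstar=\Foutstar\ForwardPolicyStar$ is exactly the second marginal of the joint edge-flow measure. Equivalently, for every bounded measurable $g:\Statespace\times\Statespace\rightarrow\mathbb R$,
\[
\int g(s,s')\,d(\Foutstar\otimes\ForwardPolicyStar)(s,s')=\int_{\Statespace}\int_{\Statespace} g(s,s')\,d[\BackwardPolicy^*(s')](s)\,d\Finstar(s').
\]
Plugging in the atomic form $\ForwardPolicyStar(s)=\sum_i\forwardalpha^i(s)\delta_{\Phi_i(s)}$ and $\Foutstar=\foutstar\lambda$, the left-hand side expands, since $p$ is finite and the sum commutes with the integral, as $\sum_{i=1}^p\int_{\Statespace}\forwardalpha^i(s)\foutstar(s)\,g(s,\Phi_i(s))\,d\lambda(s)$.

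Second, I would apply in each summand the change of variables $s'=\Phi_i(s)$, $s=\Phi_i^{-1}(s')$. Since each $\Phi_i$ is a diffeomorphism and $\lambda$ is absolutely continuous with respect to Lebesgue measure, the classical change-of-variables formula produces the Jacobian factor $|\det J_{s'}\Phi_i^{-1}|$, turning the $i$-th term into $\int_{\Statespace}(\forwardalpha^i\foutstar)\circ\Phi_i^{-1}(s')\,g(\Phi_i^{-1}(s'),s')\,|\det J_{s'}\Phi_i^{-1}|\,d\lambda(s')$. Choosing $g$ to depend on $s'$ only and summing over $i$ identifies the second marginal density, $d\Finstar=\finstar\,d\lambda$ with $\finstar(s')=\sum_j(\forwardalpha^j\foutstar)\circ\Phi_j^{-1}(s')\,|\det J_{s'}\Phi_j^{-1}|$, which recovers equation \eqref{equ:finstar}. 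Multiplying and dividing by $\finstar(s')$ (legitimate off the $\Finstar$-null set $\{\finstar=0\}$) rewrites the full sum as
\[
\int_{\Statespace}\Bigl(\sum_{i=1}^p\backwardalpha^i(s')\,g(\Phi_i^{-1}(s'),s')\Bigr)\finstar(s')\,d\lambda(s'),\qquad \backwardalpha^i(s')=\frac{(\forwardalpha^i\foutstar)\circ\Phi_i^{-1}(s')\,|\det J_{s'}\Phi_i^{-1}|}{\finstar(s')}.
\]
Comparing with the reversal identity forces $\BackwardPolicy^*(s')=\sum_i\backwardalpha^i(s')\delta_{\Phi_i^{-1}(s')}$, which is precisely \eqref{equ:tractable_inflow_appendix}.

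Third, I would verify that $\BackwardPolicy^*$ is a genuine Markov kernel: summing the coefficients gives $\sum_i\backwardalpha^i(s')=\finstar(s')/\finstar(s')=1$ by the very definition of $\finstar$, so $\BackwardPolicy^*(s')$ is a probability measure for $\Finstar$-almost every $s'$, and on the $\Finstar$-null set $\{\finstar=0\}$ the kernel can be defined arbitrarily. The hard part here is not conceptual but a matter of clean bookkeeping: one must justify the change of variables on the manifold (this is where $\lambda\ll\mathrm{Leb}$ and the diffeomorphism property of $\Phi_i$ are used), confirm that restricting to $\{\finstar>0\}$ is harmless because its complement carries no inflow mass, and note that the finiteness of $p$ is exactly what makes every term individually tractable and the interchange of sum and integral immediate.
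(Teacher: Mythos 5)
Your proposal is correct and follows essentially the same route as the paper's proof: both characterize $\BackwardPolicy^*$ via the time-reversal identity $\Finstar\otimes\BackwardPolicy^*(Y\rightarrow X)=\Foutstar\otimes\ForwardPolicyStar(X\rightarrow Y)$, expand the atomic forward policy, apply the change of variables $u=\Phi_i(s)$ to produce the Jacobian factor, and read off $\finstar$ and $\backwardalpha^i$ as Radon--Nikodym derivatives. Your use of general test functions $g(s,s')$ in place of the paper's indicator functions $\mathbf{1}_X\mathbf{1}_Y$ is an equivalent formulation, and your explicit check that $\sum_i\backwardalpha^i=1$ is a small addition the paper leaves implicit.
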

\begin{proof}
    From \cite{brunswic2024theory} section 3.2 and Proposition 1 in their appendix, we have for any measurable $X\subset \Statespace$:
    \begin{eqnarray}
        \Finstar(Y) &=&  \Foutstar \otimes \ForwardPolicyStar (\Statespace \rightarrow Y) \\
        \BackwardPolicyStar(s\rightarrow X) &=&  \frac{d \Finstar \otimes \BackwardPolicyStar ( \cdot \rightarrow X)}{d\Finstar}(s)    \\
 \Finstar \otimes \BackwardPolicyStar(Y\rightarrow X) &=&  \Foutstar \otimes \ForwardPolicyStar(X\rightarrow Y). 
\end{eqnarray}
Therefore, for all measurables $X,Y\subset \Statespace$:
\begin{eqnarray}
\Finstar \otimes \BackwardPolicyStar (Y\rightarrow X) &=&  \Foutstar \otimes \ForwardPolicyStar (X\rightarrow Y) \\ 
     &=& \int_{s\in \Statespace} \mathbf{1}_{X}(s) \ForwardPolicyStar(s\rightarrow Y) d\Foutstar(s) \\ 
    &=& \int_{s\in \Statespace}\mathbf{1}_{X}(s)  \ForwardPolicyStar(s\rightarrow Y) \foutstar(s) d\lambda(s)\\ 
    &=& \int_{s\in \Statespace} \sum_{i=1}^p \mathbf{1}_{X}(s) \forwardalpha^i(s) \delta_{\Phi_i(s)}(Y) \foutstar(s) d\lambda(s)\\ 
    &=& \sum_{i=1}^p\int_{s\in \Statespace} \mathbf{1}_{X}(s)  \forwardalpha^i(s) \mathbf{1}_{Y}(\Phi_i(s)) \foutstar(s) d\lambda(s)\\ 
    &=& \sum_{i=1}^p\int_{u\in \Statespace} \mathbf{1}_{X}(\Phi_i^{-1}(u)) \forwardalpha^i(\Phi_i^{-1}(u)) \mathbf{1}_{Y}(u) \foutstar(\Phi_i^{-1}(u)) |J_{u} \Phi_i^{-1}| d\lambda(u)\\  
    &=& \sum_{i=1}^p\int_{u\in Y} \mathbf{1}_{X}(\Phi_i^{-1}(u))  \forwardalpha^i(\Phi_i^{-1}(u)) \foutstar(\Phi_i^{-1}(u)) |J_{u} \Phi_i^{-1}| d\lambda(u)\\  
    &=&\int_{u\in Y} \sum_{i=1}^p  \delta_{\Phi_i^{-1}(u))}(X)  \forwardalpha^i(\Phi_i^{-1}(u)) \foutstar(\Phi_i^{-1}(u)) |J_{u} \Phi_i^{-1}| d\lambda(u).  
\end{eqnarray}
\end{proof}
Since this formula is true for any $Y\subset \Statespace$, we deduce that $\Finstar \otimes \BackwardPolicyStar(\cdot \rightarrow X)\ll \lambda$ and that 
\begin{eqnarray}
    \frac{d\Finstar \otimes \BackwardPolicyStar(\cdot \rightarrow X)}{d\lambda}(x) &=& \sum_{i=1}^p  \delta_{\Phi_i^{-1}(x)}(X)  \forwardalpha^i(\Phi_i^{-1}(x)) \foutstar(\Phi_i^{-1}(x)) |J_{x} \Phi_i^{-1}|\\
    &=& \left(\sum_{i=1}^p   \forwardalpha^i(\Phi_i^{-1}(x)) \foutstar(\Phi_i^{-1}(x)) |J_{x} \Phi_i^{-1}| \delta_{\Phi_i^{-1}(x)} \right) (X)\\
    \finstar(x) &=&  \frac{d\Finstar \otimes \BackwardPolicyStar(\cdot \rightarrow \Statespace)}{d\lambda}(x) \\&=&   \sum_{i=1}^p  \delta_{\Phi_i^{-1}(x)}(\Statespace)  \forwardalpha^i(\Phi_i^{-1}(x)) \foutstar(\Phi_i^{-1}(x)) |J_{x} \Phi_i^{-1}|\\
    &=&\sum_{i=1}^p   \forwardalpha^i(\Phi_i^{-1}(x)) \foutstar(\Phi_i^{-1}(x)) |J_{x} \Phi_i^{-1}|
\end{eqnarray}
The result follows.

\section{Universality Theorems}
\subsection{General Universality Theorem}
We shall both write the theorems in a more formal way and provide rigorous proof of each statements. 

The following theorem implies Theorem \ref{theo:universality}.
\begin{theorem} \label{theo:universality_true}Let $(\Statespace,\lambda)$ be a measured Polish space and let $\ForwardPolicyStar$ be a Markov kernel acting on $\Statespace$. Assume that $\ForwardPolicyStar$ is ergodic on $(\Statespace,\lambda)$ with summable $L^2$-mixing coefficients, then for any probability distributions $\Finit,\Fterm \ll \lambda$ with density in $L^2(\lambda)$ and any $\varepsilon>0$, there exists $\foutstar\in L^2(\lambda)$ such that the generative flow $(\ForwardPolicyStar,\foutstar)$ from $\Finit$ to $\Fterm$ is such that $\delta\Finit(\Statespace)+\delta\Fterm(\Statespace)<\varepsilon$.
\end{theorem}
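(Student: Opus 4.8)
The plan is to reformulate the flow-matching defect as a Poisson-type linear equation and solve it approximately via a truncated Neumann series, using the mixing hypothesis to control the residual. First I would fix notation: let $P$ be the density transfer operator associated with $\ForwardPolicyStar$, defined by $(g\lambda)\ForwardPolicyStar = (Pg)\lambda$ for $g\in L^2(\lambda)$. This is well defined and maps $L^2(\lambda)$ into itself under the summable $L^2$-mixing hypothesis, and it is the $L^2(\lambda)$-adjoint of the Koopman operator $U\psi := \psi\ForwardPolicyStar$, since $\int (Pg)\psi\,d\lambda = \int (U\psi)\,g\,d\lambda$. With this notation $\finstar = P\foutstar$, and the signed density of the flow-matching defect is $h := \finit + \finstar - \fterm - \foutstar = f - (I-P)\foutstar$, where $f := \finit - \fterm$. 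Because $\Finit$ and $\Fterm$ are probability distributions, $\int f\,d\lambda = 0$. The key observation is that the target quantity is exactly the total mass of the positive and negative parts of $h\lambda$, so $\delta\Finit(\Statespace) + \delta\Fterm(\Statespace) = \|h\|_{L^1(\lambda)}$, and it suffices to make $\|h\|_{L^1(\lambda)}$ small.

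Second, I would construct $\foutstar$ as the truncated Neumann series $\foutstar := \sum_{n=0}^N P^n f$. Telescoping gives $(I-P)\foutstar = f - P^{N+1}f$, so the defect density collapses to $h = P^{N+1}f$. Being a finite sum of $L^2$-functions, this $\foutstar$ lies in $L^2(\lambda)$ as required, while summability of $\gamma_n$ would moreover let the full series converge and yield exact flow-matching in the limit. The whole problem thus reduces to proving $\|P^{N+1}f\|_{L^1(\lambda)}\to 0$ as $N\to\infty$ for mean-zero $f$.

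Third, I would bound the residual by duality against the mixing coefficients. Since $\lambda$ is finite, $\|P^{N+1}f\|_{L^1(\lambda)} \le \lambda(\Statespace)^{1/2}\|P^{N+1}f\|_{L^2(\lambda)}$, and by the adjoint relation $\|P^{N+1}f\|_{L^2(\lambda)} = \sup_{\|\psi\|_{L^2(\lambda)}\le1}\langle f,\,U^{N+1}\psi\rangle_{L^2(\lambda)}$. I would then exploit $\int f\,d\lambda = 0$: because $U$ fixes constants, I may replace $\psi$ by its centering $\psi - c$, with $c$ the invariant mean $\int\psi\,d\widehat\lambda$, without changing the pairing, so that only the centered mean-zero part survives against $f$. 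A Cauchy–Schwarz step then pairs $f$ with $U^{N+1}(\psi-c)$ and invokes the definition of $\gamma_{N+1}$ to bound $\|U^{N+1}(\psi-c)\|^2_{L^2(\widehat\lambda)} \le \gamma_{N+1}\|\psi-c\|^2$. Since the summability hypothesis forces $\gamma_n\to0$, choosing $N$ large makes the residual, hence $\delta\Finit(\Statespace)+\delta\Fterm(\Statespace)$, smaller than $\varepsilon$.

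The hard part will be the bookkeeping between the two equivalent-but-distinct measures $\lambda$ and $\widehat\lambda$, together with the passage among the $L^1$, $L^2$ and $L^\infty$ norms. Concretely, $\gamma_n$ is an $L^2(\widehat\lambda)$ quantity whereas the defect is naturally measured in $L^1(\lambda)$, so the Cauchy–Schwarz step must absorb the Radon–Nikodym weight $w = d\widehat\lambda/d\lambda$ (equivalently, control $\|U^{N+1}(\psi-c)\|_{L^2(\lambda)}$ by its $L^2(\widehat\lambda)$ counterpart and check $f/w\in L^2(\widehat\lambda)$), and one must reconcile the $\lambda$-mean in $\int f\,d\lambda=0$ with the invariant $\widehat\lambda$-mean toward which $U^n\psi$ converges when centering. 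A secondary point worth flagging is non-negativity of $\foutstar$: the truncated series need not be non-negative, but adding any multiple of the invariant density $w$ leaves $h$ unchanged since $(I-P)w=0$, and in the volume-preserving cases of interest, such as the affine toroidal family where $\lambda$ is itself invariant and $w\equiv1$, adding a large constant restores $\foutstar\ge0$.
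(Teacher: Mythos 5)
Your core construction coincides with the paper's: both solve $(I-P)\foutstar=\finit-\fterm$ by a Neumann series in the transfer operator $P$. The paper takes the \emph{full} series $\nu_\infty=\sum_{t\ge0}(\Finit-\Fterm)(\ForwardPolicyStar)^t$, obtains an exact fixed point of $\nu\mapsto\nu\ForwardPolicyStar+\Finit-\Fterm$, and lets the whole error come from the positivity repair; you truncate at $N$ and let the error be the tail $P^{N+1}f$. Your variant is actually cleaner on one point: $\foutstar\in L^2(\lambda)$ is automatic for a finite sum, and you only need $\gamma_n\to0$, whereas $L^2$-convergence of the full series requires $\sum_n\|\epsilon_n\|_2<\infty$, i.e.\ $\sum_n\sqrt{\gamma_n}<\infty$, which is not implied by $\sum_n\gamma_n<\infty$ (the paper's bound $\|\nu_\infty\|_2\le\sum_t\|\epsilon_t\|_2^2$ conflates the two); so your route is the more robust one under the stated hypothesis. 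Your duality detour through the Koopman operator is unnecessary, though: with the paper's definition of $\gamma_n$ as a supremum over mean-zero $\varphi$, the bound $\|P^{N+1}f\|^2_{L^2(\widehat\lambda)}\le\gamma_{N+1}\,\|f\|^2_{L^2(\lambda)}$ is immediate because $f=\finit-\fterm$ has zero $\lambda$-mean, and passing to $L^1(\lambda)$ uses only finiteness of $\lambda$ and the equivalence $\widehat\lambda\sim\lambda$. Your identity $\delta\Finit(\Statespace)+\delta\Fterm(\Statespace)=\|h\|_{L^1(\lambda)}$ and the telescoping $h=P^{N+1}f$ are both correct.

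The one genuine gap is the non-negativity of $\foutstar$, which is required for $(\ForwardPolicyStar,\Foutstar)$ to be a generative flow since $\Foutstar$ must be a non-negative measure. ``Adding a large multiple of the invariant density'' only works when your truncated sum is bounded below, which an $L^2(\lambda)$ function need not be; specializing to $w\equiv1$ does not remove this obstruction. The correct repair, which is exactly what the paper does and which transfers verbatim to your truncated series, is to replace $\foutstar+\eta w$ by its positive part: the induced extra defect is $\bigl((\foutstar+\eta w)^-\lambda\bigr)(\ForwardPolicyStar-\mathrm{Id})$, whose total-variation mass is at most $2\int(\foutstar+\eta w)^-d\lambda$, and this tends to $0$ as $\eta\to\infty$ by dominated convergence since $(\foutstar+\eta w)^-\le(\foutstar)^-$ pointwise. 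Choosing $N$ so that the tail contributes less than $\varepsilon/2$ and then $\eta$ so that the positivity repair contributes less than $\varepsilon/2$ closes the argument.
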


\begin{proof}
    Let $H:\nu \mapsto \nu\pi^*+\Finit - \Fterm$,  let $\nu_0=\Finit-\Fterm$ and let $\nu_{t+1} = H(\nu_t)$. By assumption, $\nu_0$ density is in $L^2(\lambda)$, then the density of $\nu_t$ with respect to $\lambda$ is $L^2$ for all $t\in \mathbb N$.
    
    We have
    \begin{eqnarray}
    \nu_T &=& H^T(\nu)   \\
    &=& \sum_{t=0}^T (\Finit-\Fterm)(\ForwardPolicyStar)^t \\
    &=& \sum_{t=0}^T\epsilon_t
    \end{eqnarray}
    with $\epsilon_t =  (\Finit-\Fterm)(\ForwardPolicyStar)^t$.  
    For all $t\in \mathbb N, \epsilon_t(\Statespace)=0$, hence by assumption  
    $\sum_{t\geq 0}\int_{s\in \Statespace}\left(\frac{d\epsilon_t}{d\lambda}(s)\right)^2d\lambda(s)< +\infty$. Therefore, $\nu_T$ converges as $T\rightarrow +\infty$ to  some $\nu_{\infty}$ and $|\nu_\infty |_2\leq \sum_{t\geq 0}\int_{s\in \Statespace}\left(\frac{d\epsilon_t}{d\lambda}(s)\right)^2d\lambda(s)<+\infty$, hence $\nu_{\infty}\ll \lambda$ and $\frac{d\nu_\infty}{d\lambda}\in L^2(\lambda)$. 
    Furthermore, $\nu_{\infty}$ is a fix point of $H$ so $\nu_{\infty} =  \nu_\infty \pi^*- R + \Finit$. 
   
    Now, $\nu_\infty$ does not necessarily provide a suitable $\foutstar$ because it may happen that the negative part $\nu_\infty^-\neq 0$.
    Since for any $\eta > 0$ we have $H(\nu_\infty + \eta\lambda) = H(\nu_\infty) + \eta \lambda \ForwardPolicyStar = \nu_\infty + \eta \lambda$, we may add $\eta \lambda$ to $\nu_\infty$ to get another fix point $\nu_\infty^\eta := \nu_\infty +\eta\lambda$ of $H$. Define $\foutstar= \frac{d(\nu_\infty^\eta)^+}{d\lambda}$, so that: 
    \begin{eqnarray}
        \Finit + (\foutstar \lambda)\ForwardPolicyStar - \Fterm - \foutstar\lambda &=&  \Finit + (\foutstar \lambda- \nu_\infty^\eta + \nu_\infty^\eta)\ForwardPolicyStar - \Fterm -  (\foutstar \lambda- \nu_\infty^\eta + \nu_\infty^\eta) \\ 
        &=& (\foutstar \lambda- \nu_\infty^\eta)\ForwardPolicyStar  -  (\foutstar \lambda- \nu_\infty^\eta) 
    \end{eqnarray}
    Therefore, $\delta\Finit := \left[(\foutstar \lambda- \nu_\infty^\eta)\ForwardPolicyStar  -  (\foutstar \lambda- \nu_\infty^\eta)\right]^-$ and 
     $\delta\Fterm := \left[(\foutstar \lambda- \nu_\infty^\eta)\ForwardPolicyStar  -  (\foutstar \lambda- \nu_\infty^\eta)\right]^+$. 
    So that
     \begin{equation}\delta\Finit(\Statespace)+\delta\Fterm(\Statespace)=|(\nu_\infty^\eta)^-(\ForwardPolicyStar-1)|(\Statespace) \leq 2(\nu_\infty^\eta)^-(\Statespace).
     \end{equation}
     Since $\nu_\infty\ll \lambda$ in particular $\lim_{\eta\rightarrow +\infty} (\nu_\infty^\eta)^-(\Statespace) = 0$, the result follows by choosing $\eta$ big enough.
\end{proof}

\subsection{Universality on Tori}
The universality Theorem on tori is obtained as a consequence of the spectral gap for Affine Toroidal families. We use the following result.
\begin{theorem}[Theorem 5 of \cite{bekka2015spectral}  ]
\label{theo:spectral_gap_tori}
    Let $H$ be a countable subgroup of $\mathrm{Aff}(\mathbb T^d)$. The following properties are equivalent:
    \begin{enumerate}
        \item[(i)]  The action of H on $\mathbb{T}$ does not have a spectral gap.
\item[(ii)] There exists a non-trivial $H$-invariant factor torus $\overline{\mathbb T}$ such that the projection of $H$ on $\mathrm{Aut}(\overline {\mathbb T})$ is amenable.
    \end{enumerate}
\end{theorem}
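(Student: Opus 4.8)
The plan is to prove the dichotomy by Fourier analysis on $\mathbb{T}^d$, which converts the spectral gap question into a statement about the asymptotics of the linear parts of $H$ acting on the dual lattice $\mathbb{Z}^d$. Write a generic element of $H$ as $g=(A,b)$ with linear part $A\in\mathrm{SL}_d(\mathbb{Z})$ and translation part $b\in\mathbb{T}^d$, and let $\Gamma\leq\mathrm{SL}_d(\mathbb{Z})$ be the group of linear parts. Pontryagin duality gives $L^2(\mathbb{T}^d)\cong\ell^2(\mathbb{Z}^d)$ and $L^2_0(\mathbb{T}^d)\cong\ell^2(\mathbb{Z}^d\setminus\{0\})$, and under the Koopman representation $\pi(g)f=f\circ g^{-1}$ one computes $(\pi(g)\widehat\xi)(m)=e^{-2\pi i\langle m,b\rangle}\,\widehat\xi(A^{\top}m)$. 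Thus the linear parts merely permute Fourier modes by the integral matrices, while the translations contribute only unimodular phases. "No spectral gap'' means $1_H\prec\pi|_{L^2_0}$, i.e. there are almost invariant unit vectors in $L^2_0$. The decisive observation is that $|\widehat{\pi(g)\xi}(m)|^2=|\widehat\xi(A^{\top}m)|^2$: passing to moduli annihilates the translation phases, so the probability measures $\mu_\xi:=\sum_{m\neq 0}|\widehat\xi(m)|^2\,\delta_m$ on $\mathbb{Z}^d\setminus\{0\}$ transform under the honest (cocycle-free) $\Gamma$-action on the lattice.

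For the direction (ii)$\Rightarrow$(i), suppose there is a nontrivial $H$-invariant factor torus $\overline{\mathbb{T}}$ whose image $\overline\Gamma\leq\mathrm{Aut}(\overline{\mathbb{T}})$ is amenable. Dually, $\overline{\mathbb{T}}$ corresponds to an $H$-invariant primitive sublattice $\Lambda\subseteq\mathbb{Z}^d$, and $L^2_0(\overline{\mathbb{T}})\cong\ell^2(\Lambda\setminus\{0\})$ is an $H$-invariant subspace of $L^2_0(\mathbb{T}^d)$ on which $H$ acts through its affine projection $\overline H\leq\mathrm{Aff}(\overline{\mathbb{T}})$. Since $\overline H$ is an extension of the amenable group $\overline\Gamma$ by the compact (hence amenable) translation subgroup, $\overline H$ is itself amenable. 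By the Connes–Weiss/Schmidt theorem, an amenable group admits no strongly ergodic probability-measure-preserving action, so its Koopman representation on $L^2_0(\overline{\mathbb{T}})$ has almost invariant vectors (if the action were non-ergodic it would even have genuine invariant vectors). These vectors lie in $L^2_0(\mathbb{T}^d)$, so $1_H\prec\pi|_{L^2_0}$ and there is no spectral gap.

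For the hard direction (i)$\Rightarrow$(ii), take almost invariant unit vectors $\xi_k\in L^2_0$. From $\bigl\||\widehat{\pi(g)\xi_k}|^2-|\widehat\xi_k|^2\bigr\|_1\leq 2\,\|\pi(g)\xi_k-\xi_k\|_2\to 0$ one sees that $\mu_{\xi_k}$ is asymptotically $\Gamma$-invariant in total variation. Pushing forward by the projection $m\mapsto[m]$ onto $\mathbb{P}^{d-1}(\mathbb{R})$ yields asymptotically $\Gamma$-invariant probability measures on a compact space; by weak-$*$ compactness a subsequential limit is a genuinely $\Gamma$-invariant probability measure $\nu$ on $\mathbb{P}^{d-1}(\mathbb{R})$. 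Now invoke Furstenberg's theorem on invariant measures on projective space together with the structure of the Zariski closure of $\Gamma$: the existence of $\nu$ forces either that $\Gamma$ is virtually solvable (hence amenable), or that there is a proper nonzero $\Gamma$-invariant subspace, which can be taken rational by Galois-averaging the invariant subspaces of the integral matrices. A rational invariant subspace gives an $H$-invariant primitive sublattice and hence a nontrivial $H$-invariant factor torus; the Tits alternative identifies the induced linear image as amenable precisely in the virtually solvable case, and an induction on dimension lands on a factor torus with amenable image, proving (ii).

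The main obstacle is this last extraction. Three points require care: first, ensuring the limiting measure on $\mathbb{P}^{d-1}$ is a probability measure and does not degenerate — this is guaranteed because the excluded zero mode keeps all mass on nonzero modes and projectivization compactifies the remaining directions. Second, converting "invariant measure on projective space'' into a \emph{rational} invariant subspace compatible with the arithmetic of $\Gamma\leq\mathrm{SL}_d(\mathbb{Z})$, which is where Furstenberg's rigidity and the algebraic structure of the Zariski closure are essential. Third, propagating the notion of amenability correctly through the induction so that the terminal factor torus genuinely satisfies the amenable-image condition of (ii) rather than merely supporting an invariant measure. It is precisely the interplay between the representation-theoretic spectral gap, the dynamics of $\Gamma$ on projective space, and the Tits alternative that makes this direction delicate.
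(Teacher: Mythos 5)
You should first be aware that the paper does not prove this statement at all: it is imported verbatim as Theorem~5 of Bekka--Guivarc'h \cite{bekka2015spectral} and used as a black box to justify the ``technical assumptions'' in Theorem~\ref{theo:universality_tori}. So there is no in-paper proof to compare against, and your task was effectively to reconstruct a substantial external theorem. Your outline does identify the correct skeleton (Fourier duality $L^2_0(\mathbb{T}^d)\cong\ell^2(\mathbb{Z}^d\setminus\{0\})$, the observation that moduli of Fourier coefficients kill the translation phases, Rosenblatt/Schmidt for the easy direction, and Furstenberg-type rigidity on $\mathbb{P}^{d-1}(\mathbb{R})$ plus induction on dimension for the hard one), and the direction (ii)$\Rightarrow$(i) is essentially complete as you state it.

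The hard direction, however, has a genuine gap exactly where you flag it, and flagging it does not close it. Furstenberg's lemma applied to the limiting $\Gamma$-invariant measure $\nu$ on $\mathbb{P}^{d-1}(\mathbb{R})$ yields a dichotomy between precompact image in $\mathrm{PGL}_d(\mathbb{R})$ (which for a subgroup of $\mathrm{SL}_d(\mathbb{Z})$ forces finiteness, hence amenability --- fine) and a $\Gamma$-invariant finite union of proper \emph{real} projective subspaces carrying positive mass. This is not the same dichotomy as ``virtually solvable or proper invariant subspace'' (that is the Tits alternative, which you are conflating with Furstenberg here), and more importantly the invariant real subspace it produces has no reason to be defined over $\mathbb{Q}$, or even over $\overline{\mathbb{Q}}$, so ``Galois-averaging'' is not available as stated. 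Converting the real invariant subspace into a $\Gamma$-invariant primitive sublattice of $\mathbb{Z}^d$ --- which is what you need to manufacture the factor torus $\overline{\mathbb{T}}$ --- requires a separate arithmetic argument (in Bekka--Guivarc'h this goes through the $\mathbb{Q}[\Gamma]$-module structure of $\mathbb{Q}^d$ and the Zariski closure of $\Gamma$), and the induction on dimension must then be run so that the terminal factor has amenable image in $\mathrm{Aut}(\overline{\mathbb{T}})$, not merely an invariant measure on its projective space. As written, the proposal is an accurate map of the proof's difficulty rather than a proof; for the purposes of this paper the correct move is the one the authors make, namely to cite the result.
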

If condition $(ii)$ of Theorem \ref{theo:spectral_gap_tori} is false for the group generated by the transformations $(\Phi_i)_{i=1}^p$, the we deduce any $\ForwardPolicyStar$  induced by equation \eqref{equ:forward_policy} has spectral gap if $\forall i,\forwardalpha^i=\frac{1}{p}$.
Since each $\Phi_i$ keep the Lebesgue measure $\lambda$ invariant,  by universality master Theorem \ref{theo:universality_true}, the family is universal. Our technical condition of Theorem \ref{theo:universality_tori} is then 
"The set of transforms $(\Phi_i)_{i=1}^p$ is stable by inverse and the group generated by $(\Phi_i)_{i=1}^p$ violates condition $(ii)$."
In particular, if the linear part of $H$ is the whole $\mathrm{SL}(d,\mathbb Z)$, projections such as the one in condition $(ii)$ are never amenable since  $\mathrm{SL}(d,\mathbb Z)$ contains a free group with two generators.

\subsection{Universality on Spheres}
The universality Theorem on spheres is obtained as a consequence of the spectral gap for isometry spherical families.
Before stating the technical mathematical result we use, we recall that a real number $x\in \mathbb R$ is {\it algebraic} if there exists a non-zero polynomial $P\in \mathbb Z[X]$ with integral coefficients such that $P(x)=0$. The set of algebraic real number is denoted $\overline {\mathbb Q}$.
We use the following result.
\begin{theorem}[Reformulation of Theorem 1 of \cite{bourgain2012spectral}  ]
\label{theo:spectral_gap_sphere} 
 Assume that $(\Phi_1,\cdots, \Phi_p) \in \mathrm{SO}(d)\cap\mathrm{Mat}_{d\times d}(\overline {\mathbb Q})$, that the group generated by $\Phi_1,\cdots,\Phi_p$ is dense in $\mathrm{SO}(d,\mathbb R)$ and that $\forall i\in \{1,\cdots,p\}, \exists j \in \{1,\cdots,p\}, \Phi_i^{-1}=\Phi_j$. 
 Assume that, $\forall i, \forwardalpha^i = \frac{1}{p}$ then the associated Markov kernel $\ForwardPolicyStar$ given by equation \eqref{equ:forward_policy} has a spectral gap.
\end{theorem}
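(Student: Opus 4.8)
The plan is to recognize the Markov kernel $\ForwardPolicyStar$ of \eqref{equ:forward_policy} as a group-averaging (Hecke) operator and then invoke the spectral-gap theorem of \citet{bourgain2012spectral}, in the form extended to every compact simple Lie group by \citet{benoist2016spectral}. First I would compute the Koopman operator $\phi\mapsto \frac{d(\phi\lambda)\ForwardPolicyStar}{d\lambda}$ explicitly. Because each $\Phi_i$ is a rotation it preserves $\lambda$ and has unit Jacobian, so a change of variables turns the transfer operator into $\mathcal A\phi = \frac1p\sum_{i=1}^p \phi\circ\Phi_i^{-1}$; using that the family is stable under inverse, so that $\nu:=\frac1p\sum_i\delta_{\Phi_i}$ is a symmetric probability measure on $G:=\mathrm{SO}(d)$, this equals $\frac1p\sum_i\phi\circ\Phi_i$ and is self-adjoint on $L^2(\mathbb S^{d-1},\lambda)$. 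With the uniform policy $\forwardalpha^i\equiv 1/p$, establishing a spectral gap for $\ForwardPolicyStar$ thus amounts to showing $\|\mathcal A\|_{L^2_0}<1$, where $L^2_0$ denotes the mean-zero subspace.

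Second I would descend from the group to the sphere. The operator $\mathcal A$ is exactly the restriction to $L^2(\mathbb S^{d-1})=\bigoplus_{k\geq0}\mathcal H_k$ of the convolution operator $f\mapsto \nu * f$ on $L^2(G)$, where $\mathcal H_k$ is the $\mathrm{SO}(d)$-irreducible space of degree-$k$ spherical harmonics. By Peter--Weyl every nontrivial $\mathcal H_k$ occurs inside the regular representation $L^2(G)$, so a uniform operator-norm bound $\sup_{\rho\neq\mathbf 1}\|\rho(\nu)\|<1$ over all nontrivial irreducibles $\rho$ of $G$ immediately yields $\|\mathcal A\|_{L^2_0}<1$. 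This reduces the statement to a spectral gap on $L^2_0(G)$.

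Third I would check that the hypotheses match those of the cited theorem and invoke it. Its assumptions --- entries in $\overline{\mathbb Q}$, a symmetric generating set, and density of the generated subgroup in $G$ --- are precisely $(\Phi_i)_i\subset \mathrm{SO}(d)\cap \mathrm{Mat}_{d\times d}(\overline{\mathbb Q})$, the inverse-stability assumption, and the density assumption of the statement; the theorem then furnishes $\sup_{\rho\neq\mathbf 1}\|\rho(\nu)\|<1$, completing the argument. This also gives summable $L^2$-mixing coefficients $\gamma_n\leq\|\mathcal A\|_{L^2_0}^{2n}$, the form in which the gap feeds into Theorem \ref{theo:universality_true}.

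The main obstacle is not analytic but one of bookkeeping and citation-matching: one must ensure the chosen reference provides the gap for $\mathrm{SO}(d)$ rather than only $\mathrm{SU}(d)$ (which is why I would lean on \citet{benoist2016spectral} for the general compact simple case), and verify that the Peter--Weyl descent is legitimate, i.e. that each relevant $\mathcal H_k$ genuinely appears in $L^2(G)$ carrying the same operator, so that the group-level gap transfers without loss. By contrast, the identification of the Koopman operator with $\mathcal A$ and the verification of self-adjointness are routine once the rotation-invariance of $\lambda$ is used.
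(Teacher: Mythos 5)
The paper offers no proof of this statement at all: Theorem \ref{theo:spectral_gap_sphere} is presented purely as a ``reformulation'' of Theorem 1 of \citet{bourgain2012spectral} (with \citet{benoist2016spectral} mentioned in the main text for the general compact simple case), and the appendix moves straight on to deducing Theorem \ref{theo:universality_sphere} from it via the master Theorem \ref{theo:universality_true}. Your proposal therefore does more than the paper: it supplies the standard reduction that the word ``reformulation'' silently assumes. Your chain of reasoning is correct --- since each $\Phi_i$ preserves $\lambda$, the transfer operator of $\ForwardPolicyStar$ with uniform weights is the symmetric averaging operator $\mathcal A\phi=\frac1p\sum_i\phi\circ\Phi_i$, which is the restriction to $L^2$ of the sphere of convolution by the symmetric measure $\nu=\frac1p\sum_i\delta_{\Phi_i}$ on $G$; Peter--Weyl gives $\|\mathcal A\|_{L^2_0}\le\sup_{\rho\neq\mathbf 1}\|\rho(\nu)\|$ because every nontrivial irreducible constituent of $L^2_0$ of the sphere occurs in the regular representation; and the cited spectral-gap theorems bound the right-hand side away from $1$ under exactly the algebraicity, symmetry, and density hypotheses of the statement. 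You are also right to flag the $\mathrm{SO}(d)$ versus $\mathrm{SU}(d)$ issue --- Bourgain--Gamburd's Theorem 1 is stated for $\mathrm{SU}(d)$, so for the orthogonal group one genuinely needs the Benoist--de Saxc\'e extension, a point the paper glosses over. The only blemishes are cosmetic: the paper's main text works on $\mathbb S^d$ with $\mathrm{SO}_{d+1}$ while the appendix statement (and you) use $\mathrm{SO}(d)$ acting on $\mathbb S^{d-1}$, and your final bound $\gamma_n\le\|\mathcal A\|_{L^2_0}^{2n}$ correctly matches the paper's definition of summable $L^2$-mixing since $\widehat\lambda=\lambda$ is invariant here. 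In short: your route is not different from the paper's --- it is the proof the paper omits.
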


 With the technical condition "The set of transforms $(\Phi_i)_{i=1}^p$ is stable by inverse, the group generated by $(\Phi_i)_{i=1}^p$ is dense in $\mathrm{SO}(d,\mathbb R)$ and each of the $\Phi_i$ has algebraic coefficients", Theorem \ref{theo:universality_sphere} then follows from master Theorem \ref{theo:universality_true}.

\section{Quantitative Sampling Theorem}
\begin{theorem}
\label{theo:negative_control_appendix}
  Let $(\ForwardPolicyStar, \foutstar)$ be a generative flow and let $\Finit$ and $\Fterm$ initial and terminal distributions. Assume that $\Finit\neq 0$, and consider the sample $s_\tau$ of the generative flow Markov chain from $\Finit$ to $\Ftermhat$ then:
    \begin{equation}
        \TV\left(s_\tau \left\| \frac{1}{\Ftermhat(\Statespace )} \Ftermhat\right.\right) \leq  \frac{\delta\Finit(\Statespace)}{\Ftermhat(\Statespace)}
    \end{equation}
    with $\TV$  the total variation.
\end{theorem}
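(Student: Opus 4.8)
The plan is to deduce the statement from the exact sampling Theorem~\ref{theo:sampling_theorem} by a coupling argument that charges the entire discrepancy to the difference between the true initial law $\Finit$ and the virtual one $\Finithat$.

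First I would record the algebraic identity already noted before the statement. By definition of positive and negative parts, $\delta\Fterm - \delta\Finit = \Finit + \Finstar - \Fterm - \Foutstar$, and rearranging gives $\Finithat + \Foutstar\ForwardPolicyStar = \Ftermhat + \Foutstar$. Thus $(\ForwardPolicyStar,\foutstar)$ satisfies the flow-matching constraint \eqref{equ:FM_const} relative to the pair $(\Finithat,\Ftermhat)$; in particular $\Finithat(\Statespace) = \Ftermhat(\Statespace)$ since $\ForwardPolicyStar$ is a Markov kernel and so $\Foutstar\ForwardPolicyStar(\Statespace) = \Foutstar(\Statespace)$. Applying Theorem~\ref{theo:sampling_theorem} to this flow, the chain started from $\Finithat$ and terminated according to $\Ftermhat$ has endpoint $s_\tau$ distributed exactly as $\frac{1}{\Ftermhat(\Statespace)}\Ftermhat$.

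Next I would build the coupling. Since $\delta\Finit\geq 0$ and $\Finit\neq 0$, the normalized virtual initial law splits as the mixture $\frac{\Finithat}{\Finithat(\Statespace)} = p_0\,\frac{\Finit}{\Finit(\Statespace)} + (1-p_0)\,\frac{\delta\Finit}{\delta\Finit(\Statespace)}$ with weight $p_0 = \frac{\Finit(\Statespace)}{\Finithat(\Statespace)}$. I would realize both chains on one probability space: draw a Bernoulli variable that is heads with probability $p_0$; on heads both the true chain (from $\Finit$) and the virtual chain (from $\Finithat$) start from the \emph{same} point sampled from $\frac{\Finit}{\Finit(\Statespace)}$ and are then driven by identical transition randomness under $\ForwardPolicyStar$ and the identical $\Ftermhat$-based stopping rule; on tails the virtual chain instead starts from $\frac{\delta\Finit}{\delta\Finit(\Statespace)}$. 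Because both chains use the same forward policy and the same terminal measure to define $\tau$, on the event "heads" they occupy the same state at every time and make the same stopping decision, so their endpoints coincide.

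Finally I would invoke the coupling inequality: the total variation distance between the two endpoint laws is at most the probability the chains disagree, which is bounded by $\mathbb P(\text{tails}) = 1-p_0 = \frac{\delta\Finit(\Statespace)}{\Finithat(\Statespace)} = \frac{\delta\Finit(\Statespace)}{\Ftermhat(\Statespace)}$, using $\Finithat(\Statespace)=\Ftermhat(\Statespace)$. As the virtual endpoint has law $\frac{1}{\Ftermhat(\Statespace)}\Ftermhat$, this yields precisely the claimed bound. The main subtlety is the legitimacy of the coupling through the random stopping time, namely that once the two chains share a state they can be driven to remain identical up to and including $\tau$; this holds exactly because both chains share the policy $\ForwardPolicyStar$ and the same $\Ftermhat$ determining the stopping probabilities.
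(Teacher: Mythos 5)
Your proposal is correct and follows essentially the same route as the paper: both observe that $(\ForwardPolicyStar,\foutstar)$ is flow-matching for the virtual pair $(\Finithat,\Ftermhat)$, apply the exact sampling theorem to that virtual flow, and then exploit the decomposition $\Finithat=\Finit+\delta\Finit$ with weight $\Finit(\Statespace)/\Ftermhat(\Statespace)$. The only cosmetic difference is that the paper writes the final step as the algebraic identity $\alpha\,\mathrm{law}(s_\tau)+(1-\alpha)\,\frac{1}{\delta\Finit(\Statespace)}\delta\Finit\ForwardPolicy^{\tau}=\frac{1}{\Ftermhat(\Statespace)}\Ftermhat$ and reads off the total variation bound, whereas you realize the same mixture as an explicit coupling and invoke the coupling inequality; these are equivalent.
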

\begin{proof}
    For any generative flow $\mathbb F:=(\ForwardPolicy^*,\Fout^*)$, define $\ForwardPolicy^{\tau,\mathbb F}$ the kernel $x\mapsto (s_{\tau}|s_1=x)$  for the stopping condition $\Ftermhat$. The sampling distribution $s_\tau$ of $\mathbb F$ is then $\frac{1}{\Finit(\Statespace)}\Finit \ForwardPolicy^{\tau,\mathbb F}$. 
    Furthermore, we have $\delta\Fterm(\Statespace)-\delta\Finit(\Statespace)=\Finit(\Statespace)+\Fout^* \pi^*(\Statespace)-\Fout^*(\Statespace)-\Fterm(\Statespace)=\Finit(\Statespace)-\Fterm(\Statespace)$.
    In particular, $\Ftermhat(\Statespace)\geq \Finit(\Statespace)> 0$ so $\Ftermhat \neq 0$.
    
    We notice that the generative flow $\mathbb F':=(\ForwardPolicy^*,\Fout^*)$ satisfies the flow-matching constraint from $\Finithat$ to $\Ftermhat$  and that $\pi^{\tau,\mathbb F'} = \pi^{\tau,\mathbb F}$.  
    Applying the sampling Theorem to $\mathbb F'$, we obtain 
    \begin{eqnarray}
        \Finithat\ForwardPolicy^{\tau,\mathbb F'} &=& \Ftermhat\\
        (\Finit+\delta\Finit)\ForwardPolicy^{\tau,\mathbb F'} &=& \Ftermhat\\
        \Finit\ForwardPolicy^{\tau,\mathbb F'}+\delta\Finit\ForwardPolicy^{\tau,\mathbb F'} &=& \Ftermhat \\
         \alpha\times \underbrace{\left(\frac{1}{\Finit(\Statespace)}\Finit \ForwardPolicy^{\tau,\mathbb F}\right)}_{s_\tau}+(1-\alpha) \times  \frac{1}{\delta\Finit(\Statespace)}\delta\Finit\ForwardPolicy^{\tau,\mathbb F'} &=& \frac{1}{\Ftermhat(\Statespace)}\Ftermhat
    \end{eqnarray}
    with $\alpha = \frac{\Finit(\Statespace)}{\Ftermhat(\Statespace)}=\frac{\Ftermhat(\Statespace)-\delta\Finit(\Statespace)}{\Ftermhat(\Statespace)}=1-\frac{\delta\Finit(\Statespace)}{\Ftermhat(\Statespace)}$. 
    The result follows.
\end{proof}

\section{Implementation considerations}
\subsection{Density filtering}\label{sec:filter}
Since EGFs give us access to a tractable density $\ftermhat$, we can estimate its mean $m$ and standard deviation $\sigma$ on the state space to filter out any unwanted outliers when sampling. By choosing a lower bound saturation value $\widehat{f}_{sat} = m - k\sigma$ where $k$ defines how strong the filter should be, we can then define a stricter sampling density $\tilde{f}_{\mathrm{term}}$ as:
\begin{equation}
\tilde{f}_{term} = 
    \begin{cases}
       \text{$\ftermhat$} & \text{if $\ftermhat$} \geq \ftermhat\\
       0 & \text{otherwise}\\
    \end{cases}       
\end{equation}

Since the new sampling density $\tilde{f}_{\mathrm{term}} = 0$ for all points of low true density, the next state of the Markov Chain is guaranteed not be the sink state. This ensures the EGF only generates samples associated with high density. The factor $k$ can be chosen manually to obtain a more strict or lenient filter, or it can be automatically optimized by recalculating the negative log-likelihood on the training dataset to systematically evaluate all $k$ values.

\begin{figure}[H]
    \centering
    \includegraphics[width=0.5\linewidth,clip]{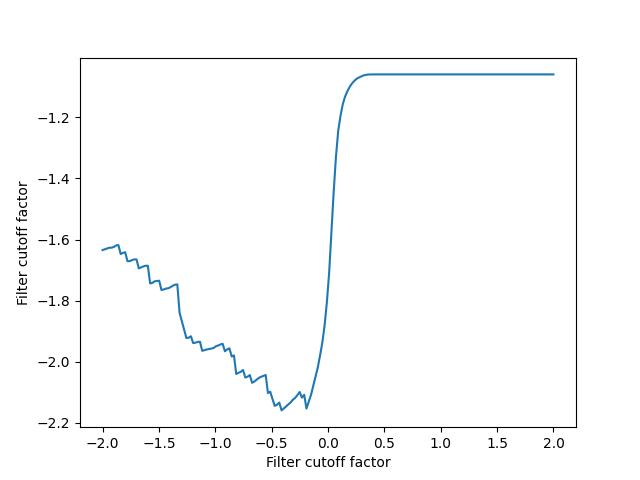}
    \caption{NLL estimation for a range of values of $k$ for a given model on the volcano dataset.}
\end{figure}

Note that this filtering is applied on the density rather than on the samples, meaning that no samples are filtered out. Therefore, this simply has the effect of preventing the trajectories from stopping at low density points. Figure~\ref{fig:filter_effect} demonstrates that using the optimal filter value concentrates the samples in the zones of high true density, bypassing the uniform background noise that would otherwise be found prior to filtering.
    
\begin{figure}[H]
    \centering

    \captionsetup[subfigure]{labelformat=simple}

    % Row 1
    \begin{subfigure}{\linewidth}
        \centering
        \begin{subfigure}{0.33\linewidth}
            \centering
            \includegraphics[width=\linewidth,trim={14cm 6cm 14cm 7cm},clip]{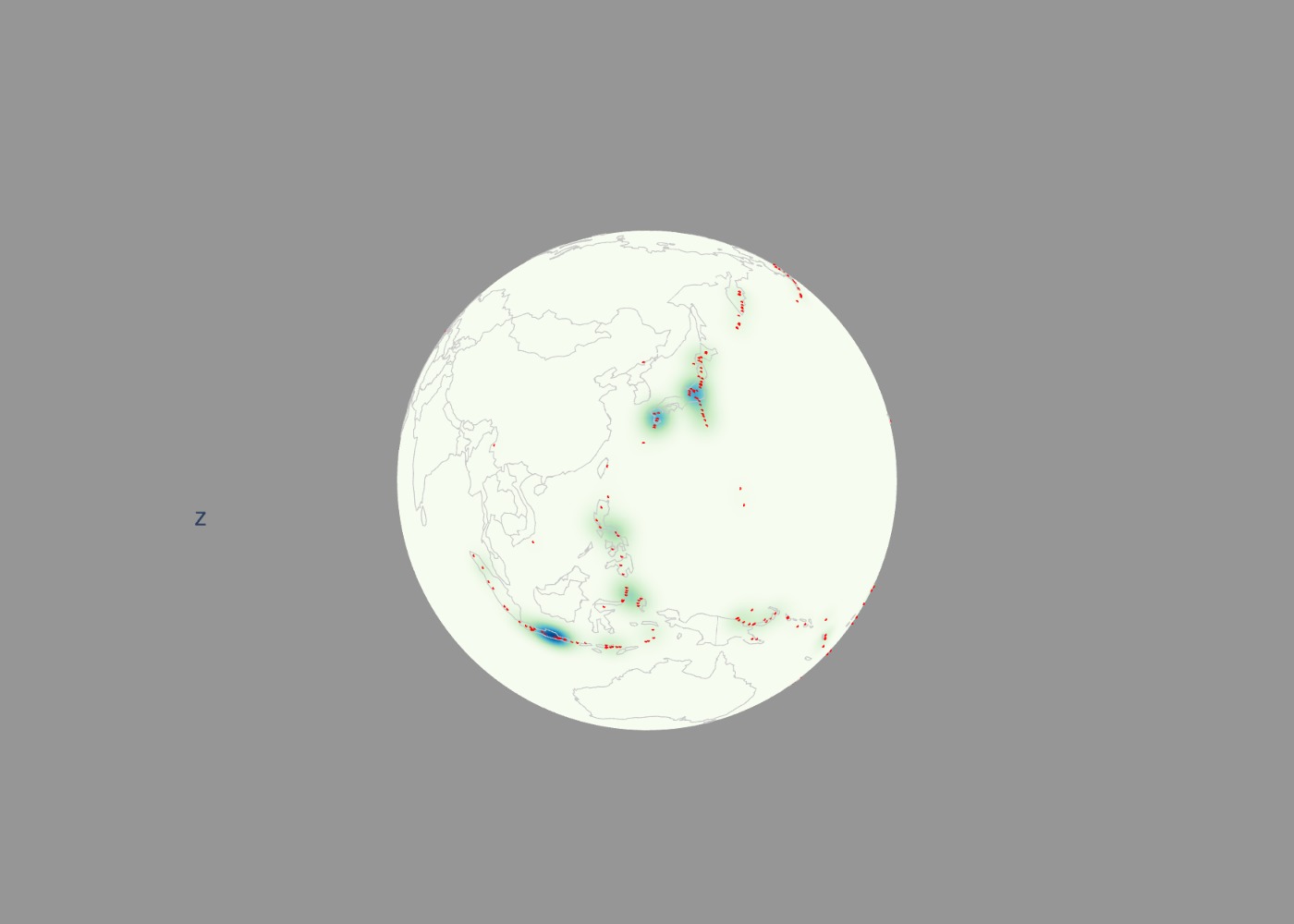}
            \caption{Target distribution}
        \end{subfigure}
        \hfill
        \begin{subfigure}{0.33\linewidth}
            \centering
            \includegraphics[width=\linewidth,trim={14cm 6cm 14cm 7cm},clip]{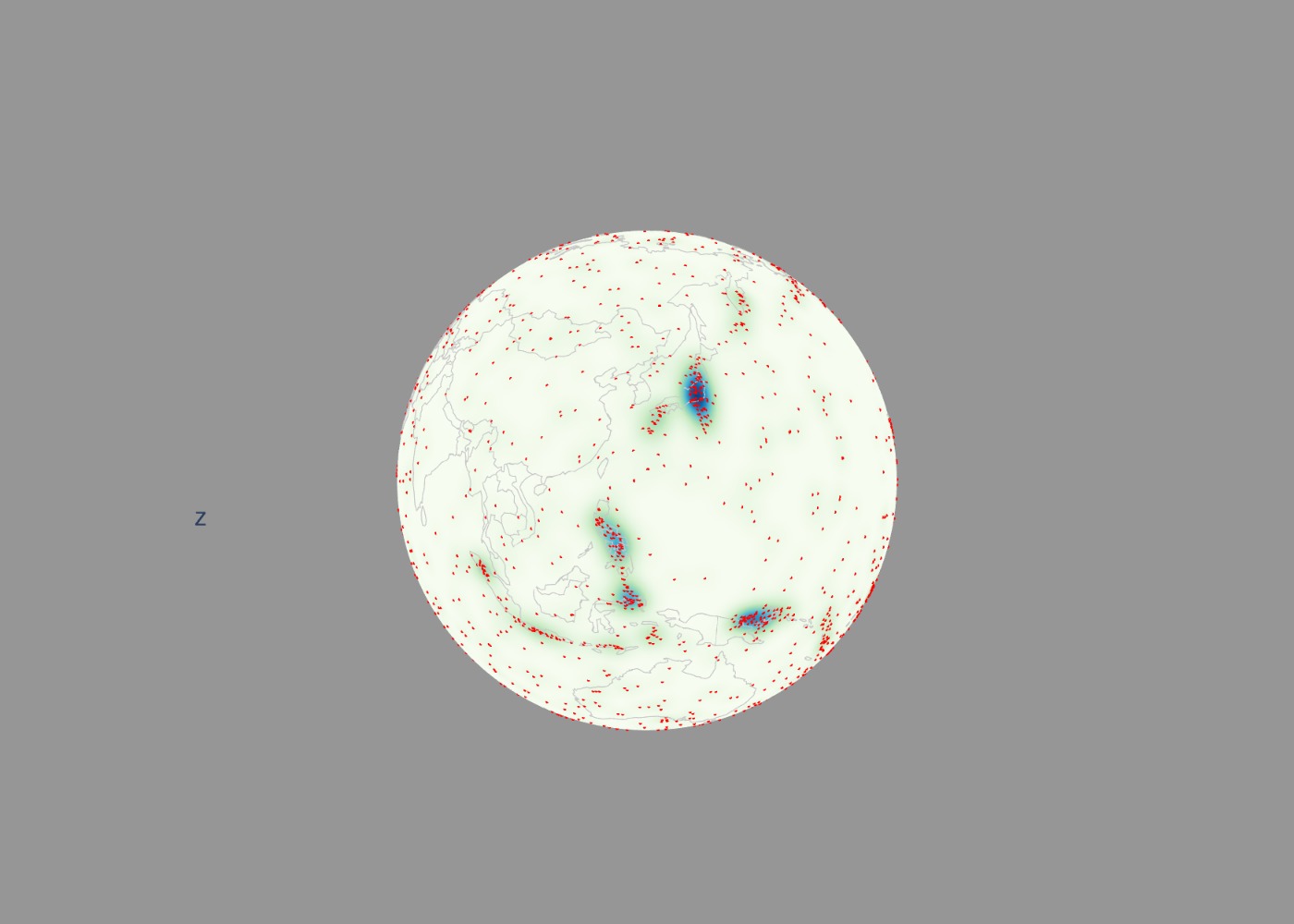}
            \caption{Unfiltered samples}
        \end{subfigure}
        \begin{subfigure}{0.33\linewidth}
            \centering
            \includegraphics[width=\linewidth,trim={14cm 6cm 14cm 7cm},clip]{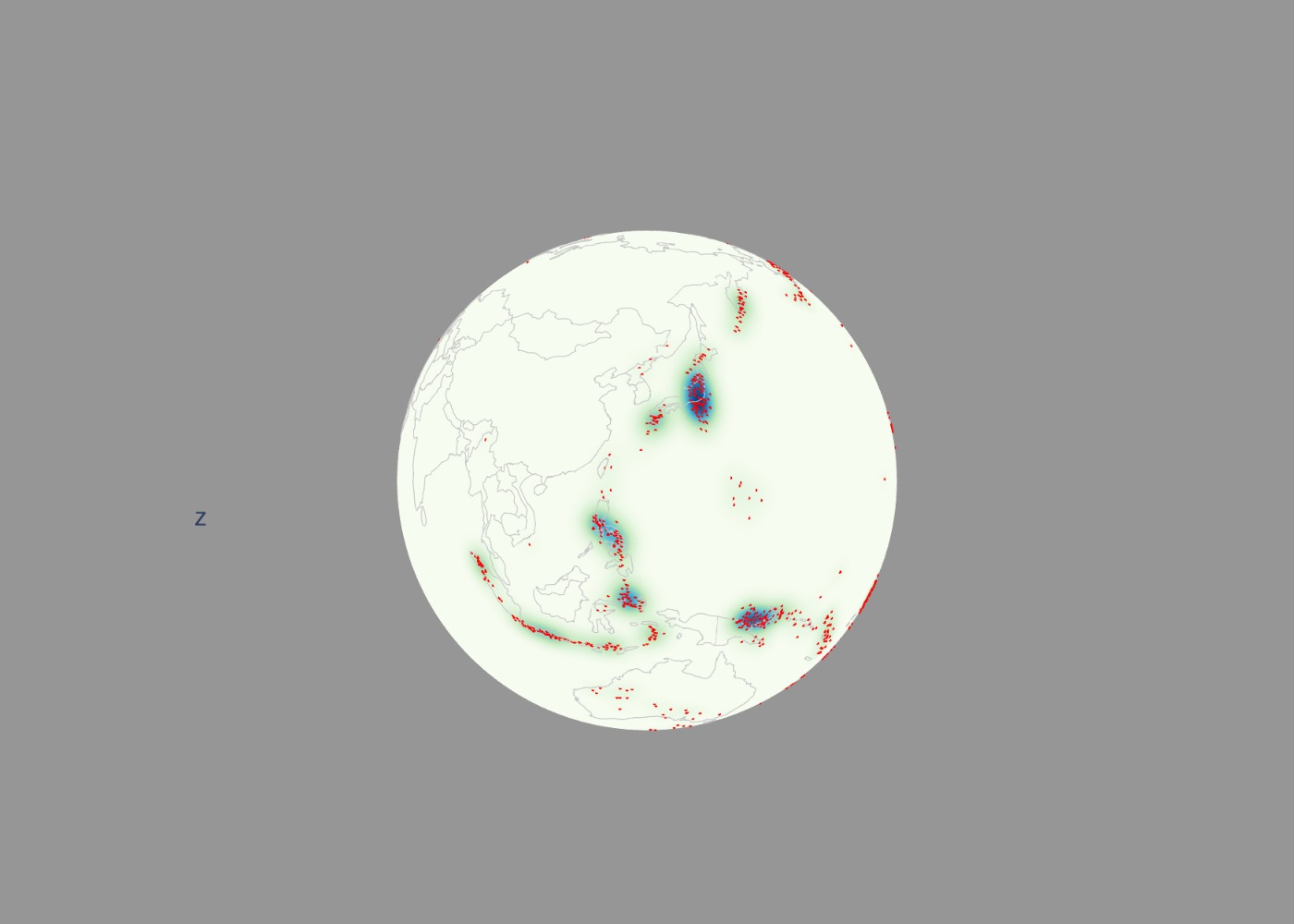}
            \caption{Samples with optimal filter}
        \end{subfigure}
        
    \end{subfigure}

    \caption{Comparison of the EGF's samples using the same model with and without a density filter.}
    \label{fig:filter_effect}
\end{figure}

\section{Ablation study on the size of EGF}

%fout then selector

\begin{figure}
    \centering
    \captionsetup[subfigure]{labelformat=simple}
    \begin{subfigure}{\linewidth}
        \centering
        \begin{subfigure}{0.19\linewidth}
            \centering
            \includegraphics[width=\linewidth,trim={4cm 5cm 6cm 5cm},clip]{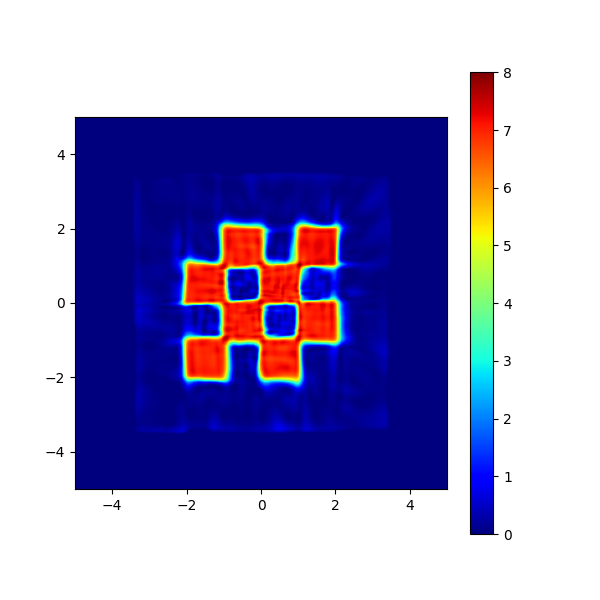}
        \end{subfigure}
        \hfill
        \begin{subfigure}{0.19\linewidth}
            \centering
            \includegraphics[width=\linewidth,trim={4cm 5cm 6cm 5cm},clip]{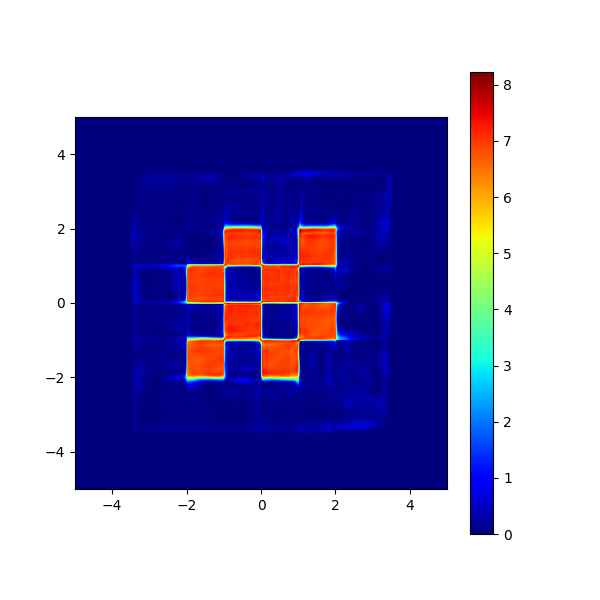}
        \end{subfigure}
        \hfill
        \begin{subfigure}{0.19\linewidth}
            \centering
            \includegraphics[width=\linewidth,trim={4cm 5cm 6cm 5cm},clip]{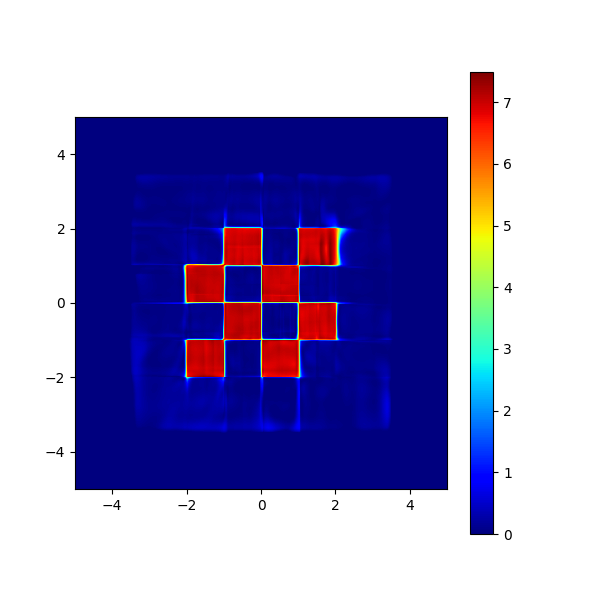}
        \end{subfigure}
        \hfill
        \begin{subfigure}{0.19\linewidth}
            \centering
            \includegraphics[width=\linewidth,trim={4cm 5cm 6cm 5cm},clip]{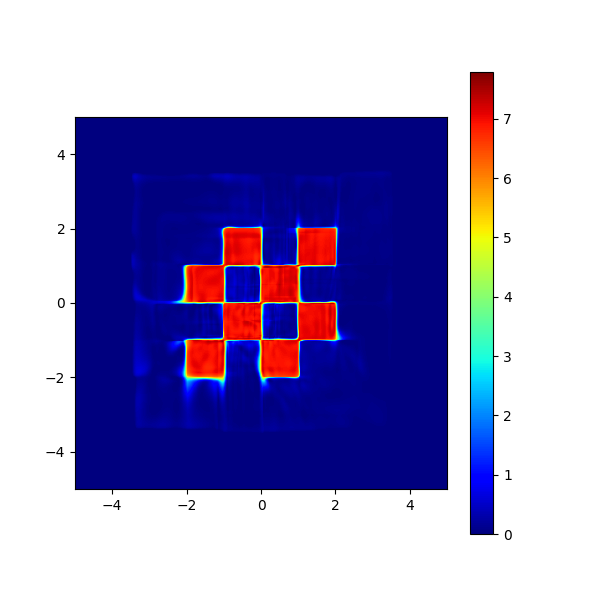}
        \end{subfigure}
        \hfill
        \begin{subfigure}{0.19\linewidth}
            \centering
            \includegraphics[width=\linewidth,trim={4cm 5cm 6cm 5cm},clip]{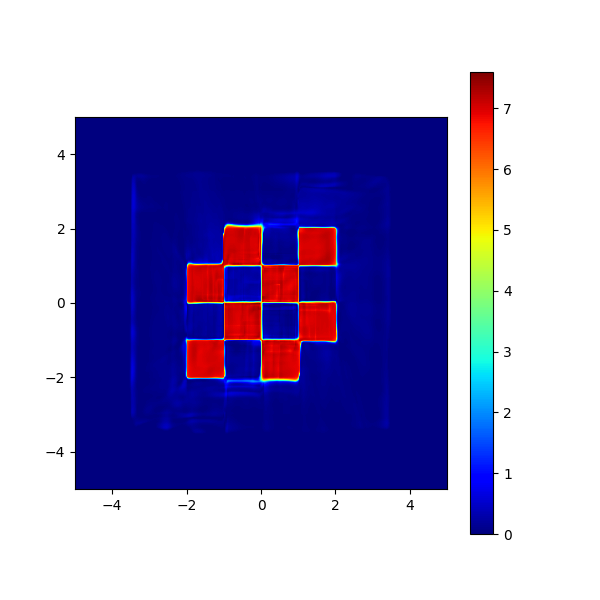}
        \end{subfigure}
        \caption{ From left to right, the depth of $\foutstar$ is kept constant while $\forwardalpha$ depth increases.}
    \end{subfigure}
    % Row 2
    \begin{subfigure}{\linewidth}
        \centering
        \begin{subfigure}{0.19\linewidth}
            \centering
            \includegraphics[width=\linewidth,trim={4cm 5cm 6cm 5cm},clip]{ablation/22.png}
        \end{subfigure}
        \hfill
        \begin{subfigure}{0.19\linewidth}
            \centering
            \includegraphics[width=\linewidth,trim={4cm 5cm 6cm 5cm},clip]{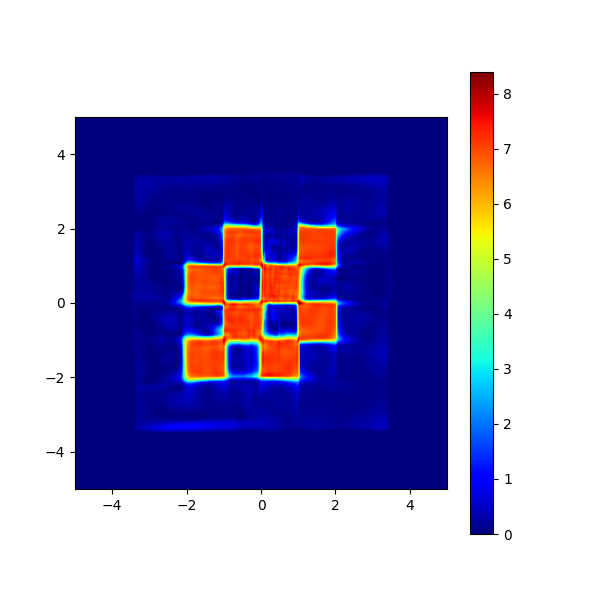}
        \end{subfigure}
        \hfill
        \begin{subfigure}{0.19\linewidth}
            \centering
            \includegraphics[width=\linewidth,trim={4cm 5cm 6cm 5cm},clip]{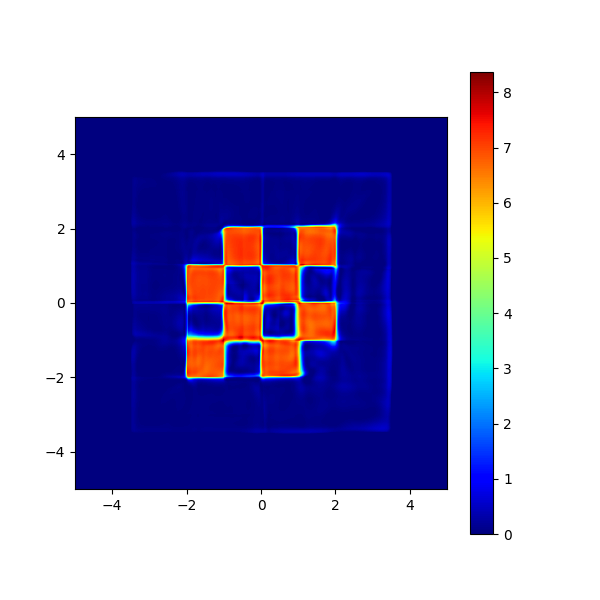}
        \end{subfigure}
        \hfill
        \begin{subfigure}{0.19\linewidth}
            \centering
            \includegraphics[width=\linewidth,trim={4cm 5cm 6cm 5cm},clip]{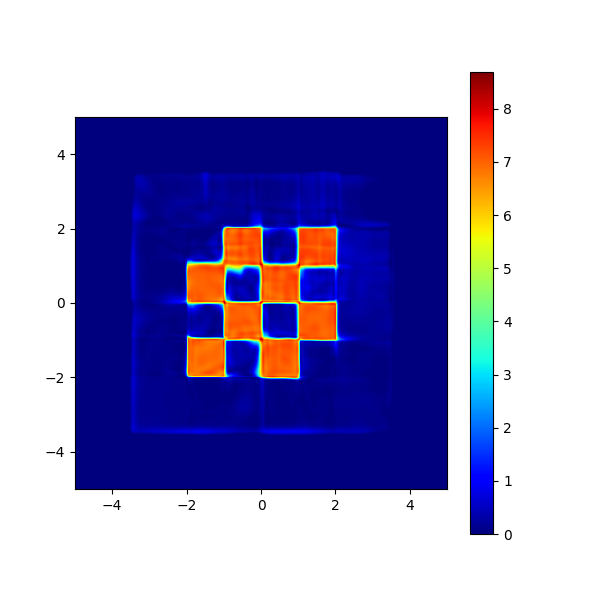}
        \end{subfigure}
        \hfill
        \begin{subfigure}{0.19\linewidth}
            \centering
            \includegraphics[width=\linewidth,trim={4cm 5cm 6cm 5cm},clip]{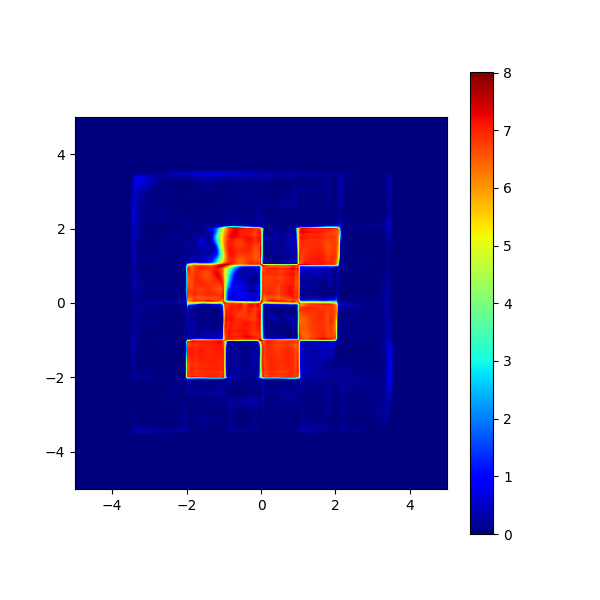}
        \end{subfigure}
        \caption{From left to right, the depth of $\forwardalpha$ is kept constant while $\foutstar$ depth increases.} 
    \end{subfigure}

    \caption{Variation of $\foutstar$ and $\forwardalpha$ depths from 2x32 to 6x32. The theoretical prediction is that $\forwardalpha$ may be kept constant while $\foutstar$ is trained. Keeping $\forwardalpha$ constant indeed yields improvements, however the improvement obtained by increasing the depth of $\forwardalpha$ instead of $\foutstar$ is comparatively better.
    }
    \label{fig:toyimg}
\end{figure}

\begin{figure}
    \centering
    \captionsetup[subfigure]{labelformat=simple}
    \begin{subfigure}{\linewidth}
        \centering
        \begin{subfigure}{0.48\linewidth}
            \centering
            \includegraphics[width=\linewidth]{images/checker_minimal.jpg}
        \end{subfigure}
        \hfill
        \begin{subfigure}{0.48\linewidth}
            \centering
            \includegraphics[width=\linewidth]{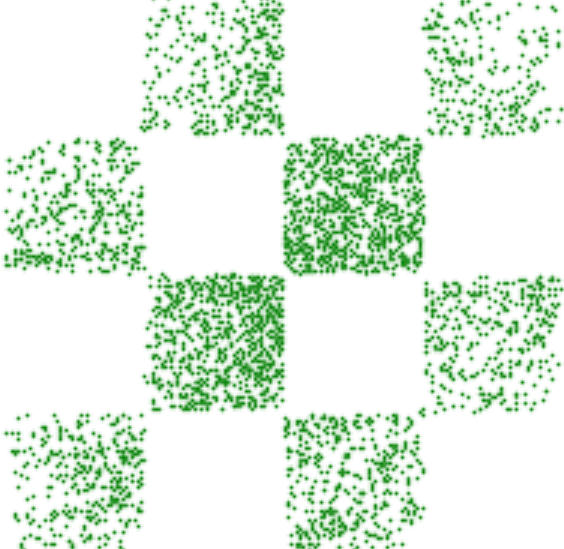}
        \end{subfigure}
    \end{subfigure}
    \caption{Theoretically minimal EGF with four affine moves on the torus $\mathbb T^2$ ie two generators of $\mathrm{SL}(2,\mathbb Z)$ together with their inverse. The MLPs parameterizing $\foutstar$ and $\forwardalpha$ have size 4x128. We see that the generated checkerboard has high quality, sharp boundaries but some under densities}
    \label{fig:toyimg}
\end{figure}

\begin{figure}
    \centering
    \captionsetup[subfigure]{labelformat=simple}
    \begin{subfigure}{\linewidth}
        \centering
        \begin{subfigure}{0.24\linewidth}
            \centering
            \includegraphics[width=\linewidth,trim={3.5cm 4.5cm 5.5cm 4.5cm},clip]{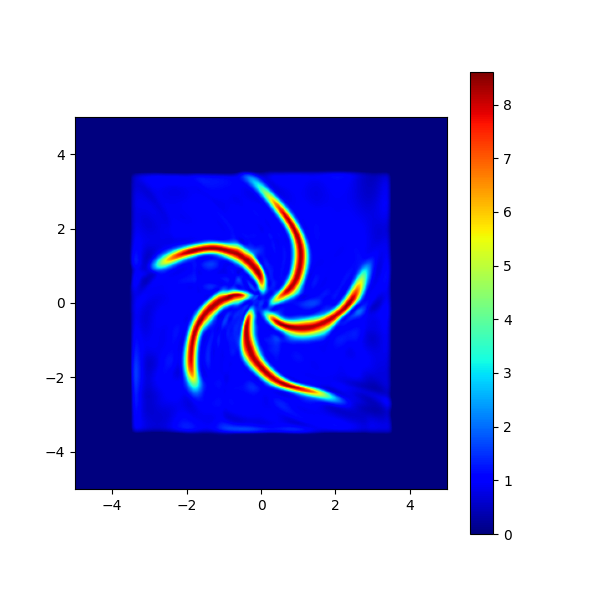}
        \end{subfigure}
        \hfill
        \begin{subfigure}{0.24\linewidth}
            \centering
            \includegraphics[width=\linewidth,trim={3.5cm 4.5cm 5.5cm 4.5cm},clip]{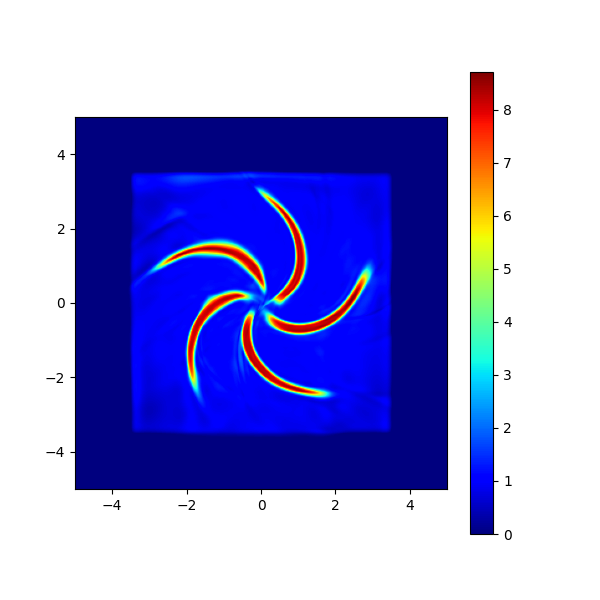}
        \end{subfigure}
        \hfill
        \begin{subfigure}{0.24\linewidth}
            \centering
            \includegraphics[width=\linewidth,trim={3.5cm 4.5cm 5.5cm 4.5cm},clip]{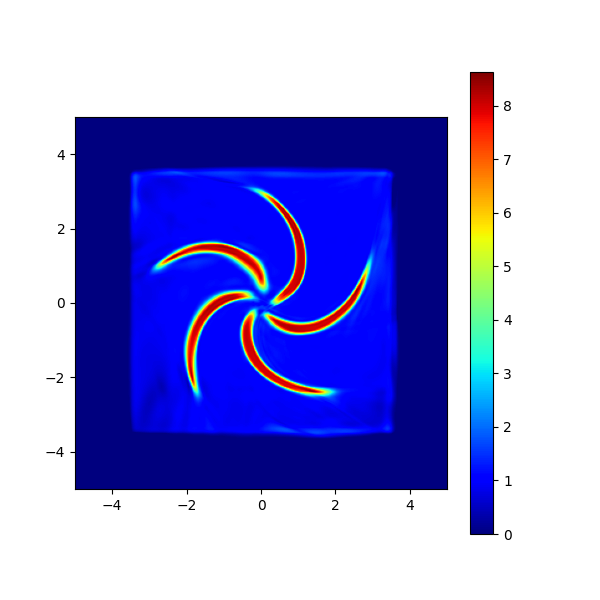}
        \end{subfigure}
        \hfill
        \begin{subfigure}{0.24\linewidth}
            \centering
            \includegraphics[width=\linewidth,trim={3.5cm 4.5cm 5.5cm 4.5cm},clip]{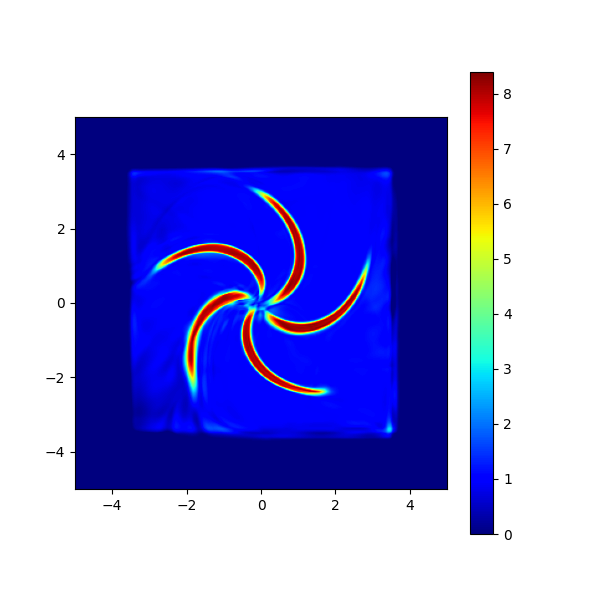}
        \end{subfigure}
        \hfill
        \begin{subfigure}{0.24\linewidth}
            \centering
            \includegraphics[width=\linewidth,trim={3.5cm 4.5cm 5.5cm 4.5cm},clip]{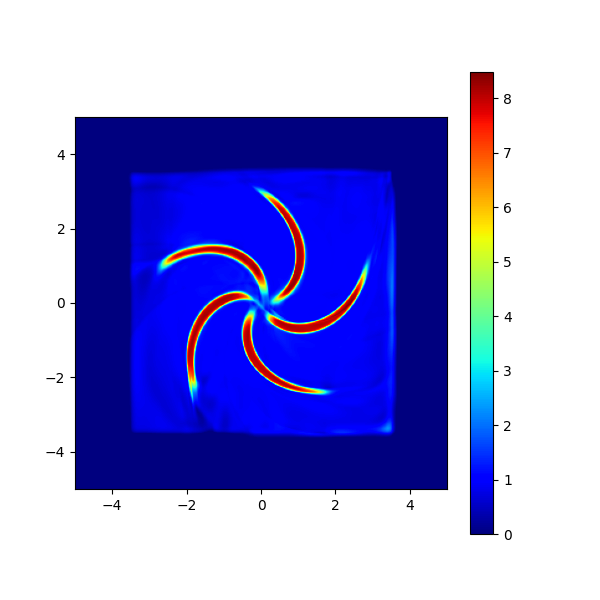}
        \end{subfigure}
        \hfill
        \begin{subfigure}{0.24\linewidth}
            \centering
            \includegraphics[width=\linewidth,trim={3.5cm 4.5cm 5.5cm 4.5cm},clip]{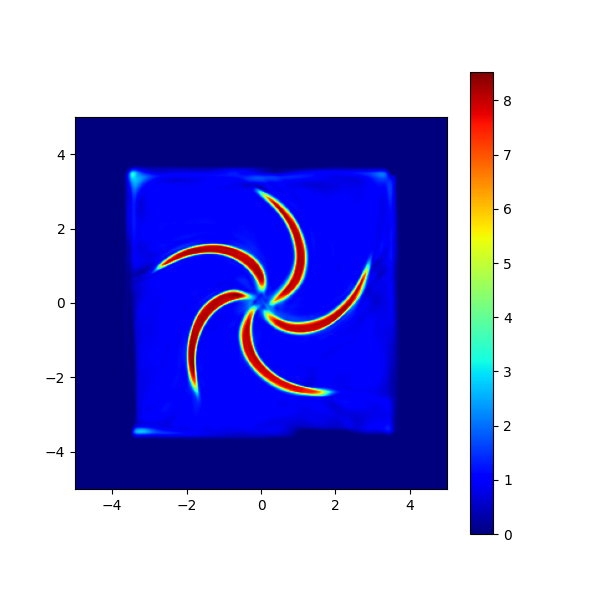}
        \end{subfigure}
        \hfill
        \begin{subfigure}{0.24\linewidth}
            \centering
            \includegraphics[width=\linewidth,trim={3.5cm 4.5cm 5.5cm 4.5cm},clip]{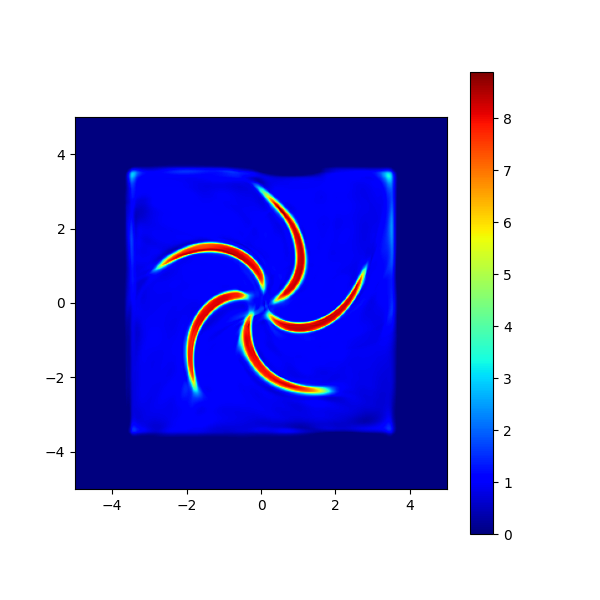}
        \end{subfigure}
        \hfill
        \begin{subfigure}{0.24\linewidth}
            \centering
            \includegraphics[width=\linewidth,trim={3.5cm 4.5cm 5.5cm 4.5cm},clip]{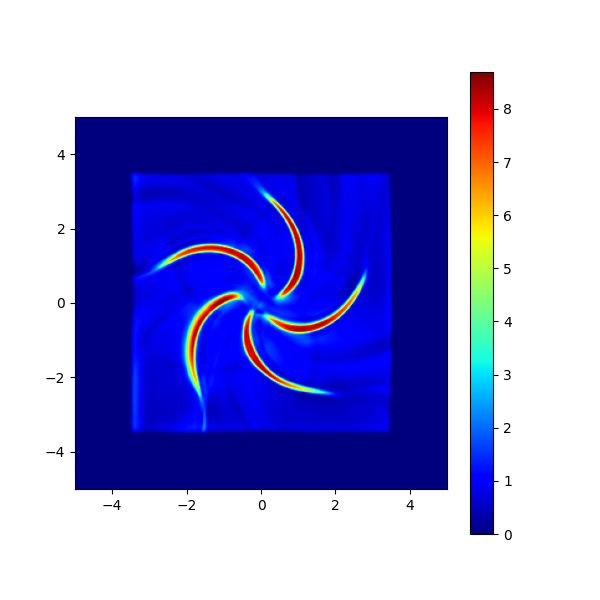}
        \end{subfigure}
        \caption{Generation of spin wheel with variations of MLPs for both $\foutstar$ and $\forwardalpha$ from 2x32 to 5x32 (TOP) and 6x32 to 9x32 (BOT). }
    \end{subfigure}

      \begin{subfigure}{\linewidth}
        \centering
        \begin{subfigure}{0.24\linewidth}
            \centering
            \includegraphics[width=\linewidth,trim={3.5cm 4.5cm 5.5cm 4.5cm},clip]{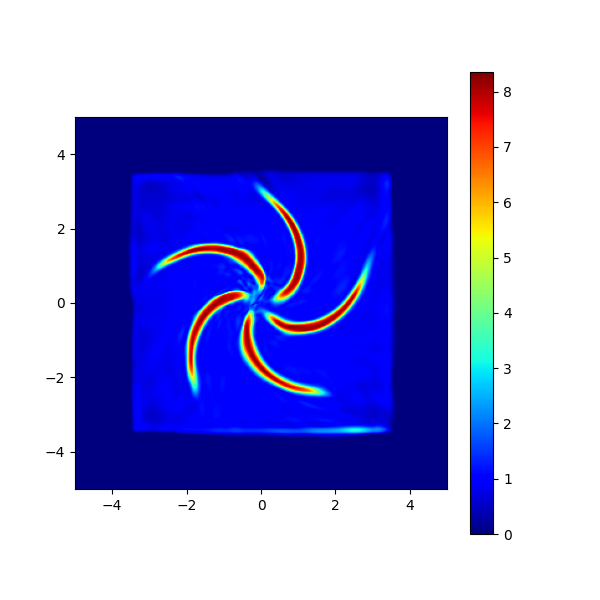}
        \end{subfigure}
        \hfill
       \begin{subfigure}{0.24\linewidth}
            \centering
            \includegraphics[width=\linewidth,trim={3.5cm 4.5cm 5.5cm 4.5cm},clip]{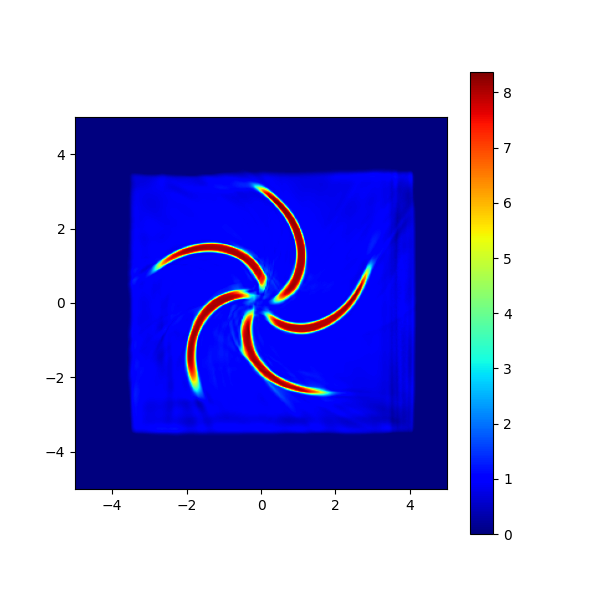}
        \end{subfigure}
        \hfill
        \begin{subfigure}{0.24\linewidth}
            \centering
            \includegraphics[width=\linewidth,trim={3.5cm 4.5cm 5.5cm 4.5cm},clip]{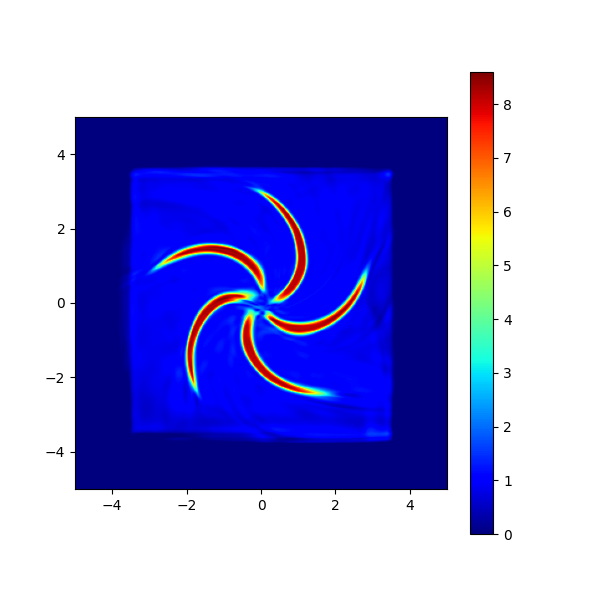}
        \end{subfigure}
        \hfill
        \begin{subfigure}{0.24\linewidth}
            \centering
            \includegraphics[width=\linewidth,trim={3.5cm 4.5cm 5.5cm 4.5cm},clip]{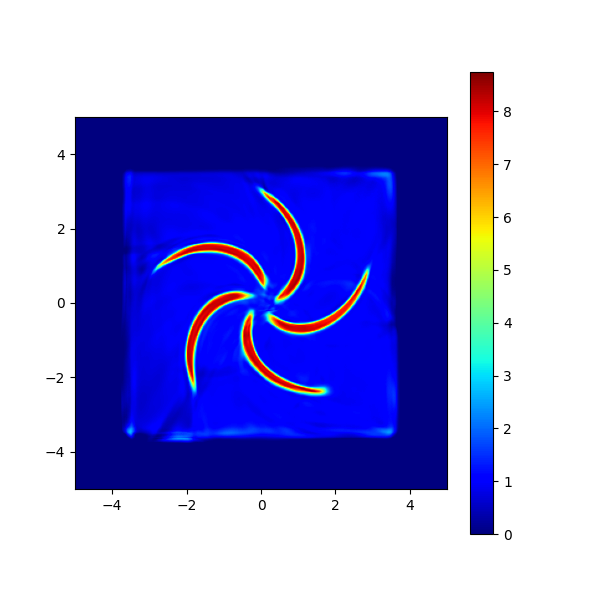}
        \end{subfigure}
        \hfill
        \begin{subfigure}{0.24\linewidth}
            \centering
            \includegraphics[width=\linewidth,trim={3.5cm 4.5cm 5.5cm 4.5cm},clip]{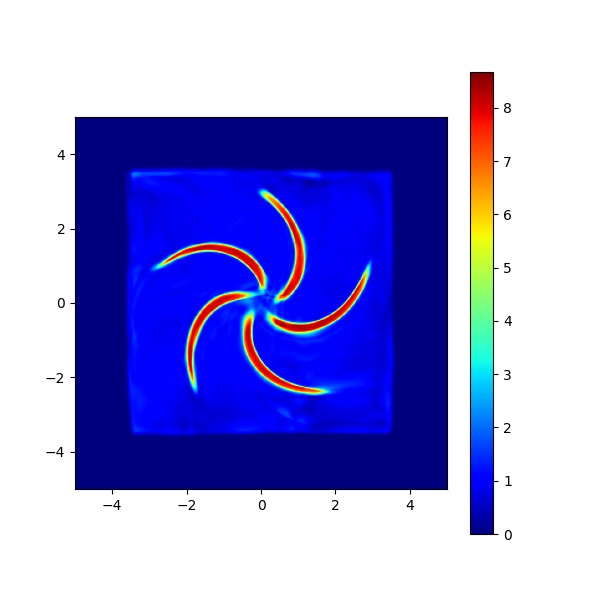}
        \end{subfigure}
        \hfill
        \begin{subfigure}{0.24\linewidth}
            \centering
            \includegraphics[width=\linewidth,trim={3.5cm 4.5cm 5.5cm 4.5cm},clip]{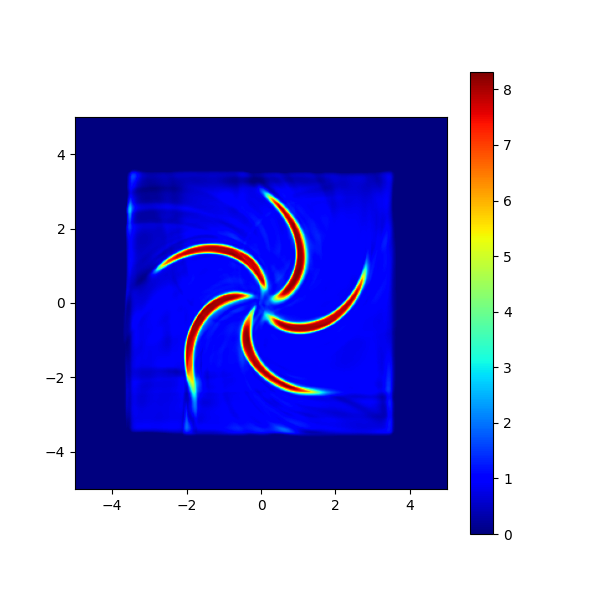}
        \end{subfigure}
        \hfill
        \begin{subfigure}{0.24\linewidth}
            \centering
            \includegraphics[width=\linewidth,trim={3.5cm 4.5cm 5.5cm 4.5cm},clip]{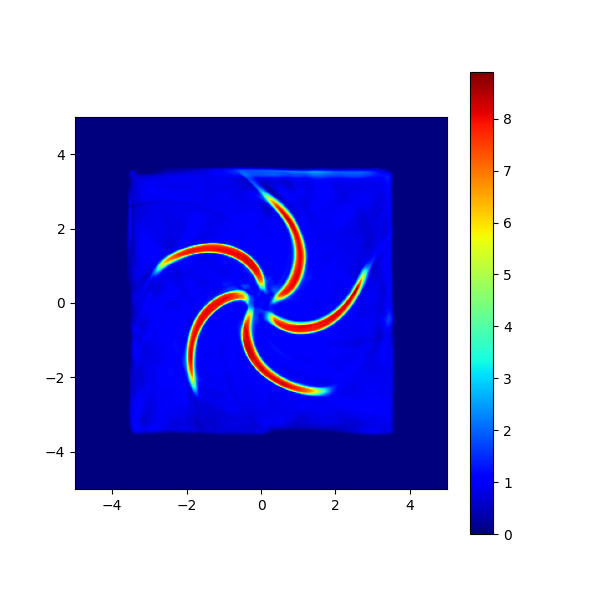}
        \end{subfigure}
        \hfill
        \begin{subfigure}{0.24\linewidth}
            \centering
            \includegraphics[width=\linewidth,trim={3.5cm 4.5cm 5.5cm 4.5cm},clip]{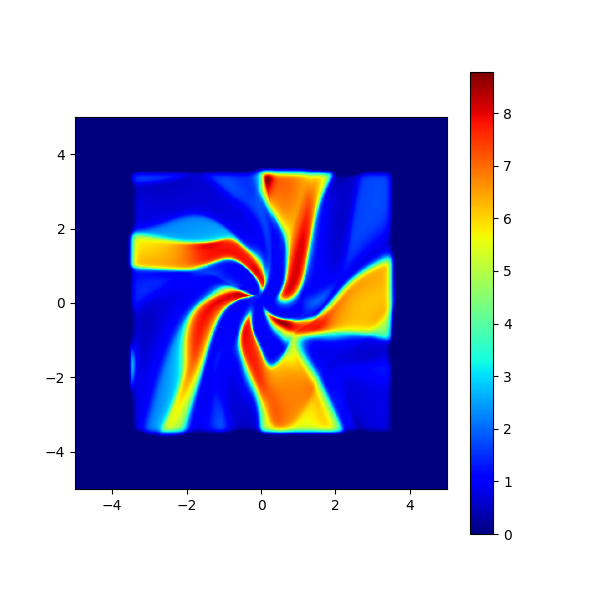}
        \end{subfigure}
        \hfill
       \caption{Generation of spin wheel with variations of MLPs for both $\foutstar$ and $\forwardalpha$ from 2x64 to 5x64 (TOP) and 6x64 to 9x64 (BOT). }
    \end{subfigure}

    \caption{Variation of $\foutstar$ and $\forwardalpha$ depths and width from 2x32 to 9x64. Increasing depth and width increases quality until depth 4 then the training seemingly becomes lest stable as shown by the 9x64 experiment. Since we keep the learning rate constant at 1e-3, it is likely that the MLPs training becomes unstable.
    The state space is $\Statespace = \mathbb R^2$,  to ensure $\Fout(\Statespace)<+\infty$, we constrain $\foutstar$ to be zero outside $[-4,4]^2$. 
The EGF has 8 transformations which are translations, while the MLPs for $\forwardalpha$ and $\foutstar$ vary from 2x32 (2 hidden layer of width 32) to 9x64. Learning rate is kept at 0.001.
    }
    \label{fig:toyimg}
\end{figure}

\newpage
\begin{figure}
    \centering
    \captionsetup[subfigure]{labelformat=simple}
    \begin{subfigure}{\linewidth}
        \centering
        \begin{subfigure}{0.24\linewidth}
            \centering
            \includegraphics[width=\linewidth,trim={3.5cm 4.5cm 5.5cm 4.5cm},clip]{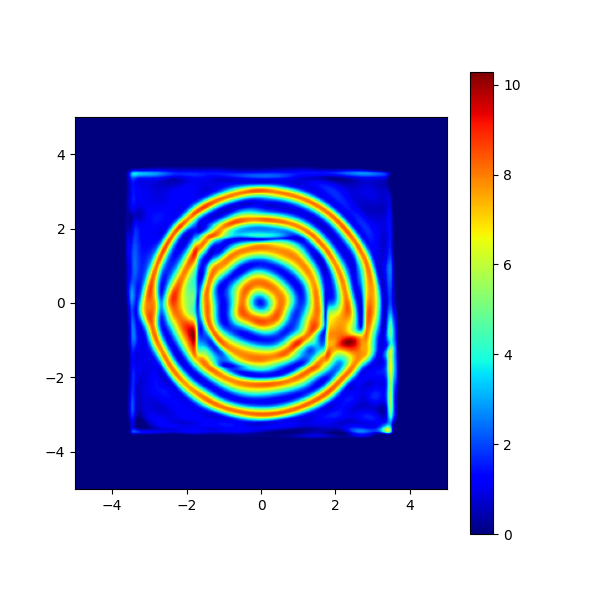}
        \end{subfigure}
        \hfill
        \begin{subfigure}{0.24\linewidth}
            \centering
            \includegraphics[width=\linewidth,trim={3.5cm 4.5cm 5.5cm 4.5cm},clip]{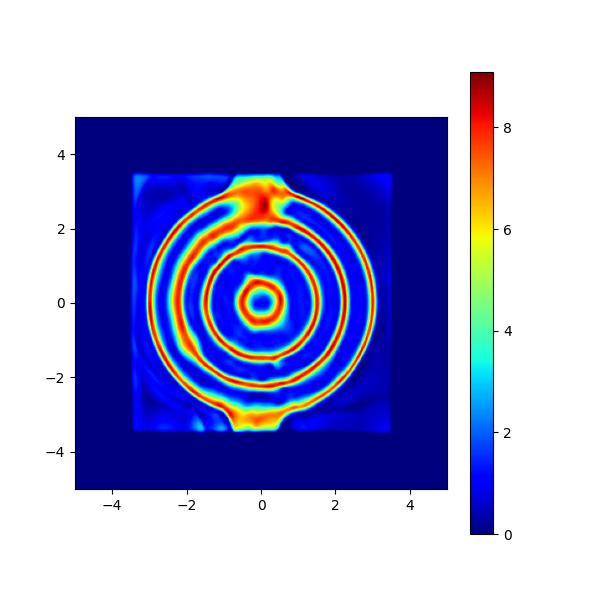}
        \end{subfigure}
        \hfill
        \begin{subfigure}{0.24\linewidth}
            \centering
            \includegraphics[width=\linewidth,trim={3.5cm 4.5cm 5.5cm 4.5cm},clip]{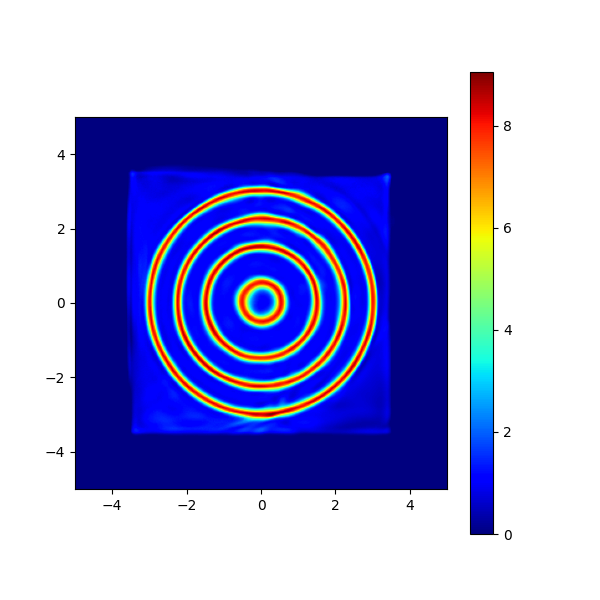}
        \end{subfigure}
        \hfill
        \begin{subfigure}{0.24\linewidth}
            \centering
            \includegraphics[width=\linewidth,trim={3.5cm 4.5cm 5.5cm 4.5cm},clip]{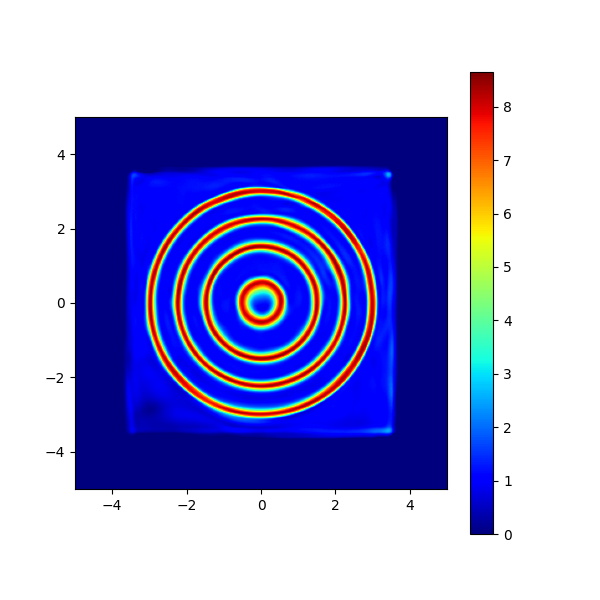}
        \end{subfigure}
        \hfill
        \begin{subfigure}{0.24\linewidth}
            \centering
            \includegraphics[width=\linewidth,trim={3.5cm 4.5cm 5.5cm 4.5cm},clip]{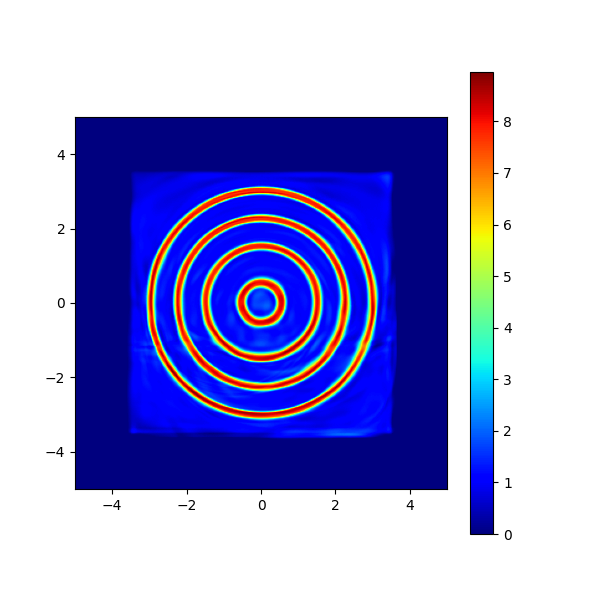}
        \end{subfigure}
        \hfill
        \begin{subfigure}{0.24\linewidth}
            \centering
            \includegraphics[width=\linewidth,trim={3.5cm 4.5cm 5.5cm 4.5cm},clip]{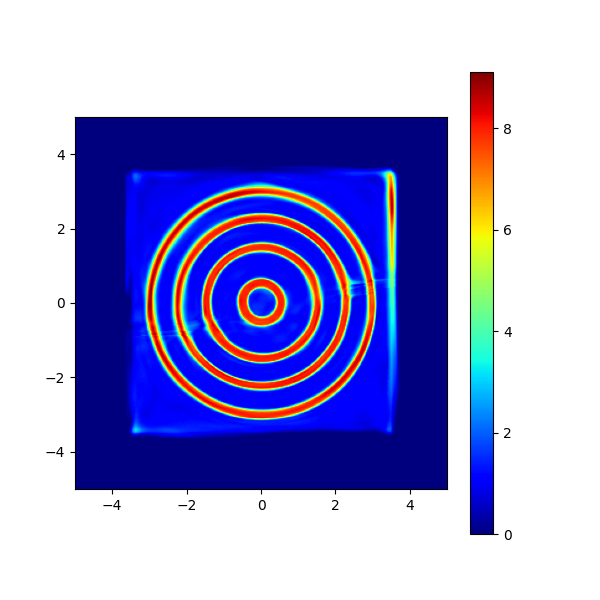}
        \end{subfigure}
        \hfill
        \begin{subfigure}{0.24\linewidth}
            \centering
            \includegraphics[width=\linewidth,trim={3.5cm 4.5cm 5.5cm 4.5cm},clip]{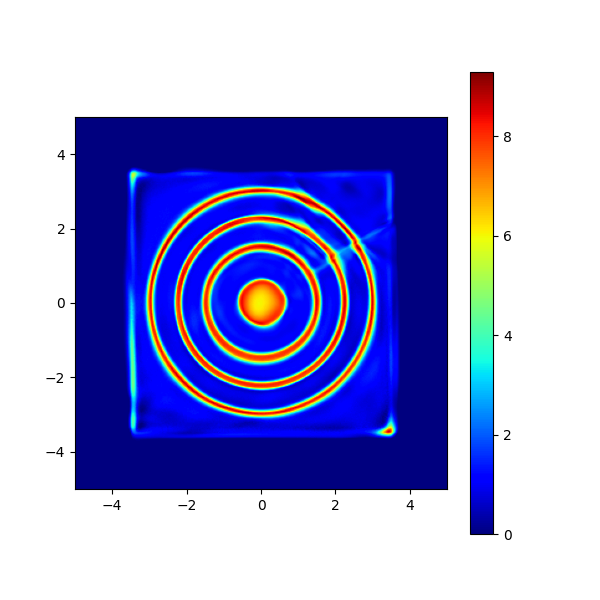}
        \end{subfigure}
        \hfill
        \begin{subfigure}{0.24\linewidth}
            \centering
            \includegraphics[width=\linewidth,trim={3.5cm 4.5cm 5.5cm 4.5cm},clip]{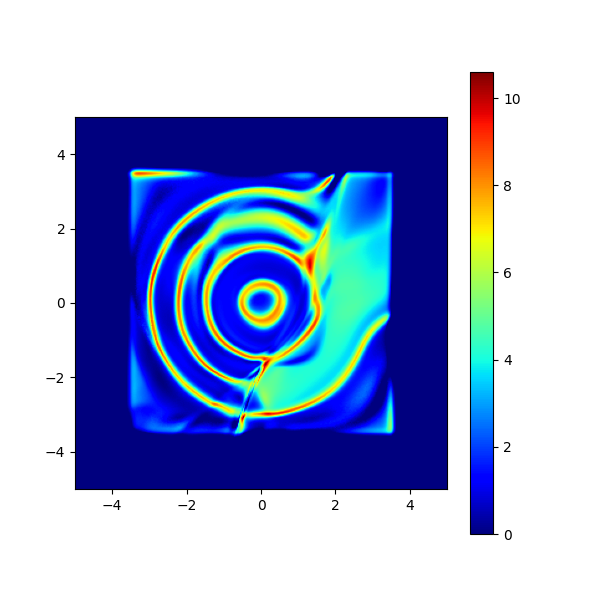}
        \end{subfigure}
        \caption{Generation of 4 circles with variations of MLPs for both $\foutstar$ and $\forwardalpha$ from 2x32 to 5x32 (TOP) and 6x32 to 9x32 (BOT). }
    \end{subfigure}

      \begin{subfigure}{\linewidth}
        \centering
        \begin{subfigure}{0.24\linewidth}
            \centering
            \includegraphics[width=\linewidth,trim={3.5cm 4.5cm 5.5cm 4.5cm},clip]{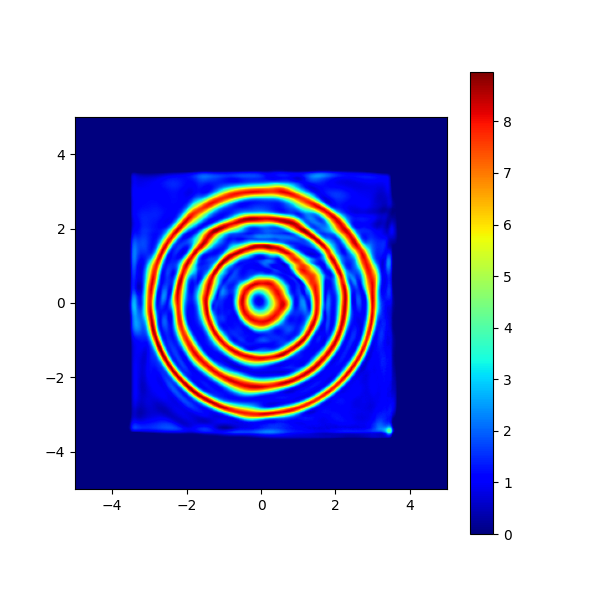}
        \end{subfigure}
        \hfill
       \begin{subfigure}{0.24\linewidth}
            \centering
            \includegraphics[width=\linewidth,trim={3.5cm 4.5cm 5.5cm 4.5cm},clip]{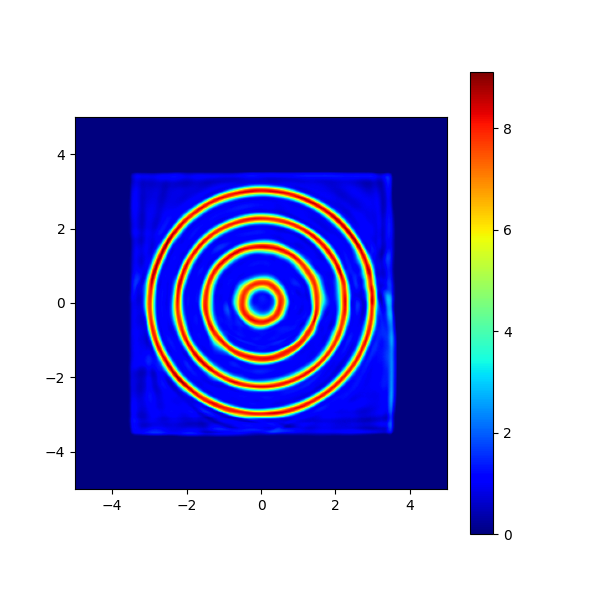}
        \end{subfigure}
        \hfill
        \begin{subfigure}{0.24\linewidth}
            \centering
            \includegraphics[width=\linewidth,trim={3.5cm 4.5cm 5.5cm 4.5cm},clip]{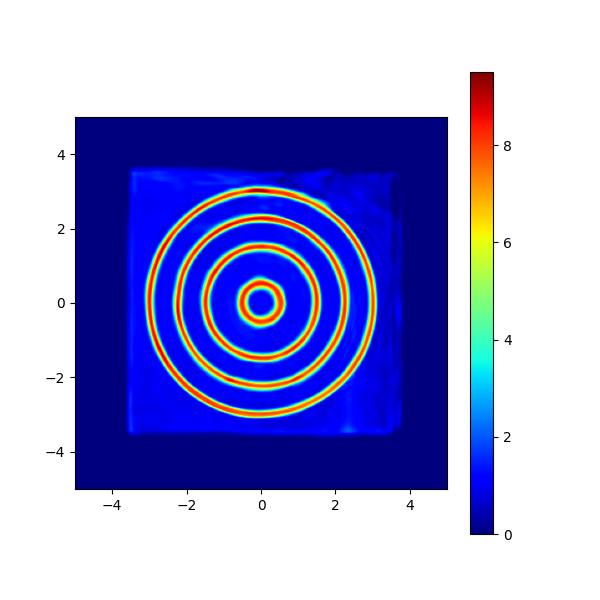}
        \end{subfigure}
        \hfill
        \begin{subfigure}{0.24\linewidth}
            \centering
            \includegraphics[width=\linewidth,trim={3.5cm 4.5cm 5.5cm 4.5cm},clip]{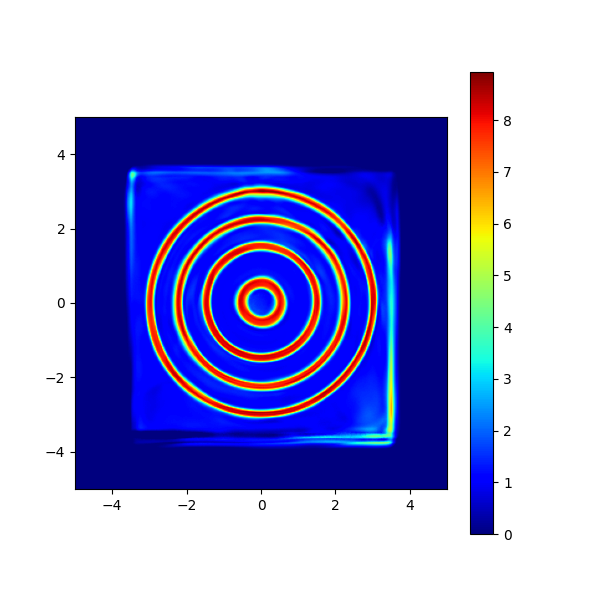}
        \end{subfigure}
        \hfill
        \begin{subfigure}{0.24\linewidth}
            \centering
            \includegraphics[width=\linewidth,trim={3.5cm 4.5cm 5.5cm 4.5cm},clip]{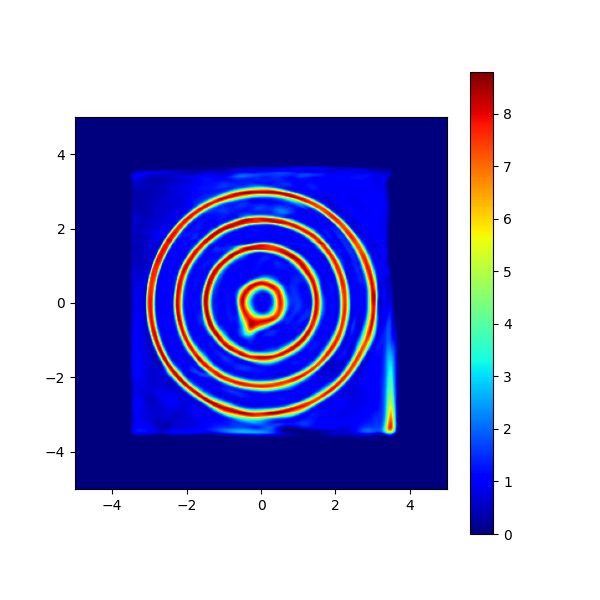}
        \end{subfigure}
        \hfill
        \begin{subfigure}{0.24\linewidth}
            \centering
            \includegraphics[width=\linewidth,trim={3.5cm 4.5cm 5.5cm 4.5cm},clip]{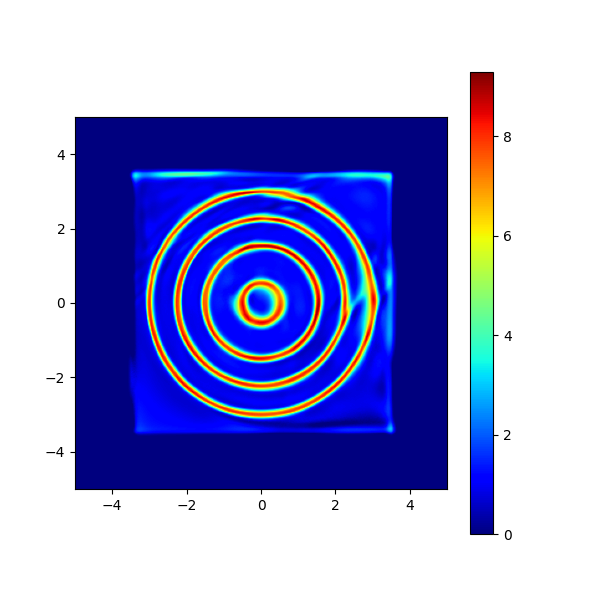}
        \end{subfigure}
        \hfill
        \begin{subfigure}{0.24\linewidth}
            \centering
            \includegraphics[width=\linewidth,trim={3.5cm 4.5cm 5.5cm 4.5cm},clip]{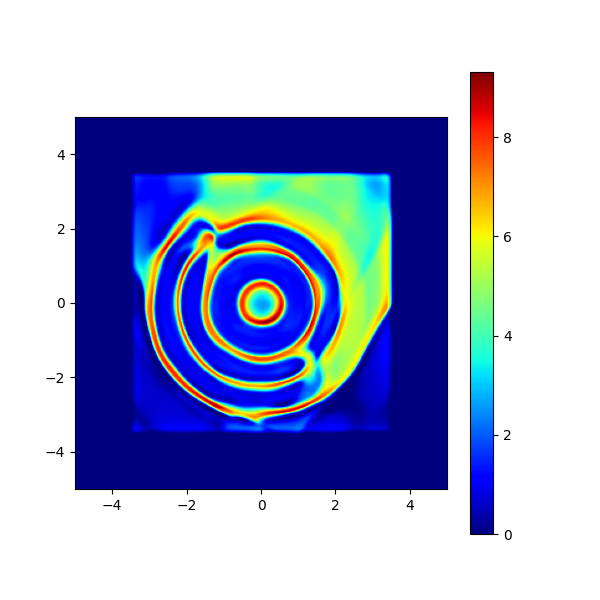}
        \end{subfigure}
        \hfill
        \begin{subfigure}{0.24\linewidth}
            \centering
            \includegraphics[width=\linewidth,trim={3.5cm 4.5cm 5.5cm 4.5cm},clip]{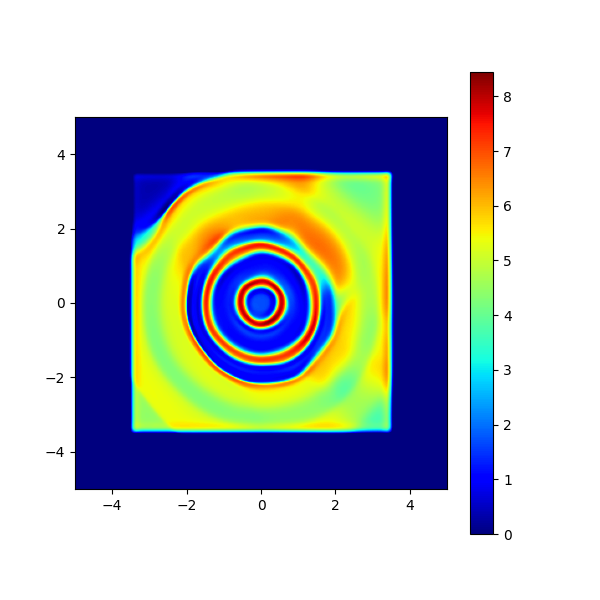}
        \end{subfigure}
        \hfill
       \caption{Generation of 4 circles with variations of MLPs for both $\foutstar$ and $\forwardalpha$ from 2x64 to 5x64 (TOP) and 6x64 to 9x64 (BOT). }
    \end{subfigure}

    \caption{Variation of $\foutstar$ and $\forwardalpha$ depths and width from 2x32 to 9x64. Increasing depth and width increases quality until depth 4 then the training seemingly becomes lest stable as shown by the 9x64 experiment. Since we keep the learning rate constant at 1e-3, it is likely that the MLPs training becomes unstable.
    The state space is $\Statespace = \mathbb R^2$,  to ensure $\Fout(\Statespace)<+\infty$, we constrain $\foutstar$ to be zero outside $[-4,4]^2$. 
The EGF has 8 transformations which are translations, while the MLPs for $\forwardalpha$ and $\foutstar$ vary from 2x32 (2 hidden layer of width 32) to 9x64. Learning rate is kept at 0.001.
    }
    \label{fig:toyimg}
\end{figure}

\begin{figure}
    \centering
    \captionsetup[subfigure]{labelformat=simple}
    \begin{subfigure}{\linewidth}
        \centering
        \begin{subfigure}{0.24\linewidth}
            \centering
            \includegraphics[width=\linewidth,trim={3.5cm 4.5cm 5.5cm 4.5cm},clip]{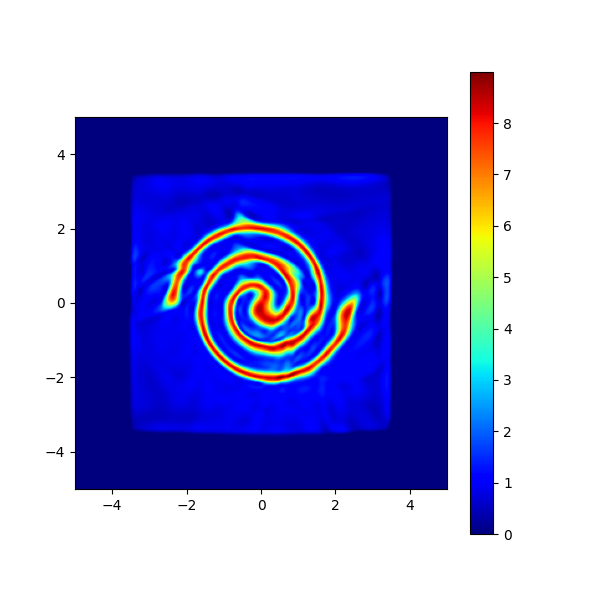}
        \end{subfigure}
        \hfill
        \begin{subfigure}{0.24\linewidth}
            \centering
            \includegraphics[width=\linewidth,trim={3.5cm 4.5cm 5.5cm 4.5cm},clip]{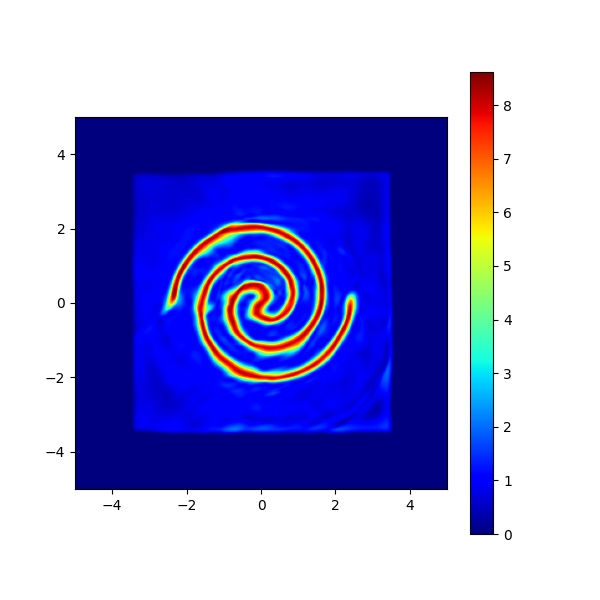}
        \end{subfigure}
        \hfill
        \begin{subfigure}{0.24\linewidth}
            \centering
            \includegraphics[width=\linewidth,trim={3.5cm 4.5cm 5.5cm 4.5cm},clip]{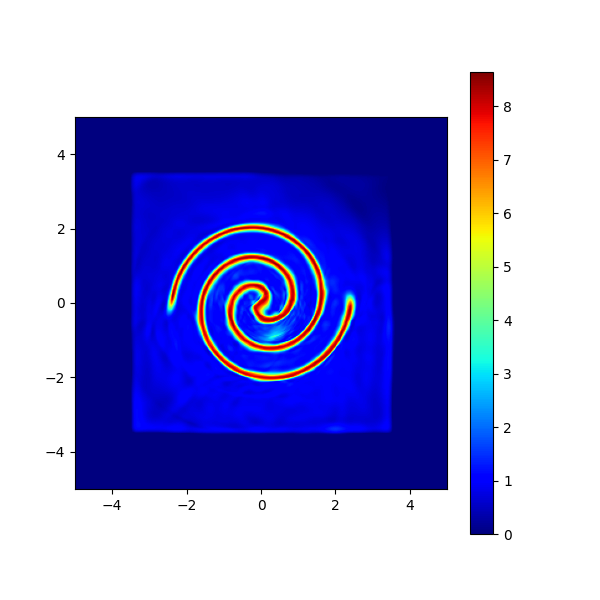}
        \end{subfigure}
        \hfill
        \begin{subfigure}{0.24\linewidth}
            \centering
            \includegraphics[width=\linewidth,trim={3.5cm 4.5cm 5.5cm 4.5cm},clip]{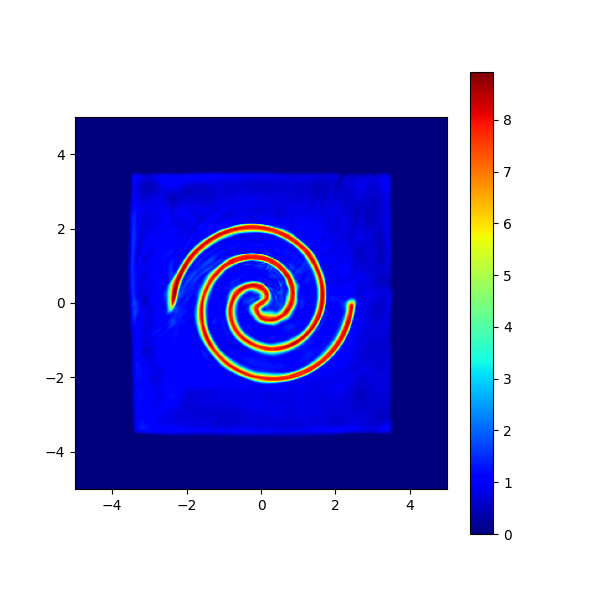}
        \end{subfigure}
        \hfill
        \begin{subfigure}{0.24\linewidth}
            \centering
            \includegraphics[width=\linewidth,trim={3.5cm 4.5cm 5.5cm 4.5cm},clip]{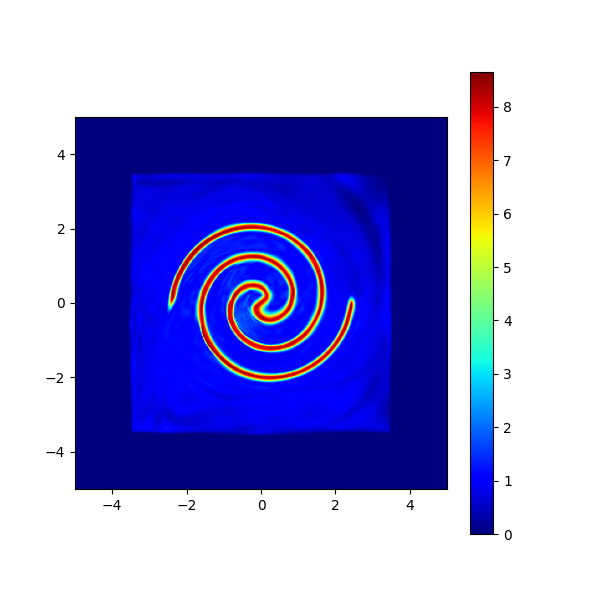}
        \end{subfigure}
        \hfill
        \begin{subfigure}{0.24\linewidth}
            \centering
            \includegraphics[width=\linewidth,trim={3.5cm 4.5cm 5.5cm 4.5cm},clip]{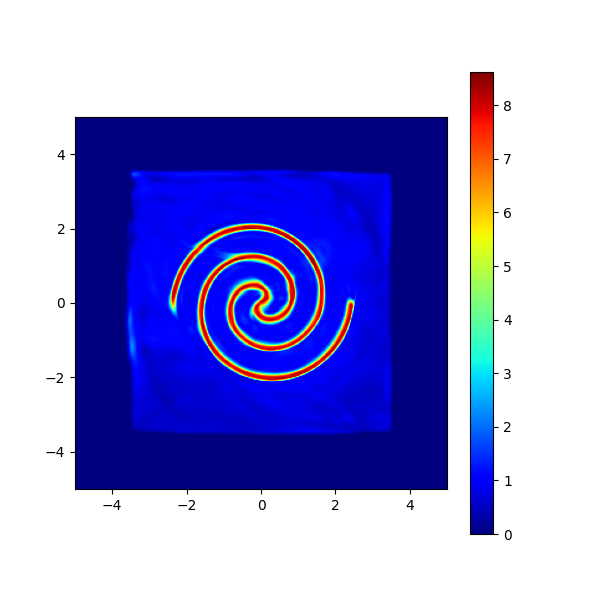}
        \end{subfigure}
        \hfill
        \begin{subfigure}{0.24\linewidth}
            \centering
            \includegraphics[width=\linewidth,trim={3.5cm 4.5cm 5.5cm 4.5cm},clip]{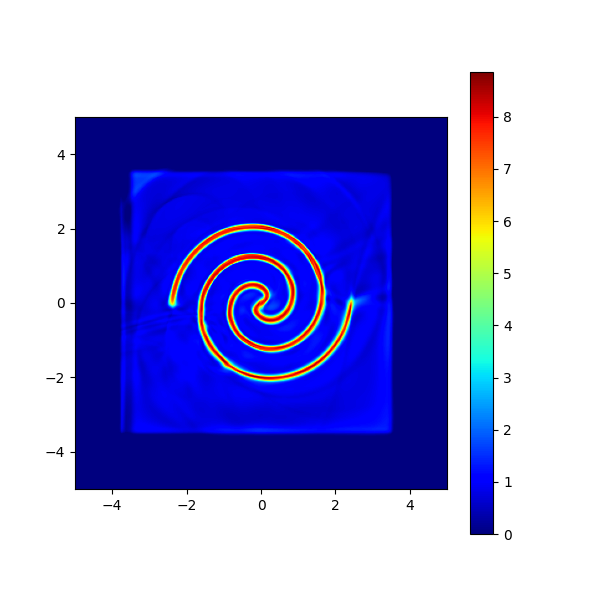}
        \end{subfigure}
        \hfill
        \begin{subfigure}{0.24\linewidth}
            \centering
            \includegraphics[width=\linewidth,trim={3.5cm 4.5cm 5.5cm 4.5cm},clip]{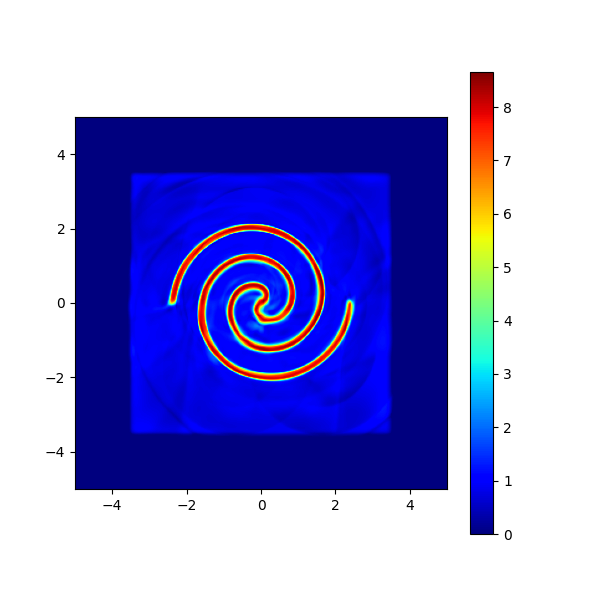}
        \end{subfigure}
        \caption{Generation of 2spirals with variations of MLPs for both $\foutstar$ and $\forwardalpha$ from 2x32 to 5x32 (TOP) and 6x32 to 9x32 (BOT). }
    \end{subfigure}

      \begin{subfigure}{\linewidth}
        \centering
        \begin{subfigure}{0.24\linewidth}
            \centering
            \includegraphics[width=\linewidth,trim={3.5cm 4.5cm 5.5cm 4.5cm},clip]{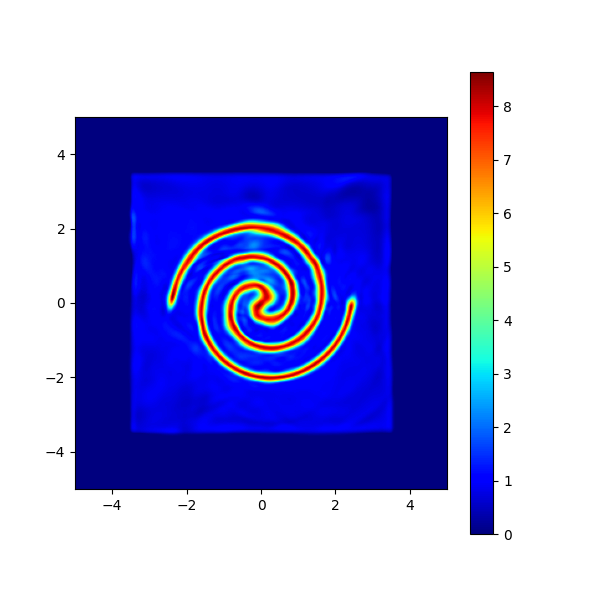}
        \end{subfigure}
        \hfill
       \begin{subfigure}{0.24\linewidth}
            \centering
            \includegraphics[width=\linewidth,trim={3.5cm 4.5cm 5.5cm 4.5cm},clip]{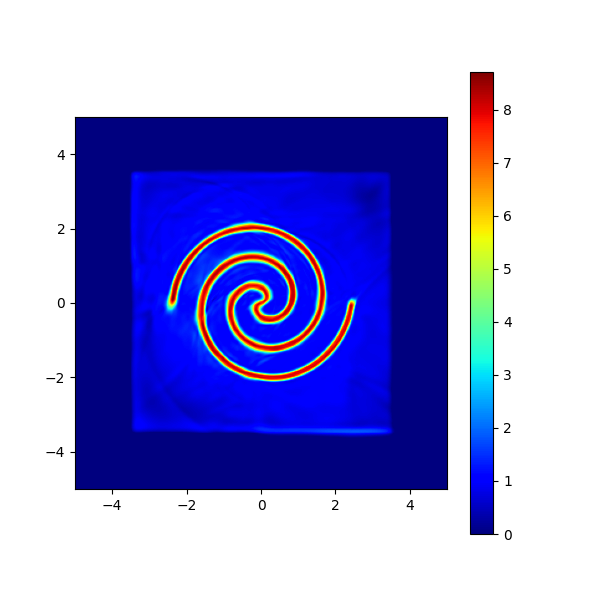}
        \end{subfigure}
        \hfill
        \begin{subfigure}{0.24\linewidth}
            \centering
            \includegraphics[width=\linewidth,trim={3.5cm 4.5cm 5.5cm 4.5cm},clip]{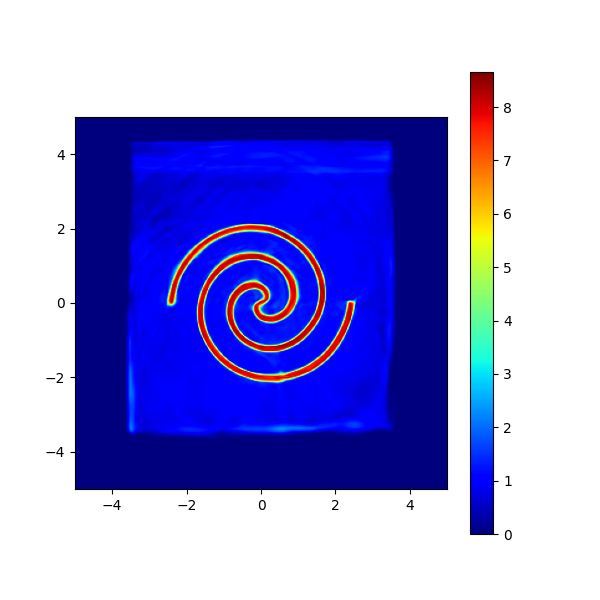}
        \end{subfigure}
        \hfill
        \begin{subfigure}{0.24\linewidth}
            \centering
            \includegraphics[width=\linewidth,trim={3.5cm 4.5cm 5.5cm 4.5cm},clip]{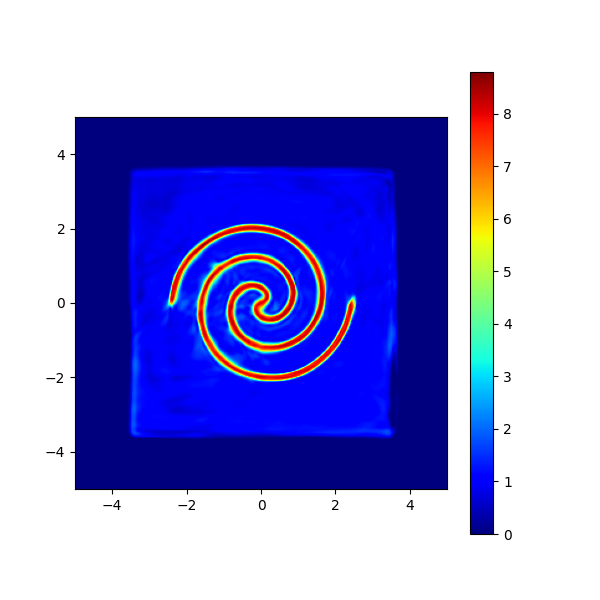}
        \end{subfigure}
        \hfill
        \begin{subfigure}{0.24\linewidth}
            \centering
            \includegraphics[width=\linewidth,trim={3.5cm 4.5cm 5.5cm 4.5cm},clip]{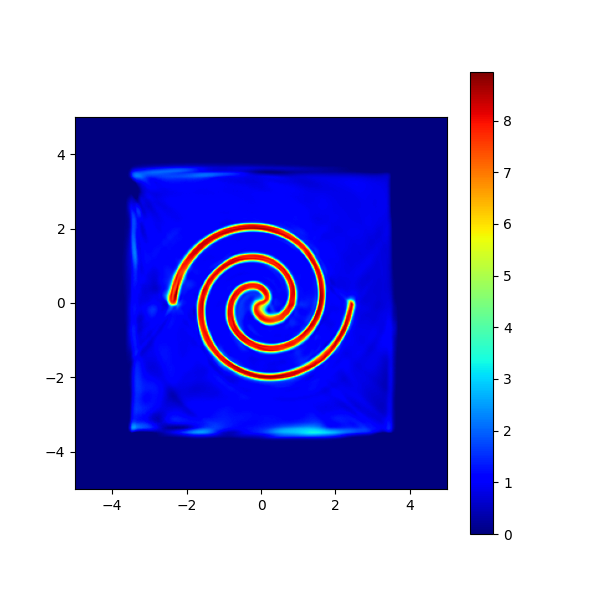}
        \end{subfigure}
        \hfill
        \begin{subfigure}{0.24\linewidth}
            \centering
            \includegraphics[width=\linewidth,trim={3.5cm 4.5cm 5.5cm 4.5cm},clip]{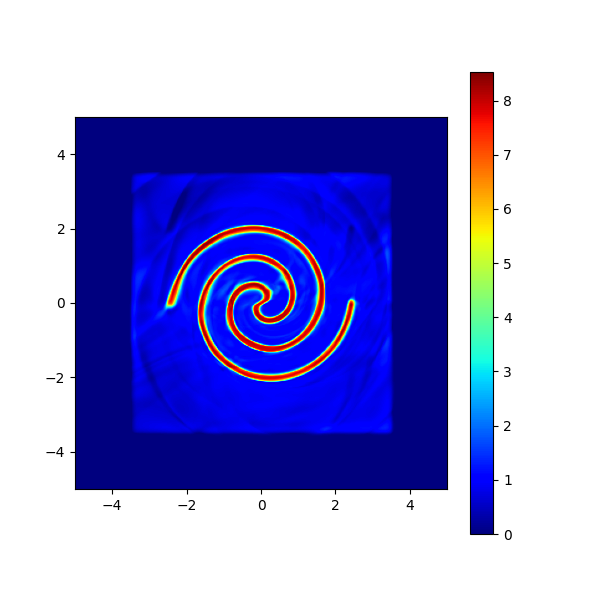}
        \end{subfigure}
        \hfill
        \begin{subfigure}{0.24\linewidth}
            \centering
            \includegraphics[width=\linewidth,trim={3.5cm 4.5cm 5.5cm 4.5cm},clip]{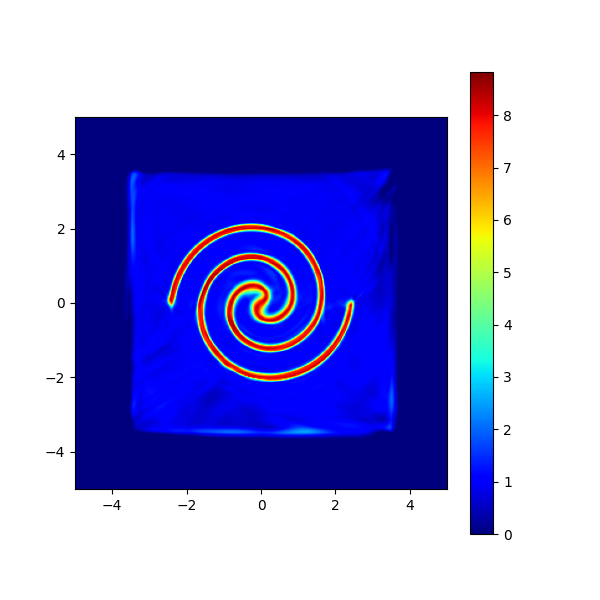}
        \end{subfigure}
        \hfill
        \begin{subfigure}{0.24\linewidth}
            \centering
            \includegraphics[width=\linewidth,trim={3.5cm 4.5cm 5.5cm 4.5cm},clip]{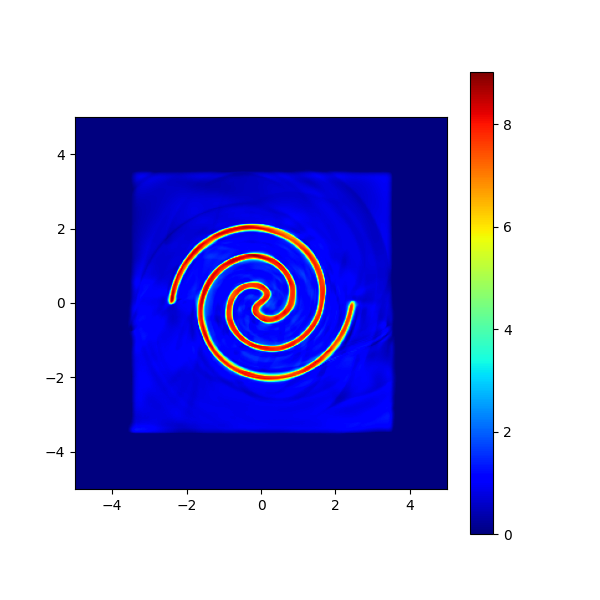}
        \end{subfigure}
        \hfill
       \caption{Generation of 2 spirals with variations of MLPs for both $\foutstar$ and $\forwardalpha$ from 2x64 to 5x64 (TOP) and 6x64 to 9x64 (BOT). }
    \end{subfigure}

    \caption{Variation of $\foutstar$ and $\forwardalpha$ depths and width from 2x32 to 9x64. Increasing depth and width increases quality until depth 4 then the training seemingly becomes lest stable as shown by the 9x64 experiment. Since we keep the learning rate constant at 1e-3, it is likely that the MLPs training becomes unstable.
    The state space is $\Statespace = \mathbb R^2$,  to ensure $\Fout(\Statespace)<+\infty$, we constrain $\foutstar$ to be zero outside $[-4,4]^2$. 
The EGF has 8 transformations which are translations, while the MLPs for $\forwardalpha$ and $\foutstar$ vary from 2x32 (2 hidden layer of width 32) to 9x64. Learning rate is kept at 0.001.
    }
    \label{fig:toyimg}
\end{figure}

\end{document}